\documentclass[preprint,10pt]{elsarticle}

\makeatletter
\def\ps@pprintTitle{%
 \let\@oddhead\@empty
 \let\@evenhead\@empty
 \def\@oddfoot{\centerline{\thepage}}%
 \let\@evenfoot\@oddfoot}
\makeatother

\journal{Physica D}

\usepackage{setspace}
\usepackage{amssymb} 
\usepackage{amsmath}
\usepackage{mathrsfs}
\usepackage{stmaryrd}
\usepackage{amsthm}
\usepackage{mathtools}
\usepackage{amsfonts}
\usepackage{graphicx}
\usepackage{comment}

\usepackage[colorinlistoftodos]{todonotes}
\usepackage{tikz-cd}   
\usepackage[utf8]{inputenc} 
\usepackage[T1]{fontenc}    
\usepackage{hyperref}       
\usepackage{url}            
\usepackage{booktabs}       
\usepackage{amsfonts}       
\usepackage{nicefrac}       
\usepackage{microtype}      
\usepackage[utf8]{inputenc} 
\usepackage[T1]{fontenc}    
\usepackage{hyperref}       
\usepackage{url}            
\usepackage{booktabs}       
\usepackage{amsfonts}       
\usepackage{nicefrac}       
\usepackage{microtype}      
\usepackage[T1]{fontenc}    
\usepackage{hyperref}       
\usepackage{url}            
\usepackage{booktabs}       
\usepackage{amsfonts}       
\usepackage{nicefrac}       
\usepackage{microtype}      
\usepackage{lingmacros}
\usepackage{tree-dvips} 
\usepackage{amsmath}  
\usepackage{nccmath}
\usepackage{mathtools}
\usepackage{bbm}
\usepackage{amssymb}
\usepackage{graphicx}  
\DeclareMathOperator*{\argmax}{argmax}
\usepackage[utf8]{inputenc} 
\usepackage[T1]{fontenc}    
\usepackage{hyperref}       
\usepackage{url}            
\usepackage{booktabs}       
\usepackage{amsfonts}       
\usepackage{nicefrac}       
\usepackage{microtype}      
\usepackage{amsmath, amssymb, mathtools, bm, etoolbox}
\usepackage[noend]{algpseudocode}
\usepackage{algorithm}
\usepackage {mathrsfs}
\usepackage{caption}
\usepackage{graphicx}
\usepackage{subcaption}
\usepackage{color}
\usepackage{enumitem}

\newtheorem{theorem}{Theorem}[section]

\newtheorem{thm-defn}[theorem]{Theorem/Definition}

\newtheorem{prop}[theorem]{Proposition}

\theoremstyle{definition}
\newtheorem{theorem1}{Theorem}[section]
\newtheorem{definition}[theorem1]{Definition}

\theoremstyle{remark}
\newtheorem{remark}[theorem]{Remark}

\newcommand{\rvline}{\hspace*{-\arraycolsep}\vline\hspace*{-\arraycolsep}}

\newcommand*{\vertbar}{\rule[-1ex]{0.5pt}{2.5ex}}
\newcommand*{\horzbar}{\rule[.5ex]{2.5ex}{0.5pt}}
\newcommand{\ignore}[1]{}{}


\usepackage{graphicx}
\usepackage{amssymb}

\usepackage{lineno}





\begin{document}

\begin{frontmatter}


\title{Novel semi-metrics for multivariate change point analysis and anomaly detection}



\author[label1]{Nick James} 
\author[label2]{Max Menzies} \author[label1]{Lamiae Azizi} \author[label1]{Jennifer Chan}

\address[label1]{School of Mathematics and Statistics, University of Sydney, NSW, Australia}
\address[label2]{Yau Mathematical Sciences Center, Tsinghua University, Beijing, China}

\begin{abstract}

This paper proposes a new method for determining similarity and anomalies between time series, most practically effective in large collections of (likely related) time series, by measuring distances between structural breaks within such a collection. We introduce a class of \emph{semi-metric} distance measures, which we term \emph{MJ distances}.
These semi-metrics provide an advantage over existing options such as the Hausdorff and Wasserstein metrics. We prove they have desirable properties, including better sensitivity to outliers, while experiments on simulated data demonstrate that they uncover similarity within collections of time series more effectively. Semi-metrics carry a potential disadvantage: without the triangle inequality, they may not satisfy a ``transitivity property of closeness.'' We analyse this failure with proof and introduce an computational method to investigate, in which we demonstrate that our semi-metrics violate transitivity infrequently and mildly. Finally, we apply our methods to cryptocurrency and measles data, introducing a judicious application of eigenvalue analysis.

\end{abstract}

\begin{keyword}
semi-metrics \sep change-point detection \sep multivariate analysis \sep time series \sep anomaly detection 


\end{keyword}

\end{frontmatter}

\linenumbers

\begin{nolinenumbers}
\section{Introduction}
\label{S:1}

Similarity and anomaly detection are widely researched problems in statistics and the natural sciences. Change point detection is an important task in time series analysis, and more broadly, within anomaly detection. Developed by Hawkins et al. \cite{Hawkins1977,Hawkins2005}, this task requires one to estimate the location of changes in statistical properties, points in time at which the estimated probability density functions change. In the more statistical literature, focussed on time series data, statisticians such as Ross \cite{Ross2011T,Ross2012N} have developed change point models driven by hypothesis tests, where $p$-values govern statistical decision making. The change point models applied in this paper follow Ross \cite{RossCPM}. 

Various change point algorithms test for shifts in different underlying distributional properties and generally make strong assumptions regarding the statistical properties of random variables. In this paper, we make use of the \emph{Mann-Whitney} test, described in \cite{Pettitt1979}, which detects changes in the mean, and the Kolmogorov-Smirnov test, described in \cite{Ross2012}, which detects more general distributional changes. These algorithms make the strong assumption of independence; for applicability of the change point algorithms on data with dependence, see transformations due to Gustafsson \cite{gustafsson2001}.

Metric spaces $(X,d)$ appear throughout mathematics. One particular field of study that has arisen in image detection and other applications is the study of metrics on the power set $PX$ of (certain) subsets of $X$. The most utilised metric in this context is the \emph{Hausdorff metric}, which we introduce and summarise in Section\ref{review of existing section}. This provides a metric between closed and bounded subsets of any ambient metric space $(X,d)$. In addition to the Hausdorff metric, there are several \emph{semi-metrics}, satisfying some but not all of the properties of a metric, which are still useful. These properties will also be summarised in Section\ref{review of existing section}.

Conci and Kubrusly \cite{Conci2017} give an overview of certain such (semi-)metrics on the space of subsets and some applications. This review breaks down the applications of such distances between subsets into three primary areas, computational aspects \cite{Eiter1997, Atallah1983, Atallah1991, Barton2010, Shonkwiler1989, Huttenlocher1990, Huttenlocher1992, Aspert2002}, distances between fuzzy sets \cite{Brass2002, fujita2012, Gardner2014, Rosenfeld1985, chaudhuri1996, Chaudhuri1999, Boxer1997, Fan1998} and distances in image analysis \cite{fujita2012, Gardner2014, Dubuisson1994N, Rote1991, Li2008, Huttenlocher1990, Huttenlocher1992, Rucklidge1995, Rucklidge1996, Rucklidge1997}. The Hausdorff's sensitivity to outliers has been noted by \cite{Baddeley1992}, and has proven itself largely unsuitable for algorithmic problems pertaining to image analysis. In Section\ref{results section}, we present similar findings when using the Hausdorff distance between finite sets of change points.

There has been extensive work in determining similarity between time series. Moeckel and Murray \cite{Moeckel1997} survey the shortcomings of possible distance functions between time series, stating the Hausdorff metric's limitation in ignoring the frequency with which one set visits parts of the comparable set. That is, the metric only focuses on one measurement between two candidate sets, and is sensitive to outliers. Instead, they propose a distance function developed by Kantorovich \cite{Kantorovich2006} that is based on geometric and probabilistic factors; this proves to be robust with respect to outliers, noise and discretisation errors. Moeckel and Murray also demonstrate that the transportation distance proposed by Kantorovich can be used to evaluate mathematical models for chaotic or stochastic systems, and for parameter estimation within a dynamical systems context. 

To our knowledge, there has been no work in detecting the similarity between time series' change points. Change points signify changes in the statistical properties of a time series' distribution, so determining which time series are most similar with respect to the number and location of these changes is of interest to analysts, in a wide variety of disciplines, who wish to assess the underlying structure in larger collections of time series. Equally of interest are time series whose change points are dissimilar to the rest of the collection and exhibit anomalous behaviour with respect to their structural breaks. 

The contributions of this paper are as follows. First, we introduce a new family of semi-metrics and provide analysis of their properties, and new analysis of existing semi-metrics. Next, we apply such semi-metrics and metrics to measure distance between time series based on their structural breaks, forming a new distance matrix between time series. We introduce a computational method for analysing the transitivity properties of a semi-metric, and perform it in this setting. Finally, we introduce a simple but pithy method of eigenvalue analysis in order to determine the size of a majority cluster in our setting. In circumstances when one expects, \emph{a priori}, a large majority of time series to behave very similarity, with some anomalies, our eigenvalue analysis quickly approximates the size of a majority cluster and provides an understanding of the total scale of the distance matrix.

The rest of the paper is organised as follows. Section \ref{review of existing section} provides a review of existing (semi)-metrics. In Section\ref{new semi metrics section}, we propose a new family of semi-metrics, analyse their desirable properties, and prove propositions on both the new and existing semi-metrics. Section \ref{Section Distance matrix analysis} describes our computational methodology. Section \ref{results section} conducts simulations in three scenarios, analysing the robustness of the metrics and semi-metrics in the presence of outliers. Section \ref{real results} applies our analysis to the cryptocurrency market and 19th century UK measles data. We use our eigenvalue analysis, as well as hierarchical and spectral clustering. Section \ref{conclusion} concludes the paper. In \ref{Appendix_CPD} and \ref{proof of props section} respectively, we provide a description of the change point algorithm used, and include all remaining proofs. 

\section{Review and analysis of existing (semi)-metrics}
\label{review of existing section}
In this section, we review some properties of a metric space and the existing Hausdorff, modified Hausdorff and Wasserstein (semi-)metrics. Most significantly, we describe exactly how the Wasserstein is used between finite sets, which does not appear clearly in the literature.

A metric space is a pair $(X,d)$ where $X$ is a set and $d: X \times X \to \mathbb{R}$ satisfies the following axioms for all $x,y,z \in X$:
\begin{enumerate}
    \item $d(x,y) \geq 0$, with equality if and only if $x=y$.
    \item $d(x,y) = d(y,x)$. 
    \item $d(x,z) \leq d(x,y)+d(y,z)$
\end{enumerate}

A \emph{semi-metric} satisfies 1. and 2., but not necessarily 3., which is known as the \emph{triangle inequality}.

Given a subset $S \subset X$ and a point $x \in X$, the distance from a point to a set is defined as the minimal distance from $x$ to $S$, given by: 
\begin{equation}
d(x,S) = \inf_{s \in S} d(x,s). \label{min distance defn}
\end{equation}
$d(x,S) \geq 0$, with equality if and only if $x$ lies in the closure of $S$. Also, $d(-,S); X \to \mathbb{R}$ is continuous.
Now let $S, T \subset X$. A common notion of distance between these subsets is defined as the minimal distance between these subsets, given by:
\begin{equation}
    d_{\text{min}}(S,T) = \inf_{s \in S} d(s,T) =  \inf_{s \in S} \inf_{t \in T} d(s,t) = \inf_{s \in S, t \in T} d(s,t). \label{min min distance}
\end{equation}    
Note $d_{\text{min}}(S,T) = 0$ if $S,T$ intersect. In fact, $d_{\text{min}}(S,T)=0$ if and only if their closures intersect. So this is not an effective metric between subsets.

\begin{definition}[Hausdorff distance]
The Hausdorff distance considers how separated $S$ and $T$ are at most, rather than at least. It is defined by:
\begin{eqnarray*}
    d_{H}(S,T) & = & \text{max } \bigg( \sup_{s \in S} d(s,T), \sup_{t \in T} d(t,S) \bigg), \\
    & = & \sup \{ d(s,T), s \in S; d(t,S), t \in T \}. 
\end{eqnarray*}
\end{definition}
This is the supremum or $L^{\infty}$ norm of all minimal distances from points $s \in S$ to $T$ and points $t\in T$  to $S$. The Hausdorff distance satisfies the triangle inequality, but this supremum is highly sensitive to even a single outlier. We propose using the $L^p$ norms instead. Henceforth, $S$ and $T$ will be finite sets. Eventually, $S,T$ will be sets of structural breaks of time series. We present three modified Hausdorff distances below.

\begin{definition}[Modified Hausdorff distance 1]
The first modified Hausdorff distance MH$_1$ is defined by
\begin{align*}
d^{\text{MH}}_1(S,T) = \max \bigg( \frac{1}{|S|} \sum_{s \in S} d(s,T), \frac{1}{|T|} \sum_{t \in T} d(t,S) \bigg).
\end{align*}
\end{definition}
\noindent It is presented in \cite{Deza2013} and \cite{Dubuisson1994N}. As is the case with most modified Hausdorff metrics, the primary application to date has been in computer vision tasks, where semi-metrics and metrics focused on geometric averaging provide a more robust distance measure in comparison to the Hausdorff distance. 

\begin{definition}[Modified Hausdorff distance 2]
The second modified Hausdorff distance MH$_2$ is defined by
\begin{align*}
d^{\text{MH}}_2(S,T) = \sum_{s \in S} d(s,T) + \sum_{t \in T} d(t,S).
\end{align*}
\end{definition}
\noindent Eiter \cite{Eiter1997} and Dubuisson \cite{Dubuisson1994N} present this distance measure, which captures the total distance between one set and another. Removing the $\sup$ operator yields what is essentially a measure of total deviation between all points of two sets. 

\begin{definition}[Modified Hausdorff distance 3]
The third modified Hausdorff distance MH$_3$ is defined by
\begin{align*}
d^{\text{MH}}_3(S,T) = \frac{1}{|S|+|T|} \bigg(\sum_{s \in S} d(s,T) + \sum_{t \in T} d(t,S) \bigg).
\end{align*}
\end{definition}
\noindent Deza \cite{Deza2013} and Dubuisson \cite{Dubuisson1994N} propose this as variant of d$^{1}_{MH}$ with a different averaging component. This measure is referred to as geometric mean error between two images.

\begin{definition}[Wasserstein distance]
The Wasserstein metric, \cite{DelBarrio} is commonly used as a distance between two probability measures. Intuitively, it gives the work (in the sense of physics) required to mould one probability measure into another. Given probability measures $\mu,\nu$ on a metric space $(X,d)$, define
\begin{equation*}
    W_{p} (\mu,\nu) = \inf_{\gamma} \bigg( \int_{X \times X} d^{p} (x,y) d\gamma  \bigg)^{\frac{1}{p}}.
\end{equation*}
This infimum is taken over all joint probability measures $\gamma$ on $X\times X$ with marginal probability measures $\mu$ and $\nu$. Now let $S,T$ be finite sets. Associate to each set a probability measure defined as a weighted sum of Dirac delta measures
\begin{align}
\label{Wasserstein delta}
    \mu_S=\frac{1}{|S|}\sum_{s \in S}\delta_s
\end{align}
The Wasserstein distance is defined as $d_W^p(S,T):=W_p(\mu_S,\mu_T)$. In subsequent experiments, when using the Wasserstein metric, we set $p=1$.
\end{definition}

\section{Proposed (semi)-metrics}
\label{new semi metrics section}

In this section, we introduce a new family of semi-metrics, and analyse their properties and advantages over existing options. First, we motivate and introduce the MJ$_1$ semi-metric, then generalise this to the family of MJ$_p$ semi-metrics, which, when properly extended to infinity, includes the Hausdorff metric.

In \cite{Dubuisson1994N}, Jain and Dubuisson assert that their distance MH$_1$ is the best for image matching. To reach this conclusion, they take two steps. First, (page 567) they compare three favourable operators $f_2,f_3,f_4$, each operating on minimal distances $d(s,T),d(t,S)$ as defined in Equation (\ref{min distance defn}). They briefly argue that $f_2$, equivalent to taking the max in the MH$_1$, is preferable to other operators, citing a ``larger spread." Secondly, (page 568) they argue that a process of averaging distances is superior to taking $K$th ranked distances, such as the median. We differ with and modify these steps of reasoning. For the first, we replace the max in their MH$_1$ with the $L^1$ norm average of all the minimum distances from $S$ to $T$ and $T$ to $S$:

\begin{equation*}
    d^1_{MJ}({S},{T}) = \frac{1}{2} \Bigg(\frac{\sum_{t\in T} d(t,S)}{|T|} + \frac{\sum_{{s} \in {S}} d(s,T)}{|S|} \Bigg).
\end{equation*}
Then. we show desirable properties of MJ$_1$ over MH$_1$ and MH$_3$ in the following two propositions.

\begin{prop}[Comparison between MJ$_1$ and MH$_1$]\ \\
\label{deformation prop}
MJ$_1$ and MH$_1$ are equivalent as semi-metrics. However, MJ$_1$ is more precise, in the sense that there exists a class of deformations of $S$, such that $d^1_{MJ}$ will vary continuously with $S$ while $d^{\text{MH}}_1$ will not vary at all. 
\end{prop}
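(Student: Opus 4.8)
The plan is to reduce both quantities to elementary functions of the two one-sided averaged distances, and then argue the two assertions separately. Write
$A = \frac{1}{|S|}\sum_{s\in S} d(s,T)$ and $B = \frac{1}{|T|}\sum_{t\in T} d(t,S)$, so that by definition $d^1_{MJ}(S,T) = \tfrac{1}{2}(A+B)$ and $d^{\text{MH}}_1(S,T) = \max(A,B)$. Since $A,B \geq 0$, the elementary inequalities $\tfrac{1}{2}(A+B) \leq \max(A,B) \leq A+B$ give immediately
\begin{equation*}
d^1_{MJ}(S,T) \;\leq\; d^{\text{MH}}_1(S,T) \;\leq\; 2\, d^1_{MJ}(S,T),
\end{equation*}
which is exactly the bi-Lipschitz sandwich establishing that the two semi-metrics are equivalent, inducing the same topology and the same notion of convergence. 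Along the way I would note that $d^1_{MJ}$ is genuinely a semi-metric: symmetry holds because swapping $S,T$ interchanges $A$ and $B$, and $d^1_{MJ}(S,T)=0$ forces $A=B=0$, i.e.\ $d(s,T)=0$ for all $s\in S$ and $d(t,S)=0$ for all $t\in T$; for finite sets this means $S\subseteq T$ and $T\subseteq S$, hence $S=T$.

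For the second assertion I would exhibit an explicit one-parameter family of deformations along which the $\max$ in MH$_1$ is pinned to the term that does not move, while the averaged sum defining MJ$_1$ genuinely changes. Working on the real line, fix the target $T=\{0\}$ and a scale $c>0$, and for $\lambda\in(0,c)$ set
\begin{equation*}
S_\lambda = \{\lambda,\; 2c-\lambda\}.
\end{equation*}
A direct computation gives $A = \tfrac{1}{2}\bigl(\lambda + (2c-\lambda)\bigr) = c$, independent of $\lambda$, while $B = d(0,S_\lambda) = \min(\lambda,\, 2c-\lambda) = \lambda$. Since $\lambda < c$ throughout, we have $B<A$, so $d^{\text{MH}}_1(S_\lambda,T) = \max(c,\lambda) = c$ is constant in $\lambda$, whereas $d^1_{MJ}(S_\lambda,T) = \tfrac{1}{2}(c+\lambda)$ varies continuously and strictly with $\lambda$. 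This is precisely the claimed behaviour, and any deformation of $S$ that preserves the average distance to $T$ while keeping the dominant one-sided term frozen and strictly larger furnishes the desired class.

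The first assertion is entirely routine. The only point requiring care is the second: one must ensure that throughout the deformation the maximum in MH$_1$ is realised by the \emph{stationary} term, so that MH$_1$ is exactly constant rather than merely continuous. In the family above this is guaranteed by the strict inequality $\lambda<c$, which holds $B$ strictly below the frozen value of $A$; checking that this strictness persists across the whole parameter interval is the main thing to verify, and here it is immediate.
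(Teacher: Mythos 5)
Your proof is correct, and while the equivalence half is identical to the paper's (the same sandwich $\tfrac{1}{2}(A+B)\leq\max(A,B)\leq A+B$), your deformation construction takes a genuinely different route. The paper works abstractly in an arbitrary metric space: it takes configurations where the $T$-side average $\frac{1}{|T|}\sum_{t\in T}d(t,S)$ is the strictly larger term and where a single point $s_0\in S$ is the closest point of $S$ to \emph{every} $t\in T$ (e.g.\ $S,T$ inside convex sets with $s_0$ extremal); then any sufficiently small perturbation of the points of $S\setminus\{s_0\}$ leaves the dominant term, hence MH$_1$, unchanged, while the subordinate $S$-side average varies continuously. You freeze the opposite term: in your family $S_\lambda=\{\lambda,\,2c-\lambda\}$, $T=\{0\}$, the dominant $S$-side average is held constant by an exactly balanced see-saw movement of the two points, and it is the subordinate $T$-side term $d(0,S_\lambda)=\lambda$ that varies. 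Both mechanisms exploit the same weakness --- the max discards the smaller term --- and both prove the proposition as stated. What each buys: yours is completely explicit and elementary, with MJ$_1$ varying strictly monotonically and nothing to verify about ``sufficiently small'' perturbations; the paper's makes a stronger case for the word ``precise,'' since it shows MH$_1$ is locally constant on an open, full-dimensional class of deformations (arbitrary small movements of all of $S\setminus\{s_0\}$), whereas your family is a one-parameter curve requiring exact fine-tuning of the average to keep the max pinned --- a generic perturbation off that curve would change MH$_1$.
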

\noindent The proof is given in \ref{A2}.

\begin{prop}[Comparison between MJ$_1$ and MH$_2$, MH$_3$]\ \\
\label{duplication prop}
\noindent The following property holds for MJ$_1$ but not MH$_2$ or MH$_3$: if all elements of $S$ are duplicated,  $d_{MJ}^1(S,T)$ does not change, while $d^{\text{MH}}_i(S,T)$ does change, $i=2,3.$
\end{prop}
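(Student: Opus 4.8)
The plan is to read ``duplication'' as replacing $S$ by the multiset $S'$ in which each element of $S$ is listed twice, so that the cardinality (counted with multiplicity) satisfies $|S'| = 2|S|$ while the underlying set of points is unchanged. Two elementary observations drive all three computations, and I would isolate them first. Since $S'$ and $S$ contain the same points, $d(t,S') = \inf_{s \in S'} d(t,s) = \inf_{s\in S} d(t,s) = d(t,S)$ for every $t \in T$, so directional distances \emph{into} the duplicated set are unchanged. On the other hand each summand $d(s,T)$ is now counted twice, giving $\sum_{s \in S'} d(s,T) = 2\sum_{s\in S} d(s,T)$.

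For the MJ$_1$ half of the statement I would substitute directly. The first term of $d^1_{MJ}(S',T)$ is unchanged by the first observation, and the second term becomes $\frac{1}{|S'|}\sum_{s\in S'} d(s,T) = \frac{2\sum_{s\in S}d(s,T)}{2|S|} = \frac{\sum_{s\in S}d(s,T)}{|S|}$, so the doubling of the sum is cancelled exactly by the doubling of the normalising denominator. Hence $d^1_{MJ}(S',T) = d^1_{MJ}(S,T)$. This is the conceptual heart: MJ$_1$ divides each directional sum by the cardinality of the index set it ranges over, so multiplicity is invisible to it.

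For MH$_2$ and MH$_3$ this cancellation is absent (MH$_2$ has no normaliser) or mismatched (MH$_3$ normalises by the combined cardinality $|S|+|T|$, which does not scale with the sum over $S$ alone), so these distances should fail to be invariant. Rather than argue in generality I would settle the ``does change'' claim with one explicit witness in $X = \mathbb{R}$: taking $S = \{0\}$ and $T = \{1,2\}$ yields $\sum_{s\in S} d(s,T) = 1$ and $\sum_{t\in T} d(t,S) = 3$, from which $d^{\text{MH}}_2(S,T) = 4$ but $d^{\text{MH}}_2(S',T) = 5$, and $d^{\text{MH}}_3(S,T) = \frac{4}{3}$ but $d^{\text{MH}}_3(S',T) = \frac{5}{4}$, while $d^1_{MJ}$ equals $\frac{5}{4}$ for both $S$ and $S'$, consistent with the previous paragraph.

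The one place I would be careful --- the main pitfall rather than a genuine obstacle --- is the choice of witness for MH$_3$. A symmetric example such as $S=\{0\},\, T=\{1\}$ accidentally leaves $d^{\text{MH}}_3$ fixed, because $d^{\text{MH}}_3(S',T) = d^{\text{MH}}_3(S,T)$ precisely when $\frac{\sum_{t\in T} d(t,S)}{|T|} = \frac{\sum_{s\in S} d(s,T)}{|S|}$, i.e.\ when the two directional averages already coincide. The witness must therefore be chosen with these averages unequal, which is why I take $T$ of larger cardinality with unequal directional distances; the same asymmetry guarantees the MH$_2$ change. If desired I would close by recording the general identity $d^{\text{MH}}_2(S',T) = d^{\text{MH}}_2(S,T) + \sum_{s\in S} d(s,T)$ together with the displayed equality condition for MH$_3$, which together show that the change is the generic situation rather than the exception.
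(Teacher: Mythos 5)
Your proof is correct, and its first half (the MJ$_1$ invariance) is exactly the paper's argument: the doubled sum $\sum_{s\in S'} d(s,T) = 2\sum_{s\in S} d(s,T)$ is cancelled by the doubled normaliser $|S'| = 2|S|$, while the term over $T$ is untouched. Where you genuinely diverge is the ``does change'' half. The paper computes $d^{\text{MH}}_2(S_2,T) = 2\sum_{s\in S} d(s,T) + \sum_{t\in T} d(t,S)$ and $d^{\text{MH}}_3(S_2,T) = \frac{1}{2|S|+|T|}\bigl(2\sum_{s\in S} d(s,T) + \sum_{t\in T} d(t,S)\bigr)$ in full generality and simply asserts these differ from the originals, with no witness and no caveat; read literally, that assertion is false in degenerate cases --- for MH$_2$ whenever $S \subseteq T$ (so $\sum_{s\in S} d(s,T) = 0$), and for MH$_3$ exactly when the two directional averages coincide, the condition you isolate. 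Your route --- a single explicit witness $S=\{0\}$, $T=\{1,2\}$ (whose arithmetic checks out: MH$_2$ goes $4 \to 5$, MH$_3$ goes $\frac43 \to \frac54$, MJ$_1$ stays $\frac54$) together with the identity $d^{\text{MH}}_2(S',T) = d^{\text{MH}}_2(S,T) + \sum_{s\in S} d(s,T)$ and the exact equality criterion for MH$_3$ --- buys two things the paper's proof does not: it pins down precisely when the claimed change fails to occur (showing the proposition must be read existentially or generically rather than universally), and it flags the trap that a symmetric witness like $S=\{0\}$, $T=\{1\}$ would silently fail for MH$_3$. The paper's version is shorter and exhibits the structural reason for the discrepancy (no normaliser, or a mismatched one), which your general identities recover anyway; yours is the more rigorous of the two.
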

\noindent If we duplicate elements of a set, the set itself does not change, so a measure between sets should not change under such a duplication. \par \vspace{2mm}
\noindent The proof is given in \ref{A3}. MH$_2$ in particular greatly enlarges with the duplication or addition of points.
\begin{remark}
Unfortunately, if one single point of $S$ is duplicated, $d^1_{MJ}$ does change. This is the case with all existing modified Hausdorff distances. Yet even this is not disastrous, because it reflects that a greater concentration of certain elements of $S$ represents a different distribution of the data points in $S$.
\end{remark}

Regarding the second step of \cite{Dubuisson1994N}, we agree that an averaging process is much less sensitive to outlier error than the alternative processes. However, we may generalise this process by using other $L^p$ norm averages. And so we present generalised semi-metrics below.
\begin{definition} 
\label{New MJ distance}
We define the MJ$_p$ distance by
\begin{equation*}
    d^p_{MJ}({S},{T}) = \Bigg(\frac{\sum_{t\in T} d(t,S)^p}{2|T|} + \frac{\sum_{{s} \in {S}} d(s,T)^p}{2|S|} \Bigg)^{\frac{1}{p}}.
\end{equation*}
\end{definition}
\noindent This is chosen so that $d^p_{MJ}({S},{T}) \leq d_H(S,T)$ for all $p$ and 
$$\lim_{p \to \infty} d^p_{MJ}({S},{T}) = d_H(S,T).$$
Hence, $d_H$ can now be viewed as the $L^\infty$ norm of these distances. Thus, our family of semi-metrics includes the Hausdorff distance as a limiting case when $p \rightarrow \infty$, placing the existing Hausdorff metric in a new family of semi-metrics.

\begin{remark}
Usually in the context of $L^p$ norms, $p$ must be in the range $p \geq 1$ to preserve the triangle inequality. Since these measures do not preserve the triangle inequality, we can take $p>0$. This means that $p=\frac{1}{2}$, for example, is even less sensitive to outliers than the MH$_1$ and MJ$_1$ distances. \\

As $p$ grows larger, $d^p_{MJ}$ approaches the Hausdorff metric, which satisfies the triangle inequality. As $p$ grows smaller, these distances are less sensitive to outliers. Thus, this continuum of $p$ allows us to compromise between the triangle inequality property of the metric, and the sensitivity to outliers. In Section\ref{choosing p section}, we explore the possibility of optimising $p$ under these considerations.

If we were guaranteed that all distances $d(t,S)$ and $d(s,T)$ were non-zero, we could also consider $p\leq 0$. For $p=0$ the above norm is properly interpreted as a limit which equals the geometric mean
\begin{equation*}
    d^0_{MJ}({S},{T}) = \prod_{t\in T} d(t,S)^{\frac{1}{2|T|}} \prod_{{s} \in {S}} d(s,T)^{\frac{1}{2|S|}}.
\end{equation*}
As $p\to - \infty, d^p_{MJ} \to d_{\text{min}}(S,T)$ from equation (\ref{min min distance}). Even this quantity is contained in our new family. However, for $p\leq 0$ note $d^p_{MJ}(S,T)=0$ if $S,T$ intersect, so $d^p_{MJ}$ is not a semi-metric under axiom 1.
\end{remark}

\begin{prop}
For $p>0$, MJ$_p$ satisfies the properties of propositions \ref{deformation prop} and \ref{duplication prop}, exactly like MJ$_1$.
\end{prop}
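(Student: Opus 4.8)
The plan is to verify that each of the two properties transfers from MJ$_1$ to MJ$_p$ by isolating the structural feature of the MJ$_1$ argument on which each property rests, and checking that it survives the passage to a general exponent $p>0$. It is convenient to abbreviate the two ``half-averages'' in the definition as
\[
A_p(S,T) = \frac{1}{|S|}\sum_{s\in S} d(s,T)^p, \qquad B_p(S,T) = \frac{1}{|T|}\sum_{t\in T} d(t,S)^p,
\]
so that $d^p_{MJ}(S,T) = \big(\tfrac{1}{2}(A_p + B_p)\big)^{1/p}$. The inner map $r \mapsto r^p$ and the outer map $x \mapsto x^{1/p}$ are both strictly increasing and continuous on $[0,\infty)$ for every $p>0$; these are the only features of $p$ I will use, and they reduce the whole argument to the $p=1$ reasoning.

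For the duplication property (Proposition \ref{duplication prop}), suppose every element of $S$ is duplicated, producing a configuration $S'$ with $|S'| = 2|S|$ in which each original point is counted twice. Duplication introduces no new locations, so the minimal distance $d(t,S') = d(t,S)$ for every $t \in T$, whence $B_p$ is unchanged. For the other term, each summand $d(s,T)^p$ now occurs twice, giving $\sum_{s\in S'} d(s,T)^p = 2\sum_{s\in S} d(s,T)^p$, while the denominator doubles to $2|S'| = 4|S|$; the factors of two cancel and $A_p$ is unchanged. Hence $d^p_{MJ}(S',T) = d^p_{MJ}(S,T)$. This cancellation is insensitive to $p$ precisely because each term carries its own normalising cardinality, exactly as for MJ$_1$, so the only thing to record is that raising distances to the power $p$ does not disturb it.

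For the deformation property (Proposition \ref{deformation prop}), I would reuse the same class of deformations constructed for MJ$_1$: fix $T$ and a configuration $S$ containing a \emph{redundant} point $s_0$, one that is not the nearest neighbour in $S$ of any $t \in T$, and arranged so that the $T$-to-$S$ average strictly exceeds the $S$-to-$T$ average. Moving $s_0$ continuously along a sufficiently short path keeps it a non-minimiser, so every $d(t,S)$ is fixed and the strict inequality between the two averages persists by continuity; hence $d^{\text{MH}}_1$ remains equal to the larger, $T$-to-$S$ term and does not vary. Meanwhile $d(s_0,T)$ changes, so $A_p$ changes, and since $d^p_{MJ} = \big(\tfrac{1}{2}(A_p+B_p)\big)^{1/p}$ is a strictly increasing and continuous function of $A_p$ for $p>0$ while $A_p$ varies continuously and nontrivially with $s_0$, the distance $d^p_{MJ}$ varies continuously along the deformation.

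The only genuine obstacle is the strictness invoked in this last step: a nontrivial change in the single distance $d(s_0,T)$ must force a nontrivial change in $d^p_{MJ}$. This is exactly where $p>0$ is needed — it makes $r \mapsto r^p$ injective on $[0,\infty)$ (so $A_p$ genuinely moves, even should $d(s_0,T)$ pass through $0$) and $x \mapsto x^{1/p}$ strictly monotone (so the change in $A_p$ is not washed out). For $p=1$ the combining map is linear and this is automatic; for general $p$ it is the single point requiring care, but it reduces to the elementary monotonicity of power functions rather than to any new estimate.
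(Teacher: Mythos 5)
Your proof is correct and takes essentially the same approach as the paper: the paper's own proof of this proposition simply declares it ``identical to the proofs'' of the MJ$_1$ propositions in the appendix, and you have carried out exactly that transfer, making explicit the monotonicity and continuity of $r \mapsto r^p$ and $x \mapsto x^{1/p}$ on $[0,\infty)$ that justify it. The only (immaterial) difference is in the deformation argument, where you move a redundant non-minimising point $s_0$ and fix the rest of $S$, whereas the paper fixes a universal minimiser $s_0$ and perturbs the remaining points of $S$ --- both choices keep every $d(t,S)$, and hence $d^{\text{MH}}_1$, unchanged while letting a single $d(s,T)$ term, and hence $d^p_{MJ}$, vary continuously.
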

\begin{proof}
Identical to the proofs of these propositions in \ref{proof of props section}.
\end{proof}

\begin{prop}
\label{triangle inequality again}
For $p >0$, the MJ$_p$ measures are semi-metrics. However, they fail the triangle inequality up to any constant. That is, there is no constant $k$ such that
\begin{align} \label{TriIn}
    d^p_{MJ}(S,R) \leq k(d^p_{MJ}(S,T)+d^p_{MJ}(T,R))
\end{align}
for any subsets $S,T,R$. This also applies to MH$_i$, for $i=1,2,3.$
\end{prop}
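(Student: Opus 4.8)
I would establish the two halves of the statement separately: that each $d^p_{MJ}$ is a semi-metric, and that no multiplicative triangle inequality holds. For the semi-metric part, symmetry is immediate since the defining expression is symmetric in $S$ and $T$, and non-negativity is clear because $d^p_{MJ}$ is a non-negative combination raised to the power $1/p>0$. For the identity of indiscernibles, I would note that $d^p_{MJ}(S,T)=0$ forces every term $d(t,S)^p$ and $d(s,T)^p$ to vanish; since $S,T$ are finite and hence closed, $d(t,S)=0$ iff $t\in S$, so this occurs exactly when $T\subseteq S$ and $S\subseteq T$, i.e. $S=T$. Here $p>0$ is essential, consistent with the earlier remark that $p\le 0$ destroys axiom 1.

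For the failure of (\ref{TriIn}), the plan is to exhibit, for each candidate constant $k$, a triple $S,T,R$ whose ratio $d^p_{MJ}(S,R)/(d^p_{MJ}(S,T)+d^p_{MJ}(T,R))$ exceeds $k$. A key preliminary observation is that all these distances are homogeneous of degree one under rescaling of the ambient $\mathbb{R}$, so the ratio is scale-invariant and a pure blow-up of distances cannot work; the counterexample must instead let cardinalities grow. I would fix $S=\{0,1\}$ and $R=\{0,-1\}$, for which a direct computation gives the constant value $d^p_{MJ}(S,R)=2^{-1/p}$, and take $T=\{-1,0,1,c_1,\dots,c_N\}$ where $c_1,\dots,c_N$ are $N$ distinct points clustering at $0$, say $c_i=i/N^2$. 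The covering terms of $d^p_{MJ}(S,T)$ and $d^p_{MJ}(T,R)$ vanish because $S,R\subseteq T$, while the surviving averaged terms carry a single ``defect'' (the point $-1$ for $S$, the point $1$ for $R$) diluted by the factor $1/|T|=1/(N+3)$; hence $d^p_{MJ}(S,T),d^p_{MJ}(T,R)\to 0$ as $N\to\infty$ while $d^p_{MJ}(S,R)$ stays fixed, so the ratio diverges. Evaluated at $p=1$, the same triple simultaneously breaks MH$_1$ and MH$_3$, since $d^{\text{MH}}_1(S,R)=d^{\text{MH}}_3(S,R)=\tfrac12$ are constant while $d^{\text{MH}}_1(S,T)$ and $d^{\text{MH}}_3(S,T)$ inherit the same $1/(N+3)$ dilution; for MH$_1$ one may alternatively invoke its equivalence to MJ$_1$ from Proposition \ref{deformation prop}.

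The one case resisting this construction is MH$_2$, which I expect to be the main obstacle. Being an un-normalised sum rather than an average, $d^{\text{MH}}_2$ is not shrunk by a diluting bulk (indeed $d^{\text{MH}}_2(S,T)\not\to 0$ above), and in every ``natural'' configuration the total-distance structure actually satisfies the triangle inequality with constant $1$, so the diluting family is useless here. For MH$_2$ I would use the opposite mechanism: concentrate many points of $S$ near a single point of $T$ that is itself far from $R$. Concretely, take $T=\{0\}$, $R=\{D\}$ with $D$ fixed, and $S=\{0,\epsilon,\dots,(n-1)\epsilon\}$ with $\epsilon\to 0$ and $n\to\infty$. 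Then $d^{\text{MH}}_2(S,T)=\epsilon\,n(n-1)/2\to 0$ and $d^{\text{MH}}_2(T,R)=2D$, whereas $d^{\text{MH}}_2(S,R)\to(n+1)D$ because the single defect $d(0,R)=D$ is paid once along the path through $T$ but once for each of the $n$ points of $S$ in $d^{\text{MH}}_2(S,R)$; the ratio thus behaves like $(n+1)/2$ and is unbounded.

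The conceptual crux of the whole proposition, and the point I would emphasise in the write-up, is that the normalised measures and the un-normalised MH$_2$ fail for exactly opposite reasons, namely dilution of a defect across a large $T$ versus replication of a defect across a large $S$, so two complementary families are genuinely required and neither family breaks all four simultaneously.
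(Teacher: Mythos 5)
Your proposal is correct and takes essentially the same approach as the paper's proof: the semi-metric axioms are verified the same way (vanishing of the sum forces every minimal distance $d(s,T)$, $d(t,S)$ to be zero, hence $S=T$), and the failure of the triangle inequality rests on the same two complementary mechanisms — a large intermediate set $T \supseteq S \cup R$ with bunched bulk that dilutes the averaged defect for MJ$_p$, MH$_1$, MH$_3$, and a large clustered endpoint set against a tiny $T$ so that the unnormalised sum in MH$_2$ replicates the defect. Your concrete sets differ only cosmetically from the paper's $\epsilon$-bunched configurations, and your closing observation that the normalised and unnormalised measures fail for opposite reasons is precisely the structure of the paper's argument.
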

\noindent The proof is given in \ref{A1}. We remark that this proof does not exist in the literature even for the existing modified Hausdorff semi-metrics.

\subsection{Sensitivity to outliers}
\label{outliers}
We examine the sensitivity to outliers of all discussed metrics and semi-metrics. Let $T=\{t_1,...,t_n\}$ and $S$ be fixed. Fixing all but one element, if $t_n \to \infty$ acts as an outlier, we examine the effect on all distances $d(S,T)$.

First, the Hausdorff distance $d_H(S,T)$ increases asymptotically with $t_n,$ for $t_n$ sufficiently large, illustrating its unsuitability for outliers. That is,
\begin{align*}
d_H(S,T) \sim |t_n|, \hspace{3mm} \text{meaning} \hspace{3mm} \lim_{|t_n| \to \infty} \frac{d_H(S,T)}{|t_n|} = 1. 
\end{align*}
MH$_2$ contains the term $d(t_n,S)$ which also increases asymptotically with $t_n.$ That is, $d^{\text{MH}}_2(S,T) \sim |t_n|$, illustrating its sensitivity to outliers. Due to the averaging within MH$_1$, MH$_3$ and MJ$_p$, all of these semi-metrics perform well with outliers, but this gets worse as $p$ increases, and better if it decreases. Specifically,

\begin{align*}
d^{\text{MH}}_1(S,T) \sim \frac{|t_n|}{|T|}, d^{\text{MH}}_3(S,T) \sim \frac{|t_n|}{|T|+|S|}, d^p_{MJ}(S,T) \sim \frac{|t_n|}{(2|T|)^{\frac{1}{p}}}, 
\end{align*}

Finally, we examine a property of the MJ$_p$ family, but not the Wasserstein distance, indicating the latter's unsuitability to measure distance between data sets.

\begin{prop}
\label{intersection prop}
If $|S\cap T|=r,$ the following inequality holds: 
\begin{align*}
d^p_{MJ}(S,T) \leq \left[1-\frac{r}{2}\Big(\frac{1}{|S|}+ \frac{1}{|T|}\Big)\right]^{\frac{1}{p}}d_H(S,T)    
\end{align*}
No such inequality holds for Wasserstein distance. Even with $|S\cap T|=|S|-1=|T|-1$, it is possible for $d_W(S,T)$ to coincide with $d_H(S,T)$.
\end{prop}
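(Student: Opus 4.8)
The plan is to handle the two assertions separately. For the inequality, I would argue directly from Definition \ref{New MJ distance}. The key observation is that every point of $S \cap T$ contributes nothing to either sum: if $t \in S \cap T$ then $t \in S$, so $d(t,S) = 0$, and symmetrically $d(s,T) = 0$ for $s \in S \cap T$. Hence only the $|T| - r$ terms with $t \in T \setminus S$ and the $|S| - r$ terms with $s \in S \setminus T$ are nonzero. Each surviving term is bounded by the Hausdorff distance, since $d(t,S) \le d_H(S,T)$ and $d(s,T) \le d_H(S,T)$ for all $t,s$ by the definition of $d_H$.

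Substituting these bounds into the definition gives
$$d^p_{MJ}(S,T)^p \le \frac{(|T|-r)\,d_H(S,T)^p}{2|T|} + \frac{(|S|-r)\,d_H(S,T)^p}{2|S|} = \left[1 - \frac{r}{2}\Big(\frac{1}{|S|} + \frac{1}{|T|}\Big)\right] d_H(S,T)^p,$$
and taking $p$-th roots yields the claim. This part is routine; the only thing to verify carefully is the bookkeeping that exactly $r$ terms vanish in each sum, which follows from $|S \cap T| = r$.

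For the second assertion, I would exhibit an explicit pair of sets on the real line (with $p=1$, as used in the paper). Take $S = \{0,1\}$ and $T = \{-1,0\}$, so that $S \cap T = \{0\}$ and $r = 1 = |S|-1 = |T|-1$. A direct computation gives $d_H(S,T) = 1$: the point $1 \in S$ lies at distance $1$ from $T$, and $-1 \in T$ lies at distance $1$ from $S$. For the Wasserstein distance, $\mu_S = \tfrac12\delta_0 + \tfrac12\delta_1$ and $\mu_T = \tfrac12\delta_{-1} + \tfrac12\delta_0$; parameterising the transport plans by the mass $\alpha$ sent from $0$ to $-1$, the total cost simplifies to the constant $\alpha\cdot 1 + (\tfrac12-\alpha)\cdot 2 + \alpha\cdot 1 = 1$, so that $d_W(S,T) = 1$. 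Thus $d_W(S,T) = d_H(S,T)$ even though $S$ and $T$ share all but one point, so no contractive bound of the MJ$_p$ form (whose bracketed factor is strictly less than $1$ for $r \ge 1$) can hold for the Wasserstein distance.

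The main obstacle lies in the second part: one must be sure that the optimal transport does not secretly exploit the shared atom at $0$ to drive $d_W$ below $1$. The crucial point is that with uniform weights the common atom cannot simply be matched to itself for free, because both matchings of the two atoms — keeping $0 \mapsto 0$ while moving $1 \mapsto -1$, or swapping — incur the same total cost; checking all plans (or using the formula $W_1 = \int |F_{\mu_S} - F_{\mu_T}|$) confirms $d_W = 1$. This is precisely the behaviour that distinguishes the global, measure-based Wasserstein distance from the term-by-term MJ$_p$ construction, in which overlap always strictly reduces the distance.
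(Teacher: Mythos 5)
Your proof of the inequality is exactly the paper's argument: the terms indexed by points of $S \cap T$ vanish, every remaining term $d(s,T)$ or $d(t,S)$ is at most $d_H(S,T)$ by definition of the Hausdorff distance, and the bookkeeping $\frac{|S|-r}{2|S|}+\frac{|T|-r}{2|T|}=1-\frac{r}{2}\big(\frac{1}{|S|}+\frac{1}{|T|}\big)$ gives the stated factor. For the Wasserstein claim, however, you take a genuinely different route. The paper uses the family $A=\{0,1,\dots,n-1\}$, $B=\{1,\dots,n\}$ and computes $d_W(A,B)=1$ via the one-dimensional formula $W_1=\int_{\mathbb{R}}|F-G|\,dx$ for the associated empirical CDFs, where $F-G=\frac{1}{n}\mathbbm{1}_{[0,n)}$. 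You instead take the minimal pair $S=\{0,1\}$, $T=\{-1,0\}$ and enumerate all couplings directly; since the plan space is one-parameter, your computation $\alpha\cdot 1+(\tfrac12-\alpha)\cdot 2+\alpha\cdot 1=1$ is a complete verification that $d_W(S,T)=1=d_H(S,T)$, and your example does satisfy $|S\cap T|=|S|-1=|T|-1$, so the hypothetical contractive bound $(1/2)^{1/p}d_H(S,T)$ fails. What your version buys is self-containedness (no appeal to the CDF representation of $W_1$) and a transparent two-by-two transport problem in which the role of the shared atom is easy to inspect. What the paper's version buys is strength and reuse: there the overlap fraction $(n-1)/n$ tends to $1$ while $d_W/d_H$ stays at $1$, so the hypothetical factor $(1/n)^{1/p}$ can be made arbitrarily small and the failure arbitrarily severe; moreover the same computation shows $d^p_{MJ}(A,B)=(1/n)^{1/p}$, i.e., the MJ$_p$ inequality is attained with equality on this family, and this example is recycled in the Remark following the proposition. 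Your argument proves the proposition as stated; the paper's proves slightly more.
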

\noindent 
The proof is given in \ref{A4}.

\begin{remark}
As a consequence of proposition \ref{intersection prop}, if $S$ and $T$ have a large amount of similarity in their elements, $d^p_{MJ}(S,T)$ will reflect this close similarity between $S$ and $T$, while the Wasserstein distance $d_{W}(S,T)$ may not. This will prove useful in analysing data sets in sections \ref{results section},\ref{real results}. We adopt an example from the proof here. Let $A_1=\{0,999\},B_1=\{1,1000\}$ and $A_2=\{0,1,...,999\},B_2=\{1,2,...,1000\}$. Observing sets $A_1,B_1$, clear candidates for distances between them are $1$ and $2$. Indeed, $d_W(A_1,B_1)=d_H(A_1,B_1)=d^{\text{MH}}_1(A_1,B_1)=d^{\text{MH}}_3(A_1,B_1)=d^p_{MJ}(A_1,B_1)=1$ while $d^{\text{MH}}_2(A_1,B_1)=4$.\\

Using the Wasserstein or Hausdorff distance, the separation between $A_2$ and $B_2$ remains $d_W(A_2,B_2)=d_H(A_2,B_2)=1$. This is an appropriate distance from a translational or geometric point of view. However, it ignores the remarkable similarity in the data of $A_2$ and $B_2$. If these were sets of change points, they would be considered remarkably similar. Appropriately, $d^{\text{MH}}_1(A_2,B_2)=\frac{1}{1000}$, $d^p_{MJ}(A_2,B_2)=\big(\frac{1}{1000}\big)^{\frac{1}{p}}$.
\end{remark}

To summarise, we have proven:
\begin{theorem}
There exists a family of semi-metrics MJ$_p$ which include the Hausdorff distance as a limiting member when $p \to \infty$. Like MH$_i$, $i=1,2,3,$ they fail the triangle inequality up to any constant. However, they have a precision advantage over MH$_1$, a duplication-invariance advantage over MH$_3$, and are much more insensitive to outliers than MH$_2$ and the Hausdorff metric. They also are more suitable than the Wasserstein at reflecting high intersection in the data.
\end{theorem}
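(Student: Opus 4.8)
The plan is to observe that this final theorem is a \emph{summary} statement: every individual clause has already been established in the preceding propositions, so the ``proof'' consists of assembling the relevant references rather than proving anything new. I would therefore structure the argument as a guided tour through the results already proven in this section, citing each proposition for the corresponding claim.

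First I would address the existence of the family and the Hausdorff limit. The family MJ$_p$ is given in Definition \ref{New MJ distance}, and the two properties $d^p_{MJ}(S,T) \le d_H(S,T)$ for all $p$ and $\lim_{p\to\infty} d^p_{MJ}(S,T) = d_H(S,T)$ are asserted immediately after that definition; these identify $d_H$ as the $L^\infty$ limiting member. For the claim that the MJ$_p$ are semi-metrics failing the triangle inequality up to any constant, exactly like MH$_i$, I would cite Proposition \ref{triangle inequality again}, which proves precisely this for $p>0$ and notes that the same failure applies to MH$_1$, MH$_2$, MH$_3$.

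Next I would handle the three comparative advantages. The precision advantage over MH$_1$ — that there is a class of deformations under which $d^p_{MJ}$ varies continuously while $d^{\text{MH}}_1$ does not — follows from Proposition \ref{deformation prop} together with the remark that MJ$_p$ inherits the properties of Propositions \ref{deformation prop} and \ref{duplication prop}. The duplication-invariance advantage over MH$_3$ (and MH$_2$) comes from Proposition \ref{duplication prop}: duplicating all elements of $S$ leaves $d^p_{MJ}$ unchanged but alters $d^{\text{MH}}_i$ for $i=2,3$. The insensitivity to outliers relative to MH$_2$ and $d_H$ is read off the asymptotic estimates of Section \ref{outliers}: as $t_n \to \infty$ one has $d_H(S,T) \sim |t_n|$ and $d^{\text{MH}}_2(S,T) \sim |t_n|$, whereas $d^p_{MJ}(S,T) \sim |t_n|/(2|T|)^{1/p}$, which is suppressed by the averaging factor. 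Finally, the advantage over the Wasserstein distance in reflecting high intersection is exactly the content of Proposition \ref{intersection prop}, where the bound involving $|S\cap T|=r$ holds for MJ$_p$ but no analogous inequality holds for $d_W$.

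Since no genuinely new mathematical content is required, I do not anticipate a serious obstacle; the only care needed is to verify that each cited result is stated for the full range $p>0$ rather than merely $p=1$, which Proposition \ref{triangle inequality again} and the remark following Definition \ref{New MJ distance} confirm. I would close by remarking that collecting these facts establishes every assertion of the theorem.

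\begin{proof}
Every clause of the theorem has been established above. The family MJ$_p$ is introduced in Definition \ref{New MJ distance}; the inequality $d^p_{MJ}(S,T) \le d_H(S,T)$ for all $p$ and the limit $\lim_{p\to\infty} d^p_{MJ}(S,T) = d_H(S,T)$ noted immediately afterward exhibit $d_H$ as the limiting member at $p\to\infty$. That each MJ$_p$ is a semi-metric for $p>0$, and that all of them, together with MH$_i$ for $i=1,2,3$, fail the triangle inequality up to any constant, is Proposition \ref{triangle inequality again}. The precision advantage over MH$_1$ is Proposition \ref{deformation prop}, extended to all $p>0$ by the proposition following the remark after Definition \ref{New MJ distance}. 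The duplication-invariance advantage over MH$_3$ (and MH$_2$) is Proposition \ref{duplication prop}, likewise extended to all $p>0$. The insensitivity to outliers relative to MH$_2$ and $d_H$ follows from the asymptotics of Section \ref{outliers}: as $t_n\to\infty$, $d_H(S,T)\sim|t_n|$ and $d^{\text{MH}}_2(S,T)\sim|t_n|$, while $d^p_{MJ}(S,T)\sim|t_n|/(2|T|)^{1/p}$. Finally, the advantage over the Wasserstein distance in reflecting intersection in the data is Proposition \ref{intersection prop}. Together these prove the theorem.
\end{proof}
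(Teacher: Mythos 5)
Your proposal is correct and matches the paper's own proof, which simply reads ``Combine propositions \ref{deformation prop}, \ref{duplication prop}, \ref{triangle inequality again}, \ref{intersection prop}.'' You cite the same four propositions (plus the limit following Definition \ref{New MJ distance} and the asymptotics of Section \ref{outliers}, which the paper leaves implicit), so the approach is essentially identical, just spelled out in more detail.
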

\begin{proof}
Combine propositions \ref{deformation prop}, \ref{duplication prop}, \ref{triangle inequality again}, \ref{intersection prop}.
\end{proof}

\section{Computational methodology} \label{Section Distance matrix analysis}
To generate our distance matrix $D$, we compute the distance between all sets of time series change points within our collection of time series. Suppose we have $n$ time series with sets of change points $S_1,\ldots,S_n$. We define the following distance matrices. \par \vspace{2mm} 
Hausdorff distance matrix $(D_H)_{ij}$:
\begin{equation}
    (D_H)_{ij} = d_H(S_i,S_j)= \text{max}\bigg( \sup_{x \in S_i} d(x,S_j), \sup_{y \in S_j} d(y,S_i) \bigg) \forall i,j=1,\ldots,n. 
\end{equation}

MH$_1$ distance matrix $(D_{MH_1})_{ij}$:
\begin{equation}
    (D_{MH_1})_{ij} = d^{\text{MH}}_1(S_i,S_j)=\max \bigg( \frac{1}{|S_i|} \sum_{x \in S_i} d(x,S_j), \frac{1}{|S_j|} \sum_{y \in S_j} d(y,S_i) \bigg)
\end{equation}

MH$_2$ distance matrix $(D_{MH_2})_{ij}$:
\begin{equation}
    (D_{MH_2})_{ij} = d^{\text{MH}}_2(S_i,S_j)= \sum_{x \in S_i} d(x,S_j) + \sum_{y \in S_j} d(y,S_i)
\end{equation}

MH$_3$ distance matrix $(D_{MH_3})_{ij}$:
\begin{equation}
    (D_{MH_3})_{ij} = d^{\text{MH}}_3(S_i,S_j) =\frac{1}{|S_i|+|S_j|} \left(\sum_{x \in S_i} d(x,S_j) + \sum_{y \in S_j} d(y,S_i) \right)
\end{equation}

Wasserstein distance matrix $(D_W)_{ij}$:
\begin{equation}
    (D_W)_{ij} = \inf_{\mu} \bigg( \sum_{x \in S_i, y \in S_j} d^{p} (x,y) \mu (x,y) \bigg)^{\frac{1}{p}}
\end{equation}

MJ$_p$ distance matrix $(D_{MJ_p})_{ij}$: 
\begin{equation}
    (D_{MJ_p})_{ij}=d^p_{MJ}(S_i,S_j) =
    \Bigg(\frac{\sum_{y\in S_j} d(y,S_i)^p}{2|S_j|} + \frac{\sum_{x \in S_i} d(x,S_j)^p}{2|S_i|} \Bigg)^{\frac{1}{p}}
\end{equation}

\subsection{Transitivity analysis} \label{Transitivity analysis}
Like the modified Hausdorff distances, the introduced distance measures MJ$_p$ do not satisfy the triangle inequality in generality. This is significant because it is possible that sets $S,T$ and $T,R$ are each close with respect to these measures, but $S,R$ are not close. Then the property of closeness would not be transitive. However, in practice, the distance measures respect transitivity quite well, at least for $p\geq 1$. We examine two questions:
\begin{enumerate}
    \item how often do the new semi-metric distances fail the triangle inequality; 
    \item how badly do these distances violate the triangle inequality.
\end{enumerate}

To explore these two questions, we empirically generate a three dimensional matrix and test whether the triangle inequality is satisfied for all possible combinations of elements within the matrix. We construct our matrix as follows:

\begin{equation}
    T_{i,j,k}=
    \begin{cases}
      \text{blue }, & \frac{D_{ik}}{D_{ij} + D_{jk}} \leq 1, \\
      \text{yellow },  & 1 < \frac{D_{ik}}{D_{ij} + D_{jk}} \leq 2, \\
      \text{red}, & \text{else.}
    \end{cases}
\end{equation}

\subsection{Eigenvalue analysis}
Analysing the eigenvalues and eigenvectors of a system in physical and applied sciences is of great importance. Matrix diagonalisation and matrices' eigenspectra arise in many applications such as stability analysis and oscillations of vibrating systems. 
In our context, we analyse the distance matrices $D$, all of which are symmetric real matrices with trace $0$. As such, they can be diagonalised over the real numbers with real eigenvalues. To determine similarity of time series with respect to their change points, we plot the absolute value of eigenvalues for all matrices. Note all eigenvalues are real and sum to zero.
\\

Consider the following real world heuristic: many real time series, such as stock returns, are not necessarily highly correlated on a regular basis. The returns of Microsoft and Ford may have little to do with each other over time. However, it is expected that a significant market event or crash would significantly affect both Microsoft and Ford at essentially the same time, and yield a change point in the stochastic properties of both time series at the same time. Thus, even if the overall properties of time series may be uncorrelated or negatively correlated, change points are likely to cluster. It would be of considerable interest if a third time series, say the returns of a new green energy company, had change points different from the majority. Perhaps this third time series would then be concluded to be less vulnerable to a market crash. In our analysis of time series, especially cryptocurrency in Section\ref{real results}, we expect a large majority 
of time series to follow similar change points, and from these we will be able to examine the exceptional ones for any opportunities. 
     
Mathematically, if say $70$ out of $100$ time series have very similar change points, then the distance matrix $D$ should have the following structure:

\[
\begin{pmatrix}
  \begin{matrix}
   & c_{1} & c_{2} & c_{3} & \hdots & c_{70} \\
  r_{1} & 0 & * & * & \hdots & * \\
  r_{2} & * & 0 & * & \hdots & * \\
  r_{3} & * & * & 0 & \hdots & *\\
   & \vdots & \vdots & \vdots & \ddots \\
  r_{70} & * & * & * & * & 0 \\
  \end{matrix}
  & \rvline &   
  \begin{array}{ccc}
    \horzbar & r_{1} & \horzbar \\
    \horzbar & r_{2} & \horzbar \\
             & \vdots    &          \\
    \horzbar & r_{70} & \horzbar
  \end{array} & \\
\hline
  \begin{array}{ccccc}
    \vertbar & \vertbar & \vertbar &        & \vertbar \\
    r^{T}_{1}    & r^{T}_{2} & r^{T}_{3}   & \ldots & r^{T}_{70}    \\
    \vertbar & \vertbar & \vertbar &       & \vertbar 
  \end{array} & \rvline &
  \begin{matrix}
  0 &  \\
   & 0 \\
   &  & \ddots \\
   & & & 0
  \end{matrix}
\end{pmatrix}
\]
where rows $r_1,\ldots,r_{70}$ are highly similar to one another and elements $*$ are close to zero. This means small deformations in the matrix entries exist to make the first $70$ rows identical. Hence, this matrix is a small deformation from a rank $31$ matrix, with $69$ eigenvalues equal to zero. That is, if $70$ of $100$ time series have very similar change points, then $69$ of the eigenvalues should be close to zero.

Given a threshold $\epsilon$, we can rank the absolute values of the eigenvalues $|\lambda_1|\leq...\leq|\lambda_n|$. If $|\lambda_1|,...,|\lambda_k|<\epsilon$ then we can deduce $k+1$ of the time series are similar with respect to their structural breaks. This may be the most pithy way of expressing the number of time series that are similar in terms of their change points within large collections of time series. If we \emph{a priori} have reason to believe that one large majority cluster will exist, a judicious choice of $\epsilon$ can determine its size immediately. One can approximate its size by inspection from the graphical depictions such as Figures \ref{fig:No_Eigenvalue_Analysis},\ref{fig:Several_Eigenvalue_Analysis},\ref{fig:Worst_Eigenvalue_Analysis}. 

Moreover, eigenvalue analysis provides us a quick measure of the scale of the distance matrix. Since all distance (and affinity) matrices are symmetric, $D$ can be conjugated by an orthogonal matrix to give a diagonal matrix of its eigenvalues. This is known as the spectral theorem, \cite{Axler}. As a consequence, the operator norm \cite{RudinFA} coincides with $|\lambda_n|$. That is, $$\max_{x \in \mathbb{R}^n - \{0\}} \frac{||Dx||}{||x||}= ||D||_{op}= |\lambda_n|$$

\begin{remark}
\label{eval plots look sim}
Even when these plots look quite similar, the scale gives us information about the scale of the distance matrices. 
\end{remark}

\subsection{Spectral clustering affinity matrix}
Spectral clustering applies a graph theoretic interpretation of our problem, and projects our data into a lower dimensional space, the eigenvector domain, where it may be more easily separated by standard algorithms such as $K$-means.
Following \cite{vonLuxburg2007}, we transform our distance matrix $D$ into an affinity matrix $A$ as follows:
\begin{equation}
    A_{ij} = 1 - \frac{D_{ij}}{\max_{kl} D_{kl}}, \hspace{3mm} \forall i,j=1,\dots,n. 
\end{equation}
The graph Laplacian matrix is given by:
\begin{equation}
    L = E - A, 
\end{equation}
where $E$ is the diagonal degree matrix with diagonal entities  $E_{ii} = \sum_{j} A_{ij}$. $A$ and hence $L$ are real symmetric matrices, so can be diagonalised with all real eigenvalues. In particular, $L$ is \emph{positive semi-definite} with eigenvalues $0=\lambda_{1} \leq \lambda_{2} \leq ... \leq \lambda_{n}.$

Spectral clustering proceeds as follows. With $k$ chosen \textit{a priori}, find corresponding eigenvectors $f_{1}, f_{2},...,f_{k}$ and construct the matrix $F \in \mathbb{R}^{n \times k}$ whose columns are $f_i,i=1,\dots,k$. Let $v_j \in \mathbb{R}^k$ be the rows of $F, j=1,\dots,n$. Apply standard $K$-means to cluster these rows into clusters $C_1,...,C_k$. Finally, output clusters $A_l=\{i: v_i \in C_l \}, l=1,...,k$ to assign the original $n$ elements into the corresponding clusters.

\subsection{Dendrogram analysis}
A dendrogram displays the hierarchical relationships between objects in a dataset. Hierarchical clustering falls into two categories:
\begin{enumerate}
    \item Agglomerative clustering - a bottom-up approach where all data points start as individual clusters; or
    \item Divisive clustering - a top-down approach where all data points start in the same cluster and are recursively split.
\end{enumerate}
The dendrogram and hierarchical clustering results are highly dependent on the distance measure used to determine clusters. We display the respective dendrogram of our eight candidate distance matrices and assess which method displays similarity between time series most appropriately for our change point problem. The colours of the dendrogram indicate the closeness of any two sets of time series change points.

\section{Simulation study} \label{results section}

We generate three collections of ten time series. The first collection exhibits very few change point outliers, the second exhibits a moderate number of less severe change point outliers and the final collection of time series exhibits multiple extreme change point outliers. The Hausdorff, three modified Hausdorff varieties, Wasserstein, MJ$_{0.5}$, MJ$_1$ and MJ$_2$ distances are compared between the time series.

\subsection{Simulation $1$: no change point outliers}

\begin{figure}[t]
    \centering
    \includegraphics[width=.48\textwidth]{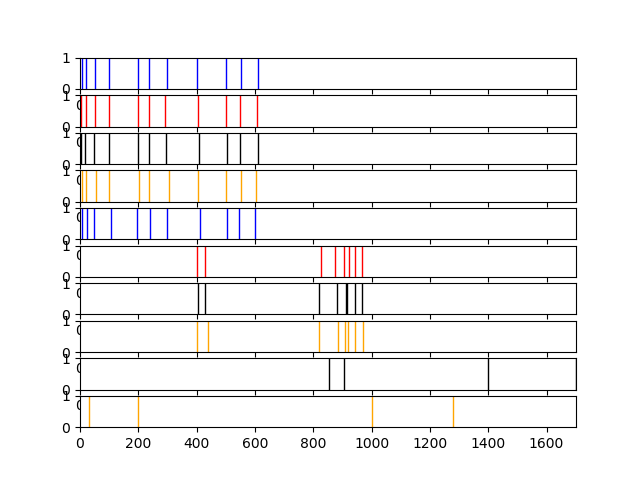}
    \caption{Time series of change points with no outliers}
    \label{fig:Collection of Time Series Change Points with (virtually) no outliers}
\end{figure}

\begin{table}[t]
\tabcolsep 2pt
\begin{tabular}{ |p{2cm}||p{.85cm}|p{.85cm}|p{.85cm}|p{.85cm}|p{.85cm}|p{.85cm}|p{.85cm}|p{.85cm}|p{.85cm}|p{.85cm}|}
 \hline
 \multicolumn{11}{|c|}{Spectral clustering results} \\
 \hline
 Metric & TS1 & TS2 & TS3 & TS4 & TS5 & TS6 & TS7 & TS8 & TS9 & TS10\\
 \hline
 \hline	
Hausdorff & 1 & 1 & 1 & 1 & 1 & 2 & 2 & 2 & 3 & 4\\
 MH$_1$ & 1 & 1 & 1 & 1 & 1 & 2 & 2 & 2 & 3 & 4\\
 MH$_2$ & 1 & 1 & 1 & 1 & 1 & 2 & 3 & 1 & 4 & 1\\
 MH$_3$ & 1 & 1 & 1 & 1 & 1 & 2 & 2 & 2 & 3 & 4\\
 Wasserstein & 1 & 2 & 3 & 4 & 2 & 2 & 2 & 2 & 2 & 2\\
 MJ$_{0.5}$ & 1 & 1 & 1 & 1 & 1 & 2 & 2 & 2 & 3 & 4\\ 
 MJ$_1$ & 1 & 1 & 1 & 1 & 1 & 2 & 2 & 2 & 3 & 4\\
 MJ$_2$ & 1 & 1 & 1 & 1 & 1 & 2 & 2 & 2 & 3 & 4 \\
\hline	 
\end{tabular}
\caption{Spectral clustering distance matrices with no outliers}
\label{tab:result_table_no}
\end{table}

Figure \ref{fig:Collection of Time Series Change Points with (virtually) no outliers} displays the ten time series of candidate change points. When assessing if two time series are similar with respect to their change points, we are interested in both the location and number of change points. There is an average spacing of about 35 units between change points; this simulates realistic outputs from a change point detection algorithm that generally requires a minimum number of data points within locally stationary segments. In this scenario, one should consider the first five time series (1-5 incl.) as similar, the next three (6-8 incl.) as similar and  the final two (9 and 10) as dissimilar to all other time series.  Although there are no change point outliers in this scenario, it is instructive to measure how various distance measures perform without the presence of outliers.

Interpreting similarity among large collections of time series' change points may be a difficult task. Therefore, we make inference using all three of our proposed methods in Section \ref{Section Distance matrix analysis} to analyse some candidate distance matrix. Perhaps the most concise and expressive display of general similarity or dissimilarity within any such collection is the plot of the absolute value of the eigenvalues of the distance matrices. We compare all our distance measures in Figure \ref{fig:No_Eigenvalue_Analysis}. All distance measures appear to indicate that there are five time series that are highly similar, three that are slightly less similar and two far more dissimilar to the rest of the collection. In this instance, all eigenvalue plots look very similar: without the existence of outliers, all these distance measures perform similarly, so this is expected. Note the difference in scale of the diagrams reflects the value of $|\lambda_n|$, hence the total scale of these matrices.

\begin{figure}[t]
    \centering
    \begin{subfigure}[b]{0.24\textwidth}
        \includegraphics[width=\textwidth]{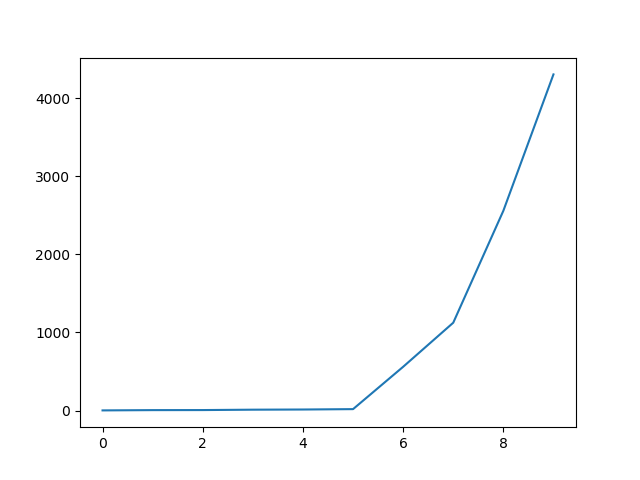}
        \caption{Hausdorff}
        \label{fig:NoHausdorffEigenvalues}
    \end{subfigure}
        \begin{subfigure}[b]{0.24\textwidth}
        \includegraphics[width=\textwidth]{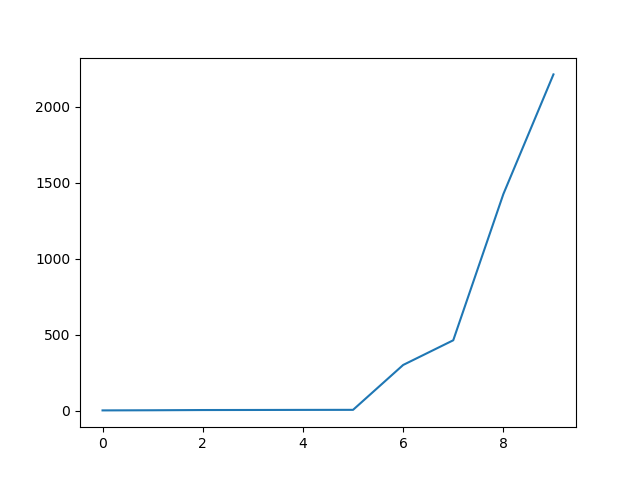}
        \caption{MH$_1$}
        \label{fig:No_MH1_Eigenvalues}
    \end{subfigure}
        \begin{subfigure}[b]{0.24\textwidth}
        \includegraphics[width=\textwidth]{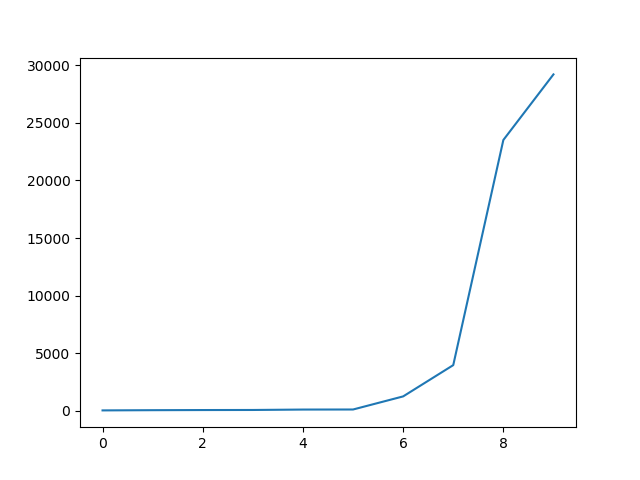}
        \caption{MH$_2$}
        \label{fig:No_MH2_Eigenvalues}
    \end{subfigure}
        \begin{subfigure}[b]{0.24\textwidth}
        \includegraphics[width=\textwidth]{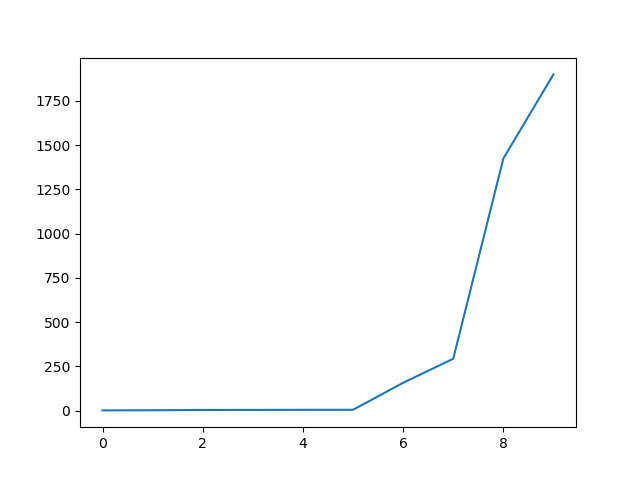}
        \caption{MH$_3$}
        \label{fig:No_MH3_Eigenvalues}
    \end{subfigure}
    \begin{subfigure}[b]{0.24\textwidth}
        \includegraphics[width=\textwidth]{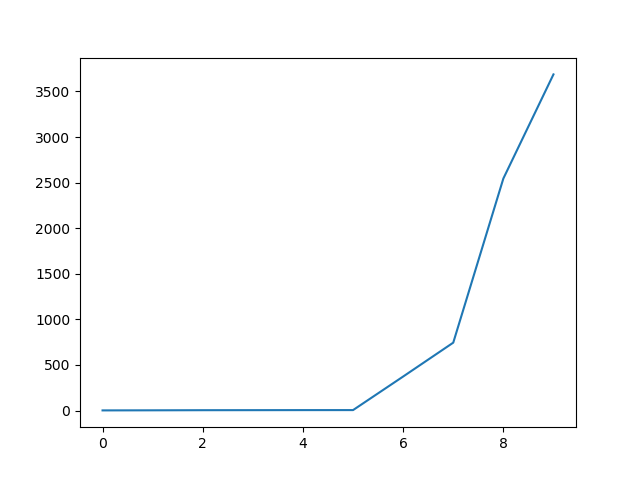}
        \caption{Wasserstein}
        \label{fig:No_Wasserstein_Eigenvalues}
    \end{subfigure}
    \begin{subfigure}[b]{0.24\textwidth}
        \includegraphics[width=\textwidth]{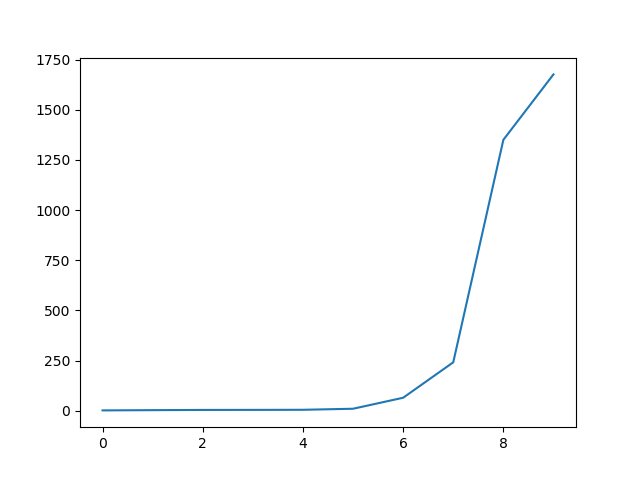}
        \caption{MJ$_{0.5}$}
        \label{fig:No_MJ05_Eigenvalues}
    \end{subfigure}
    \begin{subfigure}[b]{0.24\textwidth}
        \includegraphics[width=\textwidth]{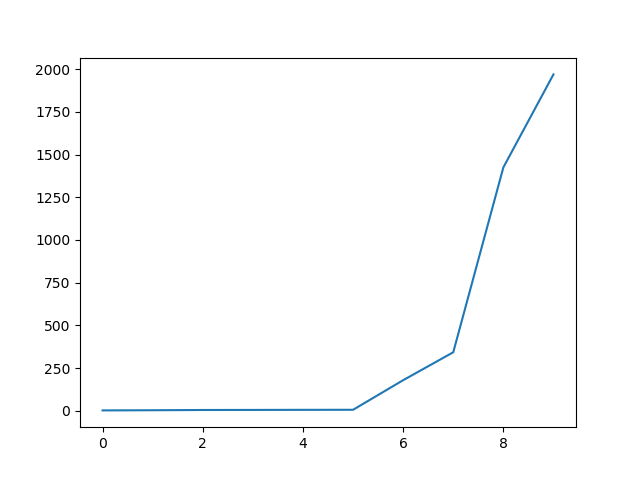}
        \caption{MJ$_1$}
        \label{fig:No_MJ1_Eigenvalues}
    \end{subfigure}
    \begin{subfigure}[b]{0.24\textwidth}
        \includegraphics[width=\textwidth]{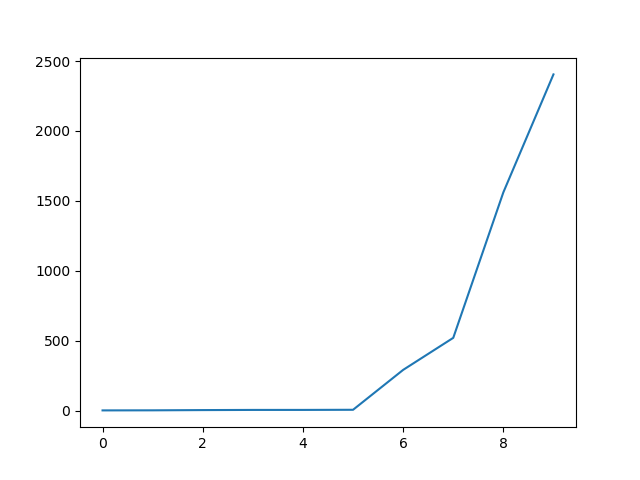}
        \caption{MJ$_2$}
        \label{fig:No_MJ2_Eigenvalues}
    \end{subfigure}
    \caption{Eigenvalue analysis with no outliers}\label{fig:No_Eigenvalue_Analysis}
\end{figure}

Table \ref{tab:result_table_no} shows that of our eight distance measures, six distance measures, namely Hausdorff, MH$_1$, MH$_3$, MJ$_{0.5}$, MJ$_1$ and MJ$_2$, cluster the time series correctly. Both the Wasserstein and MH$_2$ distances fail to determine appropriate clusters within the spectral clustering. The dendrograms in Figure \ref{fig:No_Dendrogram_Analysis} should be analysed carefully. All distance measures indicate that there is a cluster of five change point sets that are similar, another cluster of three and two unrelated change point sets. However, spectral clustering highlights that the Wasserstein and MH$_2$ distance measures incorrectly identify which time series should be considered similar. 

\begin{figure}[t]
    \centering
    \begin{subfigure}[b]{0.24\textwidth}
        \includegraphics[width=\textwidth]{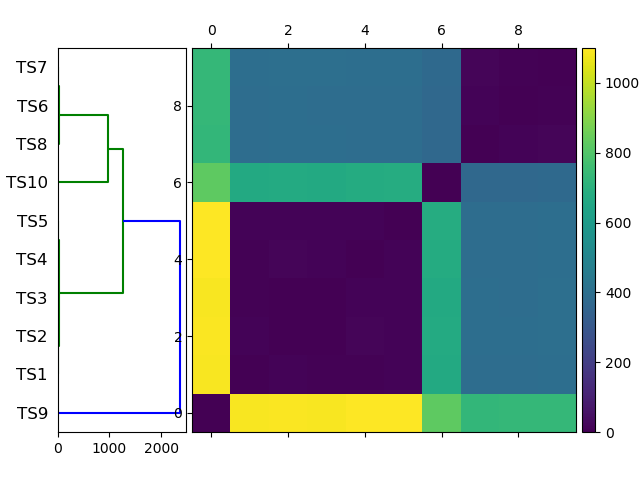}
        \caption{Hausdorff}
        \label{fig:NoHausdorffDendrogram}
    \end{subfigure}
        \begin{subfigure}[b]{0.24\textwidth}
        \includegraphics[width=\textwidth]{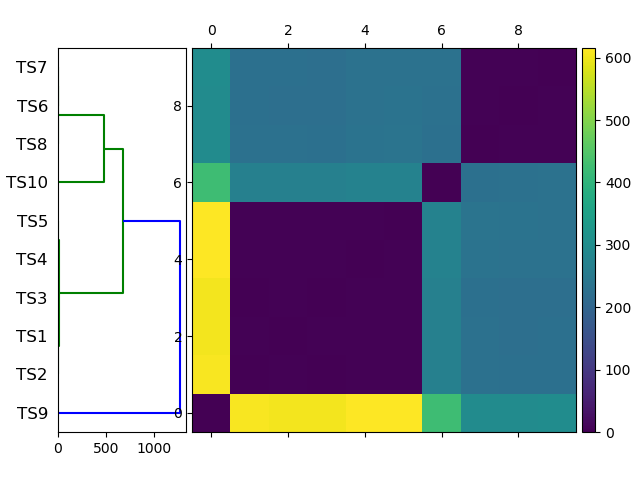}
        \caption{MH$_1$}
        \label{fig:No_MH1_Dendrogram}
    \end{subfigure}
        \begin{subfigure}[b]{0.24\textwidth}
        \includegraphics[width=\textwidth]{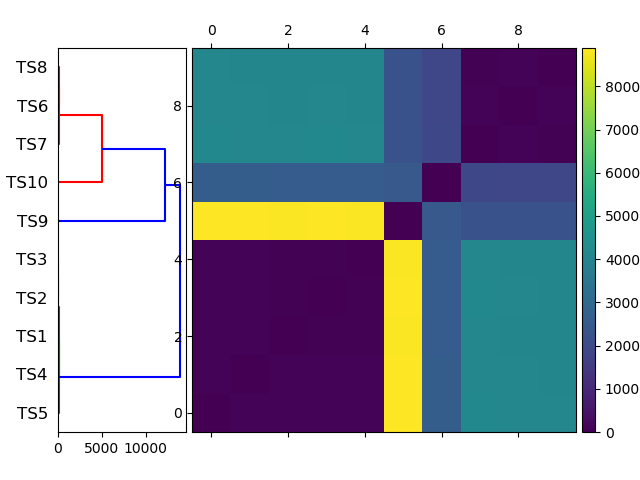}
        \caption{MH$_2$}
        \label{fig:No_MH2_Dendrogram}
    \end{subfigure}
        \begin{subfigure}[b]{0.24\textwidth}
        \includegraphics[width=\textwidth]{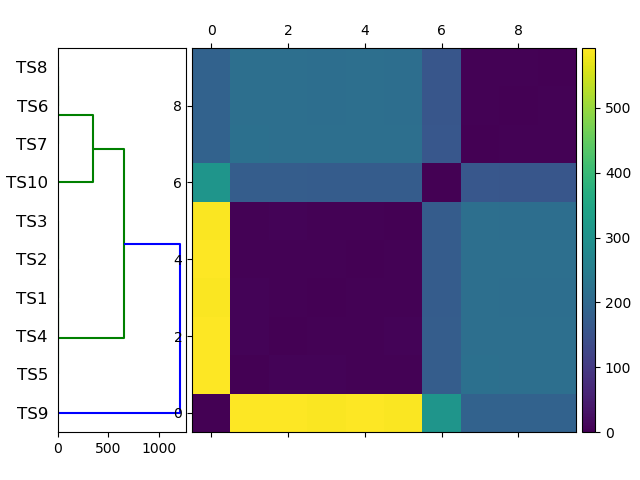}
        \caption{MH$_3$}
        \label{fig:No_MH3_Dendrogram}
    \end{subfigure}
    \begin{subfigure}[b]{0.24\textwidth}
        \includegraphics[width=\textwidth]{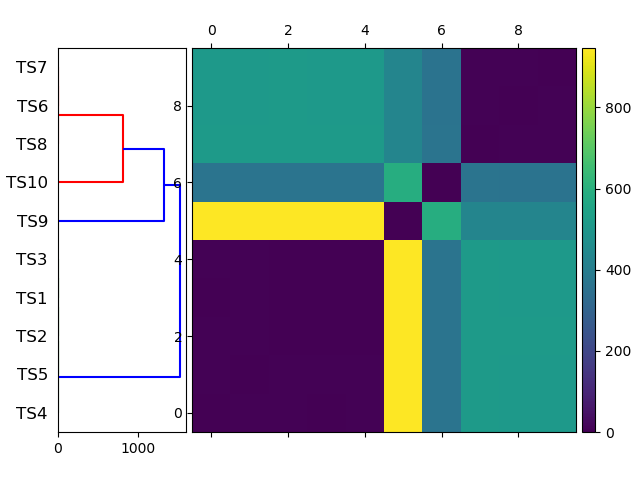}
        \caption{Wasserstein}
        \label{fig:No_Wasserstein_Dendrogram}
    \end{subfigure}
    \begin{subfigure}[b]{0.24\textwidth}
        \includegraphics[width=\textwidth]{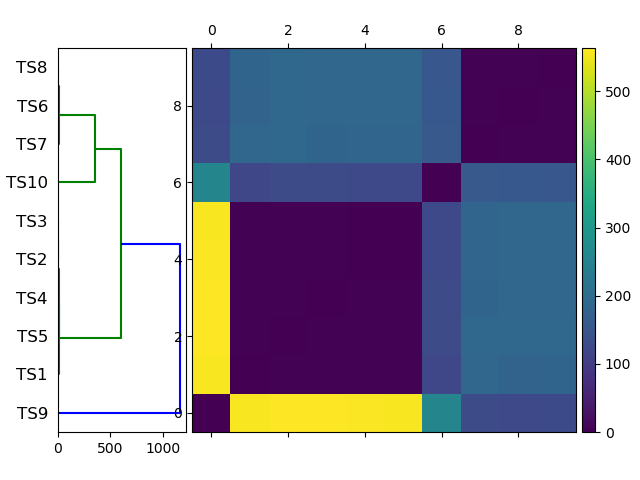}
        \caption{MJ$_{0.5}$}
        \label{fig:No_MJ05_Dendrogram}
    \end{subfigure}
        \begin{subfigure}[b]{0.24\textwidth}
        \includegraphics[width=\textwidth]{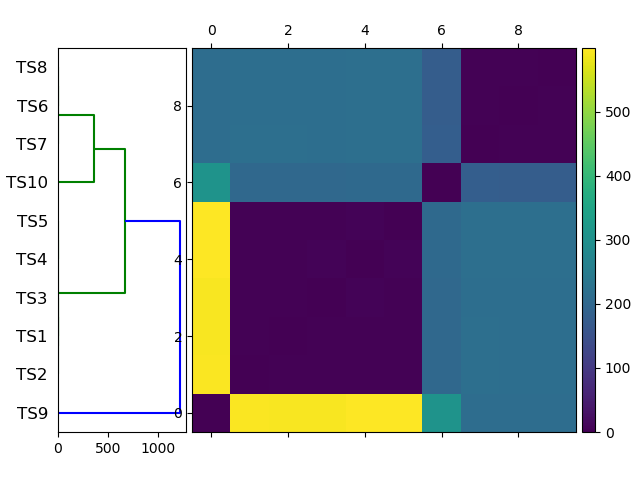}
        \caption{MJ$_1$}
        \label{fig:No_MJ1_Dendrogram}
    \end{subfigure}
    \begin{subfigure}[b]{0.24\textwidth}
        \includegraphics[width=\textwidth]{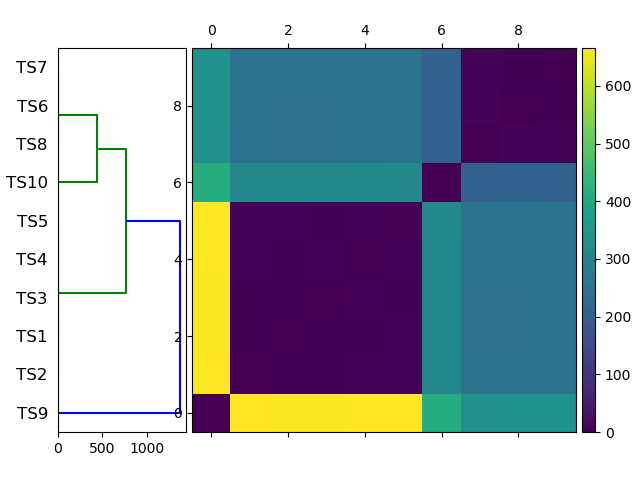}
        \caption{MJ$_2$ }
        \label{fig:No_MJ2_Dendrogram}
    \end{subfigure}
    \caption{Dendrogram analysis with no outliers}\label{fig:No_Dendrogram_Analysis}
\end{figure} 

We also analyse the transitivity, described in Section \ref{Transitivity analysis}, over all sets of change points for the eight semi-metrics in our analysis. As seen in Figure \ref{fig:No_Transitivity_Analysis}, the MJ$_{0.5}$ fails most significantly, with 51\% of potential triples failing, and an average fail ratio of 1.28. Both the MJ$_1$ and MJ$_2$ distances fail the triangle inequality in this simulation, with 4\% of elements in the matrix failing for the MJ$_1$ distance and 3\% of of elements failing for the MJ$_2$ distance. Of those elements that fail the triangle inequality, the average fail ratio is 1.32 and 1.14 for the MJ$_1$ and MJ$_2$ distances respectively. As expected, as $p$ increases, transitivity seems to improve. All of the modified Hausdorff distances fail the triangle inequality too. 8.5\% of MH$_1$ triples fail the triangle inequality, with an average fail ratio of 1.06. The MH$_2$ has a lower percentage of failed triples than the MH$_1$ with only 4\% failing, however those that do fail perform significantly worse, with an average fail ratio of 1.49. The MH$_3$ also has 4\% of triples fail, with a less severe average fail ratio of 1.41. So in this scenario the MJ$_{0.5}$ has the most failed triples by a significant margin, however the MH$_2$ has the highest fail ratio. This shows that MH$_2$ violates the triangle inequality most severely. The MJ$_1$ and MJ$_2$ perform better with the triangle inequality than the MH$_1$.

\begin{figure}[t]
    \centering
    \begin{subfigure}[b]{0.24\textwidth}
        \includegraphics[width=\textwidth]{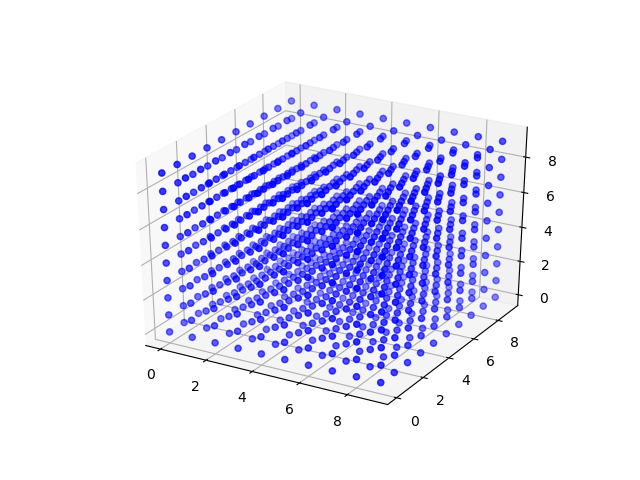}
        \caption{Hausdorff}
        \label{fig:NoHausdorffTransitivity}
    \end{subfigure}
        \begin{subfigure}[b]{0.24\textwidth}
        \includegraphics[width=\textwidth]{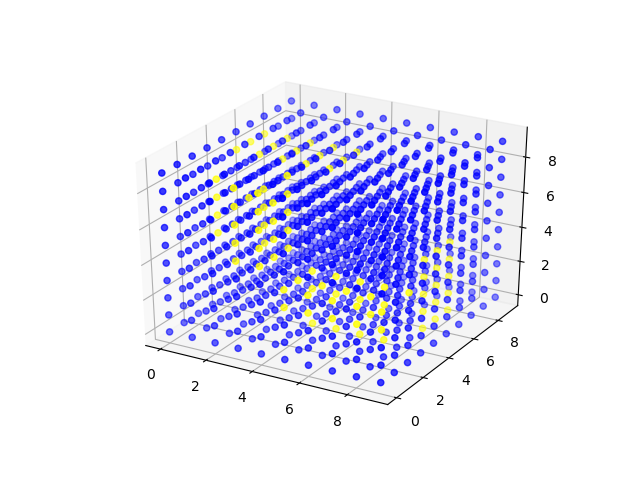}
        \caption{MH$_1$}
        \label{fig:No_MH1_Transitivity}
    \end{subfigure}
        \begin{subfigure}[b]{0.24\textwidth}
        \includegraphics[width=\textwidth]{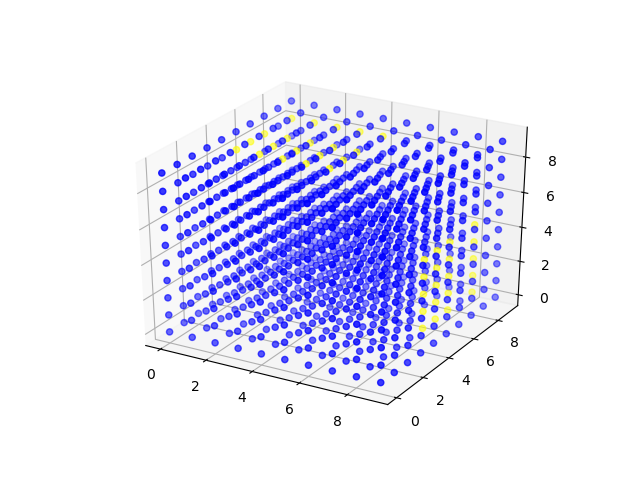}
        \caption{MH$_2$}
        \label{fig:No_MH2_Transitivity}
    \end{subfigure}
        \begin{subfigure}[b]{0.24\textwidth}
        \includegraphics[width=\textwidth]{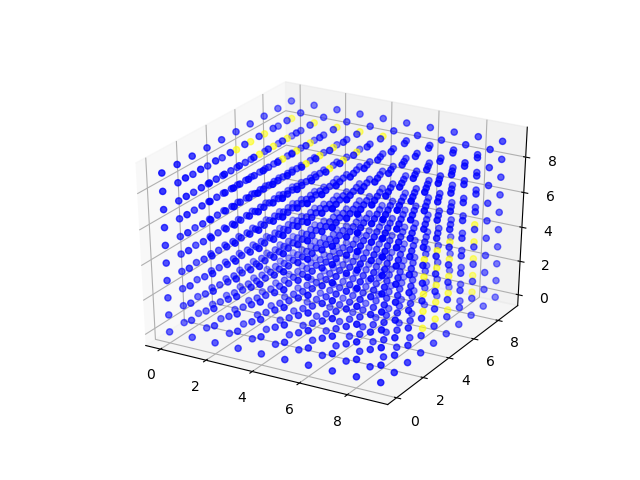}
        \caption{MH$_3$}
        \label{fig:No_MH3_Transitivity}
    \end{subfigure}
    \begin{subfigure}[b]{0.24\textwidth}
        \includegraphics[width=\textwidth]{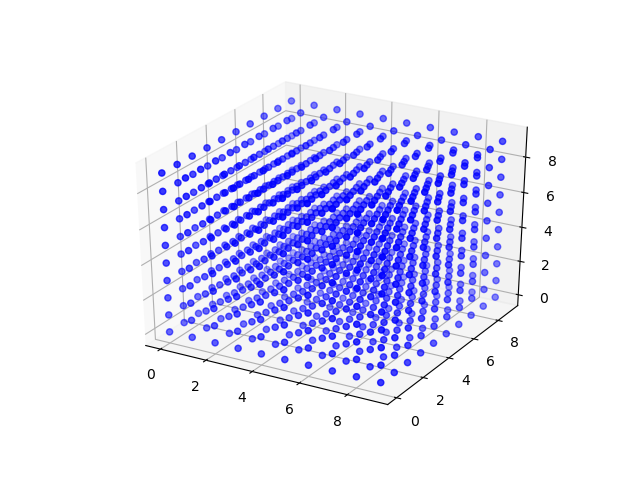}
        \caption{Wasserstein}
        \label{fig:No_Wasserstein_Transitivity}
    \end{subfigure}
    \begin{subfigure}[b]{0.24\textwidth}
        \includegraphics[width=\textwidth]{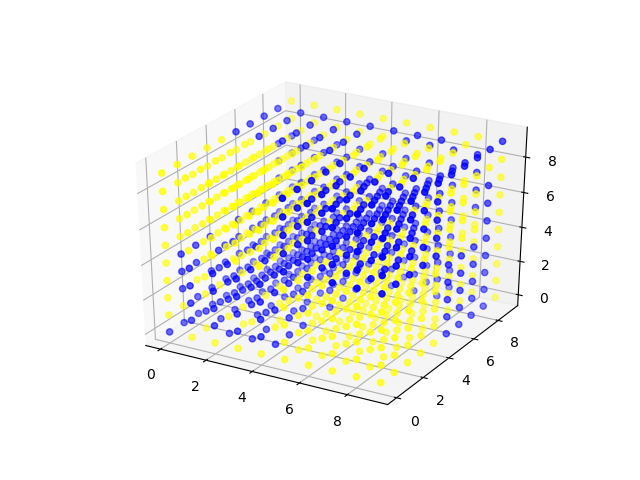}
        \caption{MJ$_{0.5}$}
        \label{fig:No_MJ05_Transitivity}
    \end{subfigure}
        \begin{subfigure}[b]{0.24\textwidth}
        \includegraphics[width=\textwidth]{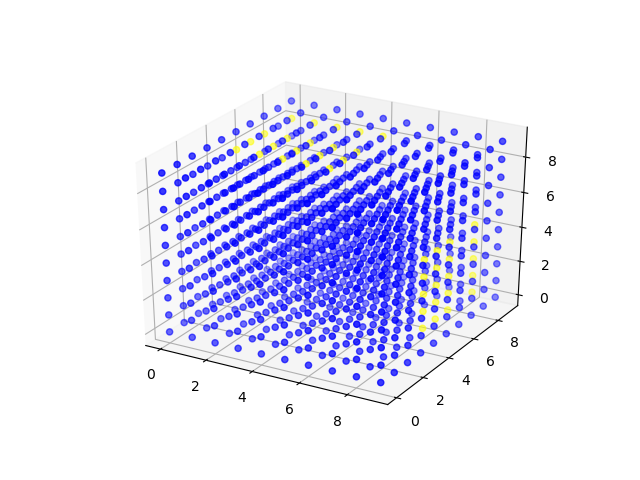}
        \caption{MJ$_1$}
        \label{fig:No_MJ1_Transitivity}
    \end{subfigure}
    \begin{subfigure}[b]{0.24\textwidth}
        \includegraphics[width=\textwidth]{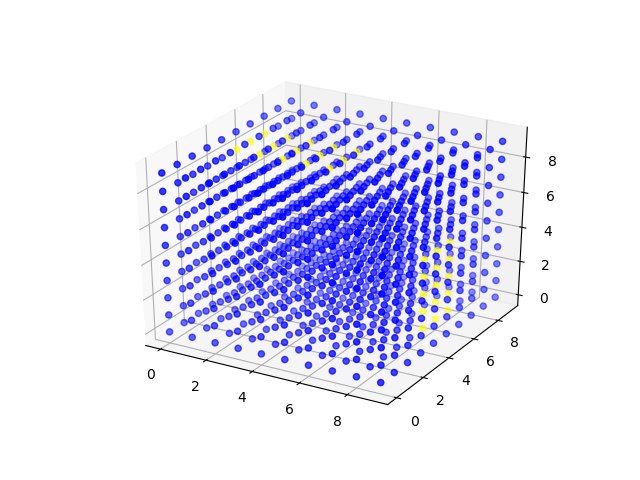}
        \caption{MJ$_2$ }
        \label{fig:No_MJ2_Transitivity}
    \end{subfigure}
    \caption{Transitivity analysis with no outliers}\label{fig:No_Transitivity_Analysis}
\end{figure}

\subsection{Simulation $2$: moderate change point outliers}

\begin{figure}[h]
    \centering
    \includegraphics[width=.48\textwidth]{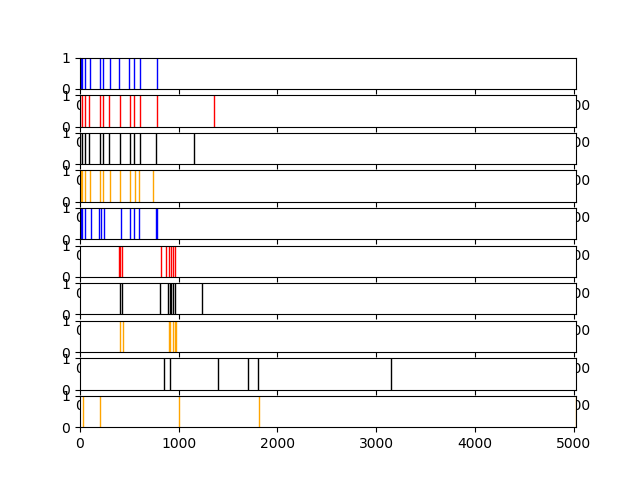}
    \caption{Time series of change points with moderate outliers}
    \label{fig:Collection of Time Series Change Points with several outliers}
\end{figure}

\begin{table}[h]
\tabcolsep 2pt
\begin{tabular}{ |p{2cm}||p{.85cm}|p{.85cm}|p{.85cm}|p{.85cm}|p{.85cm}|p{.85cm}|p{.85cm}|p{.85cm}|p{.85cm}|p{.85cm}|}
 \hline
 \multicolumn{11}{|c|}{Spectral clustering results} \\
 \hline
 Metric & TS1 & TS2 & TS3 & TS4 & TS5 & TS6 & TS7 & TS8 & TS9 & TS10\\
 \hline
 Hausdorff & 1 & 2 & 2 & 1 & 1 & 2 & 2 & 2 & 3 & 4\\
 MH$_1$ & 1 & 1 & 1 & 1 & 1 & 2 & 2 & 2 & 3 & 4\\
 MH$_2$ & 1 & 1 & 1 & 1 & 1 & 2 & 2 & 2 & 3 & 4\\
 MH$_3$ & 1 & 1 & 1 & 1 & 1 & 2 & 2 & 2 & 3 & 4\\
 Wasserstein & 1 & 2 & 3 & 2 & 4 & 2 & 2 & 2 & 2 & 2\\
 MJ$_{0.5}$ & 1 & 1 & 1 & 1 & 1 & 2 & 2 & 2 & 3 & 4\\
 MJ$_1$ & 1 & 1 & 1 & 1 & 1 & 2 & 2 & 2 & 3 & 4\\
 MJ$_2$ & 1 & 1 & 1 & 1 & 1 & 2 & 2 & 2 & 3 & 4 \\
 \hline	
\end{tabular}
\caption{Spectral clustering distance matrices with moderate outliers}
\label{tab:result_table_moderate}
\end{table}

Figure \ref{fig:Collection of Time Series Change Points with several outliers} shows ten simulated time series change points that exhibit outliers with moderate frequency and severity. This is a more realistic scenario than Figure \ref{fig:Collection of Time Series Change Points with (virtually) no outliers}, as outliers occur regularly when applying change point detection algorithms to multiple time series. Again, time series 1-5, 6-8, 9-10 should be identified as separate clusters.

Figure \ref{fig:Several_Eigenvalue_Analysis} displays the increasing absolute value of the eigenvalues. In this simulation, we see that that Hausdorff distance in Figure \ref{fig:SeveralHausdorffEigenvalues} has detected eight time series that are highly similar, and two that are dissimilar relative to the others. Other distance measures have identified the general similarity more appropriately. The MH$_1$, MH$_3$, MJ$_{0.5}$, MJ$_1$ and MJ$_2$ metrics in particular appear to produce sensible outputs.

\begin{figure}[h]
    \centering
    \begin{subfigure}[b]{0.24\textwidth}
        \includegraphics[width=\textwidth]{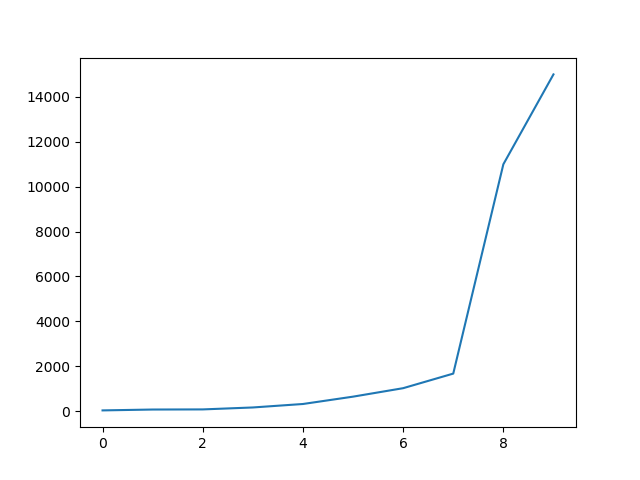}
        \caption{Hausdorff}
        \label{fig:SeveralHausdorffEigenvalues}
    \end{subfigure}
        \begin{subfigure}[b]{0.24\textwidth}
        \includegraphics[width=\textwidth]{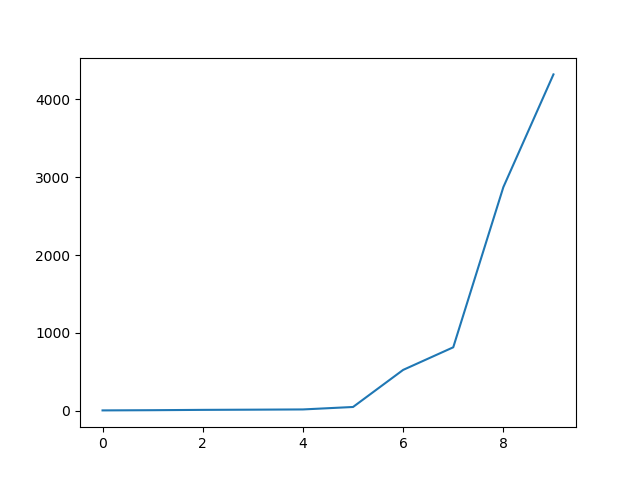}
        \caption{MH$_1$}
        \label{fig:Several_MH1_Eigenvalues}
    \end{subfigure}
        \begin{subfigure}[b]{0.24\textwidth}
        \includegraphics[width=\textwidth]{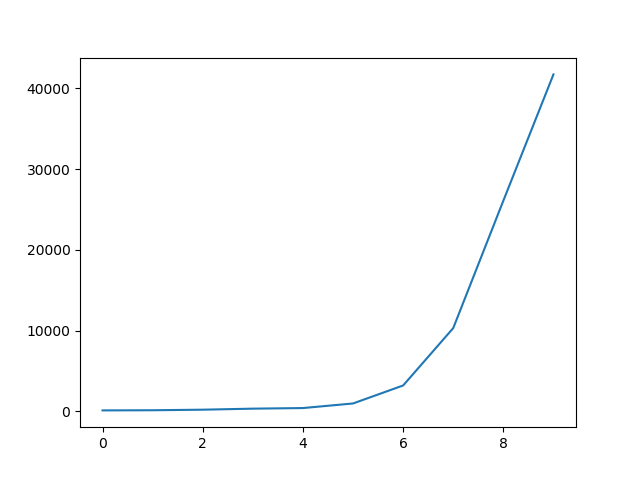}
        \caption{MH$_2$}
        \label{fig:Several_MH2_Eigenvalues}
    \end{subfigure}
        \begin{subfigure}[b]{0.24\textwidth}
        \includegraphics[width=\textwidth]{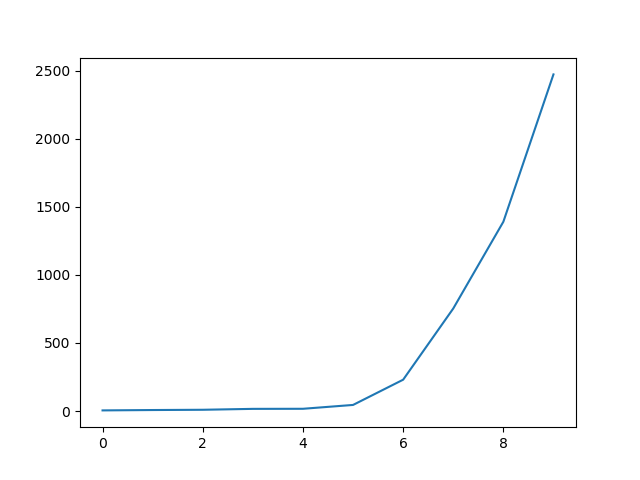}
        \caption{MH$_3$}
        \label{fig:Several_MH3_Eigenvalues}
    \end{subfigure}
    \begin{subfigure}[b]{0.24\textwidth}
        \includegraphics[width=\textwidth]{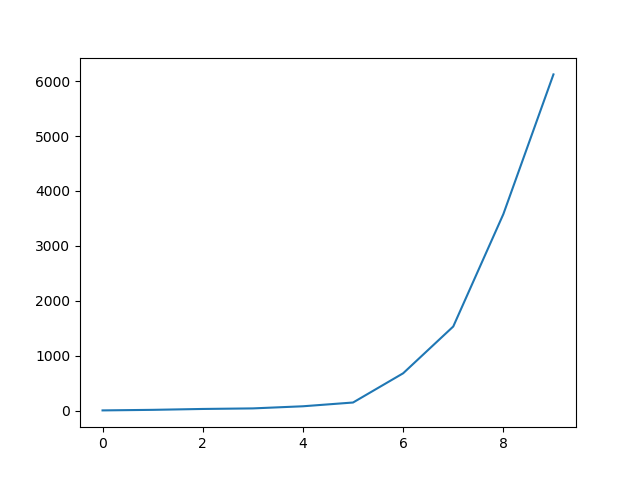}
        \caption{Wasserstein}
        \label{fig:Several_Wasserstein_Eigenvalues}
    \end{subfigure}
    \begin{subfigure}[b]{0.24\textwidth}
        \includegraphics[width=\textwidth]{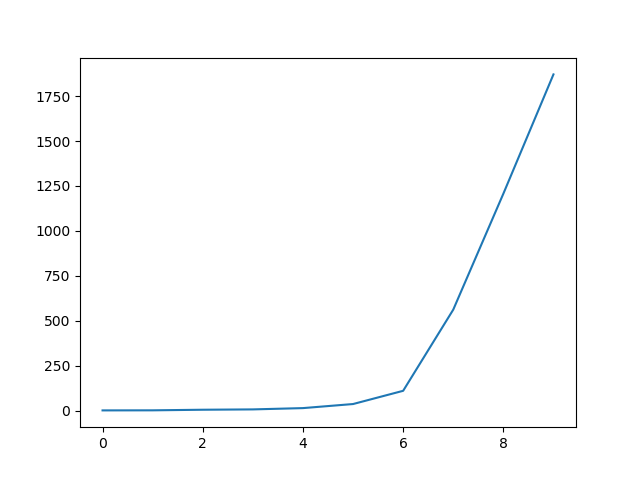}
        \caption{MJ$_{0.5}$}
        \label{fig:Several_MJ05_Eigenvalues}
    \end{subfigure}
        \begin{subfigure}[b]{0.24\textwidth}
        \includegraphics[width=\textwidth]{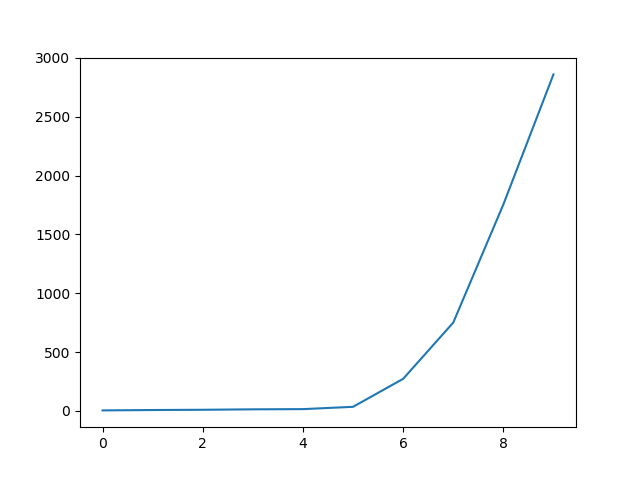}
        \caption{MJ$_1$}
        \label{fig:Several_MJ1_Eigenvalues}
    \end{subfigure}
    \begin{subfigure}[b]{0.24\textwidth}
        \includegraphics[width=\textwidth]{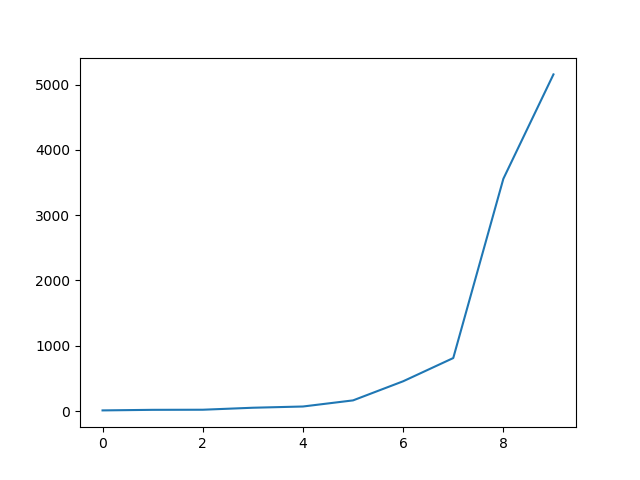}
        \caption{MJ$_2$}
        \label{fig:Several_MJ2_Eigenvalues}
    \end{subfigure}
    \caption{Eigenvalue analysis with moderate outliers}
    \label{fig:Several_Eigenvalue_Analysis}
\end{figure}

\begin{figure}[h]
    \centering
    \begin{subfigure}[b]{0.24\textwidth}
        \includegraphics[width=\textwidth]{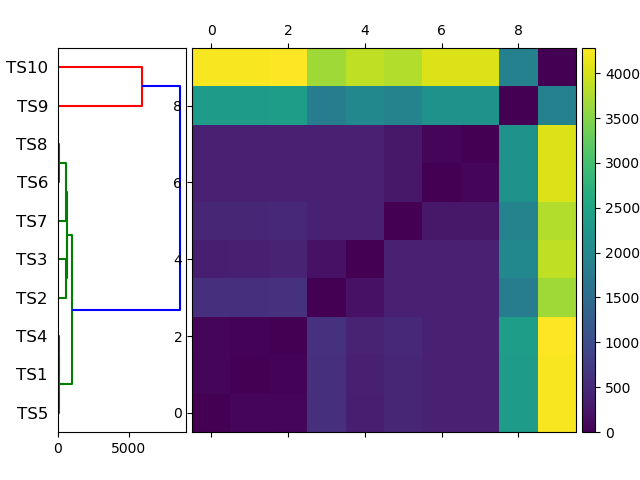}
        \caption{Hausdorff}
        \label{fig:SeveralHausdorffDendrogram}
    \end{subfigure}
        \begin{subfigure}[b]{0.24\textwidth}
        \includegraphics[width=\textwidth]{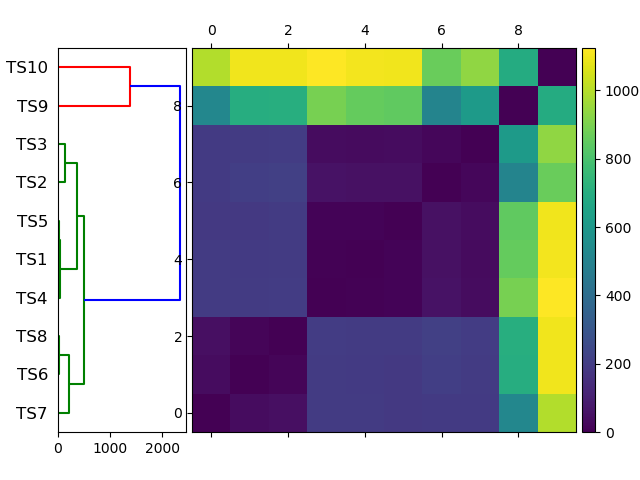}
        \caption{MH$_1$}
        \label{fig:Several_MH1_Dendrogram}
    \end{subfigure}
        \begin{subfigure}[b]{0.24\textwidth}
        \includegraphics[width=\textwidth]{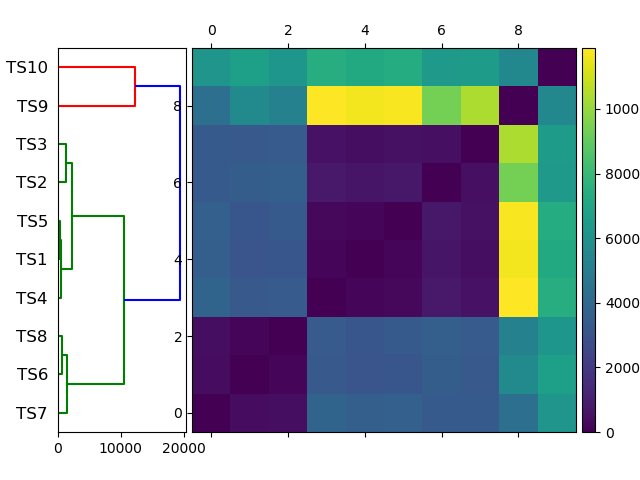}
        \caption{MH$_2$}
        \label{fig:Several_MH2_Dendrogram}
    \end{subfigure}
        \begin{subfigure}[b]{0.24\textwidth}
        \includegraphics[width=\textwidth]{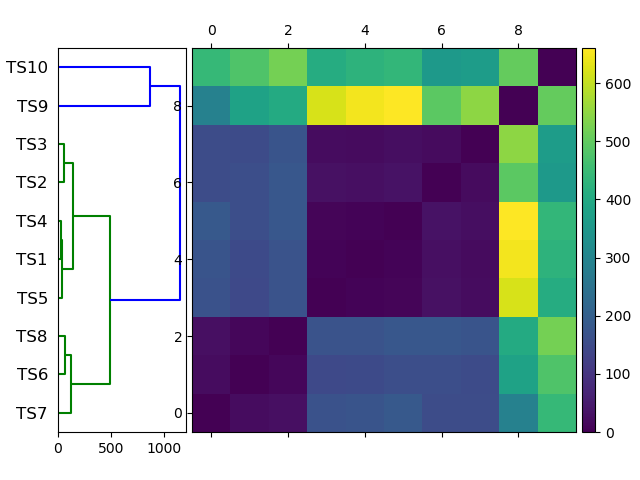}
        \caption{MH$_3$}
        \label{fig:Several_MH3_Dendrogram}
    \end{subfigure}
    \begin{subfigure}[b]{0.24\textwidth}
        \includegraphics[width=\textwidth]{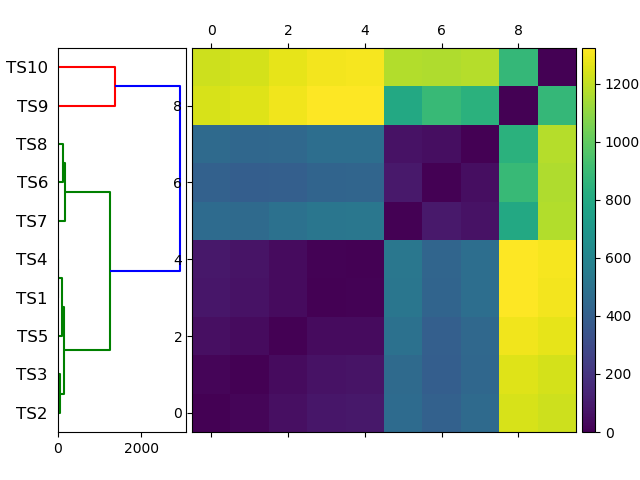}
        \caption{Wasserstein}
        \label{fig:Several_Wasserstein_Dendrogram}
    \end{subfigure}
    \begin{subfigure}[b]{0.24\textwidth}
        \includegraphics[width=\textwidth]{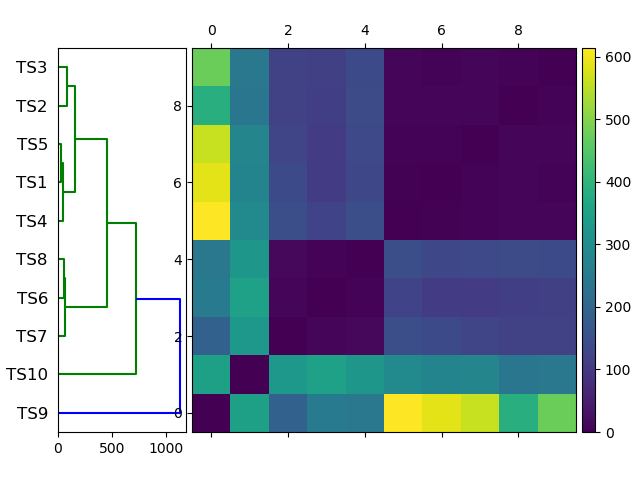}
        \caption{MJ$_{0.5}$}
        \label{fig:Several_MJ05_Dendrogram}
    \end{subfigure}
        \begin{subfigure}[b]{0.24\textwidth}
        \includegraphics[width=\textwidth]{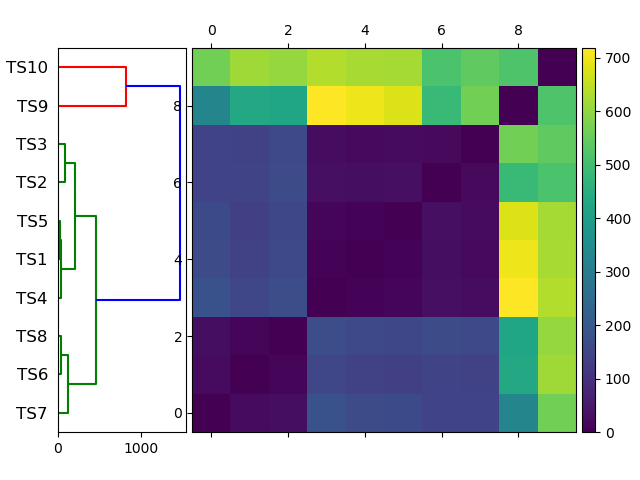}
        \caption{MJ$_1$}
        \label{fig:Several_MJ1_Dendrogram}
    \end{subfigure}
    \begin{subfigure}[b]{0.24\textwidth}
        \includegraphics[width=\textwidth]{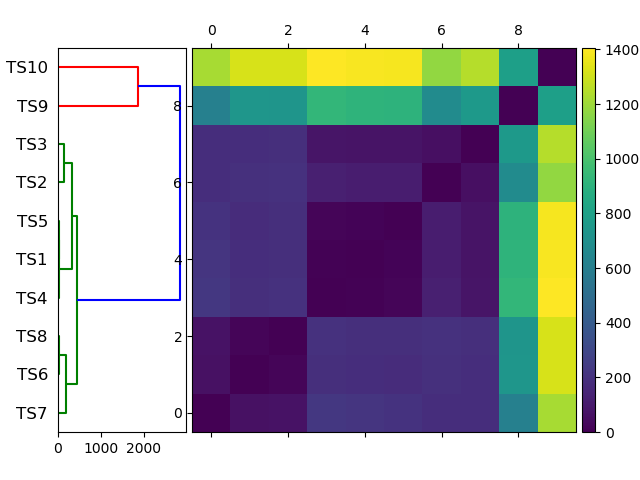}
        \caption{MJ$_2$}
        \label{fig:Several_MJ2_Dendrogram}
    \end{subfigure}
    \caption{Dendrogram analysis with moderate outliers}\label{fig:Several_Dendrogram_Analysis}
\end{figure}

The dendrograms displayed in Figure \ref{fig:Several_Dendrogram_Analysis} illustrate the Hausdorff distance's sensitivity to outliers. The remaining six distance measures correctly identify the general structure in the time series collection. That is, there are two separate clusters of highly similar time series and two unrelated time series. 

The spectral clustering results in Table \ref{tab:result_table_moderate} indicate that five of the seven distance measures correctly identified similar groupings of time series, namely, MH$_1$, MH$_2$, MH$_3$, MJ$_{0.5}$, MJ$_1$ and MJ$_2$ produced the correct groupings of time series. Once again, the Wasserstein distance, although producing eigenvalue and dendrogram outputs consistent with the successful distance measures, proposed inappropriate collections of time series within candidate clusters.

\begin{figure}[h]
    \centering
    \begin{subfigure}[b]{0.24\textwidth}
        \includegraphics[width=\textwidth]{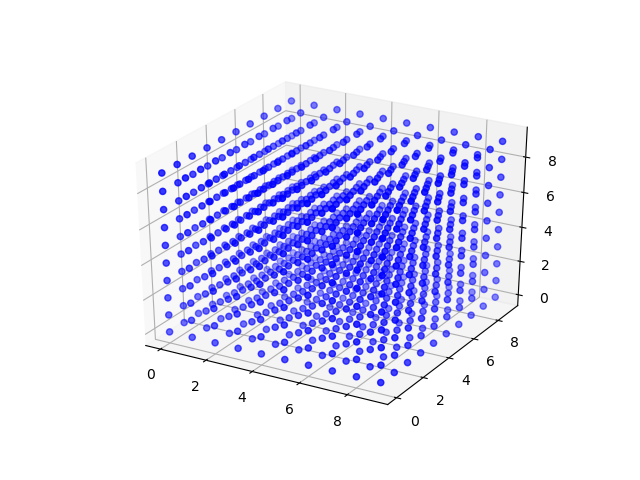}
        \caption{Hausdorff}
        \label{fig:SeveralHausdorffTransitivity}
    \end{subfigure}
        \begin{subfigure}[b]{0.24\textwidth}
        \includegraphics[width=\textwidth]{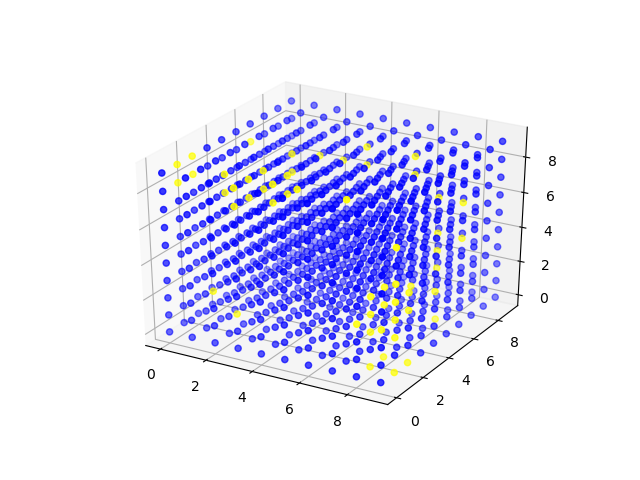}
        \caption{MH$_1$}
        \label{fig:Several_MH1_Transitivity}
    \end{subfigure}
        \begin{subfigure}[b]{0.24\textwidth}
        \includegraphics[width=\textwidth]{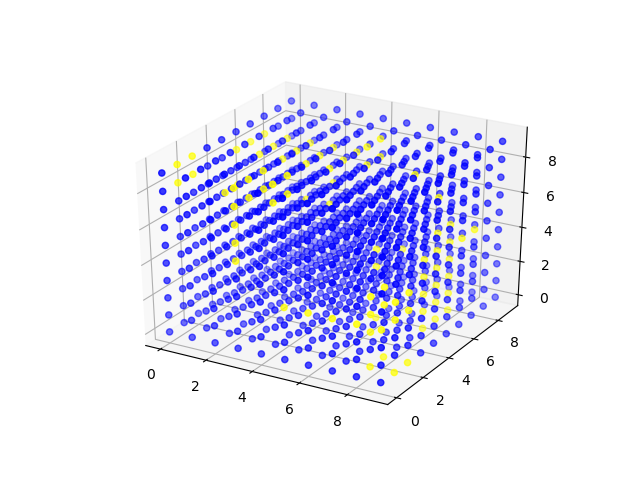}
        \caption{MH$_2$}
        \label{fig:Several_MH2_Transitivity}
    \end{subfigure}
        \begin{subfigure}[b]{0.24\textwidth}
        \includegraphics[width=\textwidth]{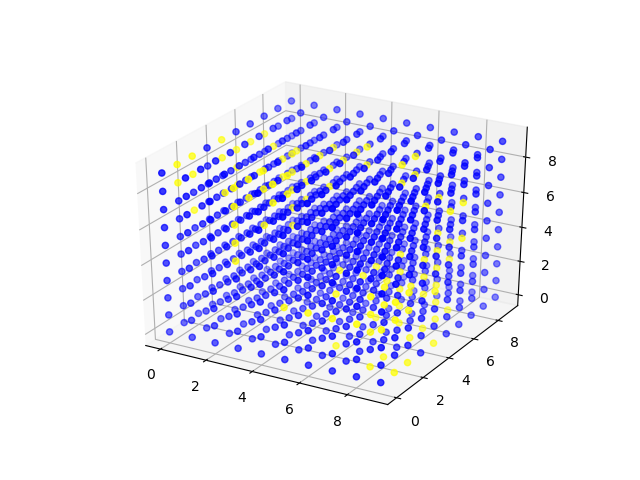}
        \caption{MH$_3$}
        \label{fig:Several_MH3_Transitivity}
    \end{subfigure}
    \begin{subfigure}[b]{0.24\textwidth}
        \includegraphics[width=\textwidth]{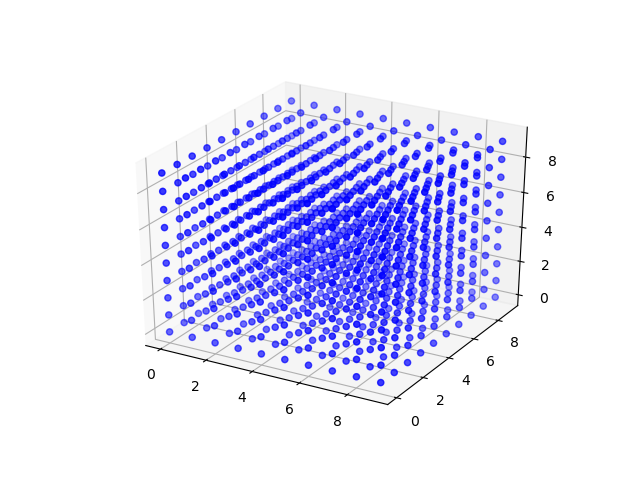}
        \caption{Wasserstein}
        \label{fig:Several_Wasserstein_Transitivity}
    \end{subfigure}
    \begin{subfigure}[b]{0.24\textwidth}
        \includegraphics[width=\textwidth]{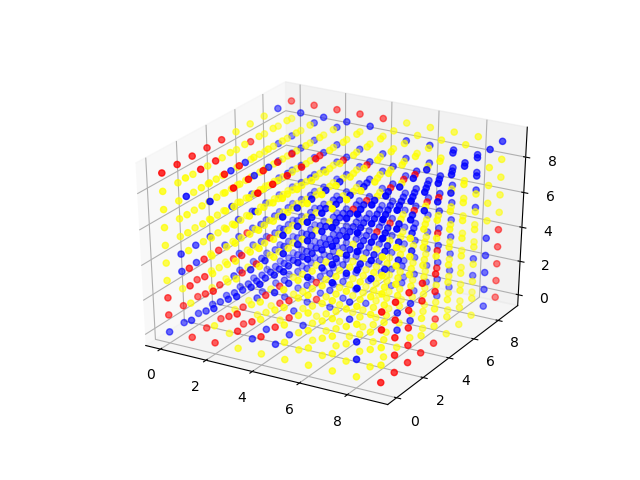}
        \caption{MJ$_{0.5}$}
        \label{fig:Several_MJ05_Transitivity}
    \end{subfigure}
        \begin{subfigure}[b]{0.24\textwidth}
        \includegraphics[width=\textwidth]{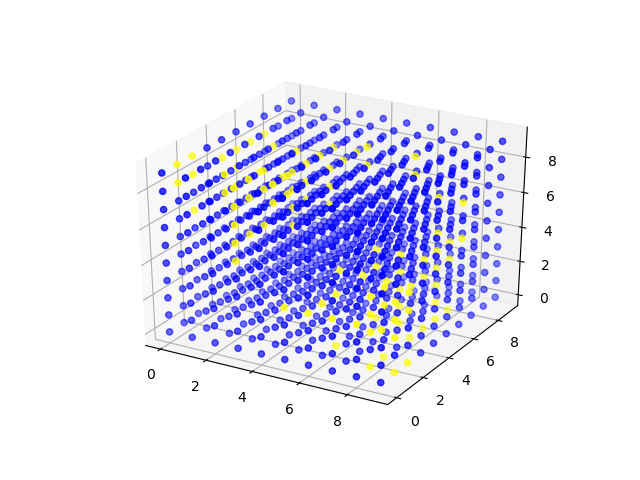}
        \caption{MJ$_1$}
        \label{fig:Several_MJ1_Transitivity}
    \end{subfigure}
    \begin{subfigure}[b]{0.24\textwidth}
        \includegraphics[width=\textwidth]{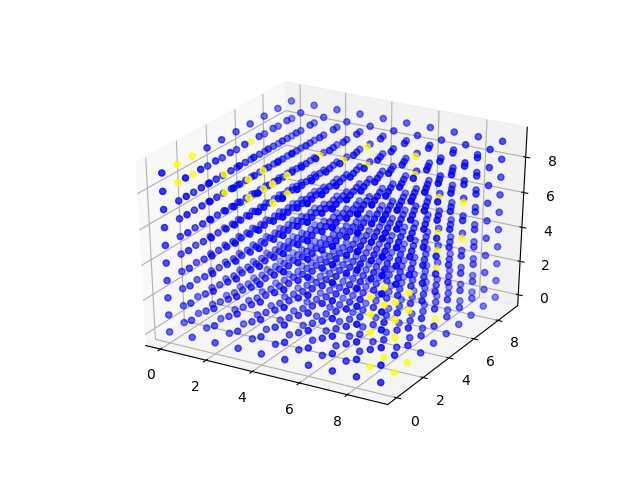}
        \caption{MJ$_2$}
        \label{fig:Several_MJ2_Transitivity}
    \end{subfigure}
    \caption{Transitivity analysis with moderate outliers}\label{fig:Several_Transitivity_Analysis}
\end{figure}

In the presence of moderate outliers, the MJ$_{0.5}$ violates the triangle inequality worst, both in terms of the percentage of failed triples of 58\% and the average fail ratio of 1.54. Both the MJ$_1$ and MJ$_2$ distances fail the triangle inequality (Figure \ref{fig:Several_Transitivity_Analysis}) too, with 10\% of potential distances failing the triangle inequality when using the MJ$_1$ distance and 5.6\% with the MJ$_2$ distance. The average fail ratio is 1.13 for the MJ$_1$ distance and 1.08 for the MJ$_2$ distance. For the modified Hausdorff distances, the MH$_1$ has the worst failed triple ratio (1.19) with 5.6\% of potential triples violating the triangle inequality.  For MH$_2$, 8.4\% of triples fail the triangle inequality, with an average fail of 1.14. For MH$_3$, 10.4\% fail with an average fail ratio of 1.11. In this scenario, the MJ$_{0.5}$ is the only semi-metric whose average fail ratio and percentage of failed triples may make the ratio unusable. This may however depend on the context of usage. 

\subsection{Simulation $3$: extreme change point outliers}

\begin{figure}[t]
    \centering
    \includegraphics[width=.48\textwidth]{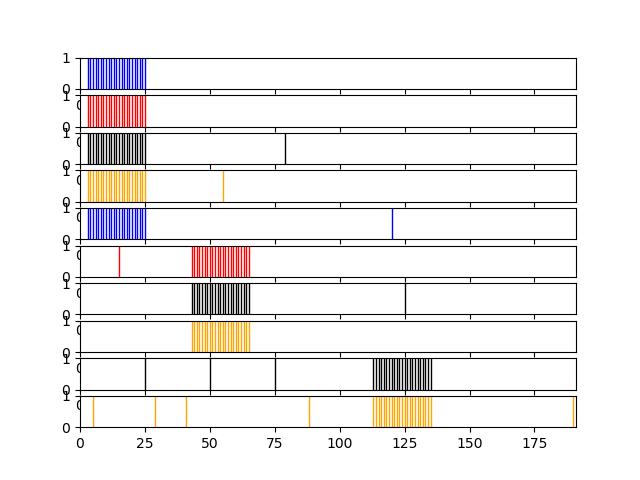}
    \caption{Time series of change points extreme outliers}
    \label{fig:Collection of Time Series Change Points with several extreme outliers}
\end{figure}

\begin{table}[t]
\tabcolsep 2pt
\begin{tabular}{ |p{2cm}||p{.85cm}|p{.85cm}|p{.85cm}|p{.85cm}|p{.85cm}|p{.85cm}|p{.85cm}|p{.85cm}|p{.85cm}|p{.85cm}|}
 \hline
 \multicolumn{11}{|c|}{Spectral clustering results} \\
 \hline
 Metric & TS1 & TS2 & TS3 & TS4 & TS5 & TS6 & TS7 & TS8 & TS9 & TS10\\
 \hline
 Hausdorff & 1 & 1 & 2 & 2 & 2 & 2 & 2 & 2 & 3 & 4\\
 MH$_1$ & 1 & 1 & 1 & 1 & 1 & 2 & 2 & 2 & 3 & 4\\
 MH$_2$ & 1 & 1 & 1 & 1 & 1 & 2 & 2 & 2 & 3 & 4\\
 MH$_3$ & 1 & 1 & 1 & 1 & 1 & 2 & 2 & 2 & 4 & 3\\
 Wasserstein & 1 & 1 & 2 & 3 & 1 & 4 & 1 & 1 & 1 & 1\\
 MJ$_{0.5}$ & 1 & 1 & 1 & 1 & 1 & 2 & 2 & 2 & 3 & 4\\
 MJ$_1$ & 1 & 1 & 1 & 1 & 1 & 2 & 2 & 2 & 3 & 4\\
 MJ$_2$ & 1 & 1 & 1 & 1 & 1 & 2 & 2 & 2 & 3 & 4 \\
 \hline	 
\end{tabular}
\caption{Spectral clustering distance matrices with extreme outliers}
\label{tab:result_table_worst}
\end{table}

Figure \ref{fig:Collection of Time Series Change Points with several extreme outliers} displays the third collection of time series' change points, where we analyse the effects of extreme outliers on candidate distance measures. This is a highly contrived scenario in our application of change point detection. First, change point algorithms typically require a minimum number of points within locally stationary segments, while in this scenario, all of our time series have change points in succession and several time series have an extreme outlier. The purpose of this scenario is to highlight measures that do not perform well in the case of extreme outliers, to identify measures that fail the triangle inequality and to investigate how badly they fail the triangle inequality.

The presence of outliers certainly impacts most of the distance measures when analysing the magnitude of the eigenvalues (seen in Figure \ref{fig:Worst_Eigenvalue_Analysis}). The Hausdorff distance in particular (Figure \ref{fig:WorstHausdorffEigenvalues}) fails to identify the similarity in the first five time series (1-5). Interestingly, the MH$_1$ does not display the appropriate degree of similarity in the first five time series, while the MH$_2$ and MH$_3$ measures produce plots that indicate similarity among the change point sets more appropriately. The Wasserstein distance in Figure \ref{fig:Worst_Wasserstein_Eigenvalues} and MJ$_1$ in Figure \ref{fig:Worst_MJ1_Eigenvalues} both produce outputs consistent with the simulation. Both the MJ$_{0.5}$ and MJ$_2$ also produce appropriate outputs.

\begin{figure}[t]
    \centering
    \begin{subfigure}[b]{0.24\textwidth}
        \includegraphics[width=\textwidth]{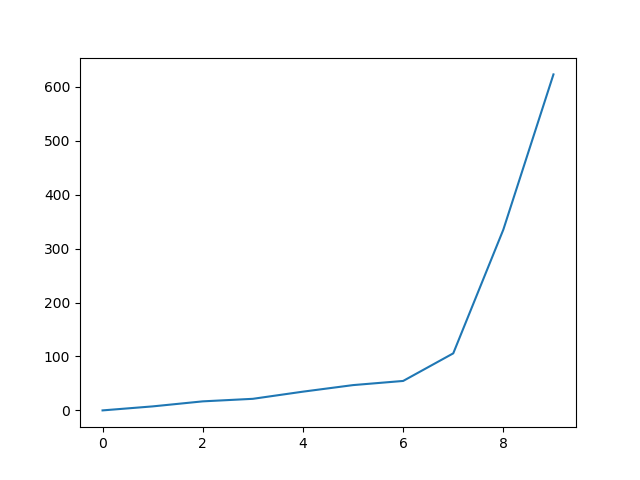}
        \caption{Hausdorff}
        \label{fig:WorstHausdorffEigenvalues}
    \end{subfigure}
        \begin{subfigure}[b]{0.24\textwidth}
        \includegraphics[width=\textwidth]{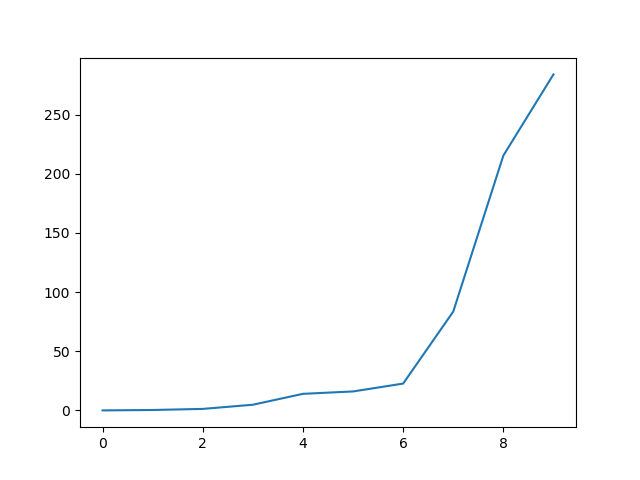}
        \caption{MH$_1$ }
        \label{fig:Worst_MH1_Eigenvalues}
    \end{subfigure}
        \begin{subfigure}[b]{0.24\textwidth}
        \includegraphics[width=\textwidth]{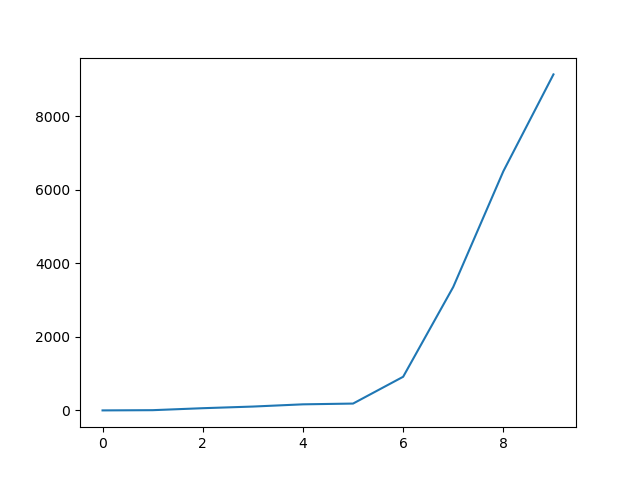}
        \caption{MH$_2$}
        \label{fig:Worst_MH2_Eigenvalues}
    \end{subfigure}
        \begin{subfigure}[b]{0.24\textwidth}
        \includegraphics[width=\textwidth]{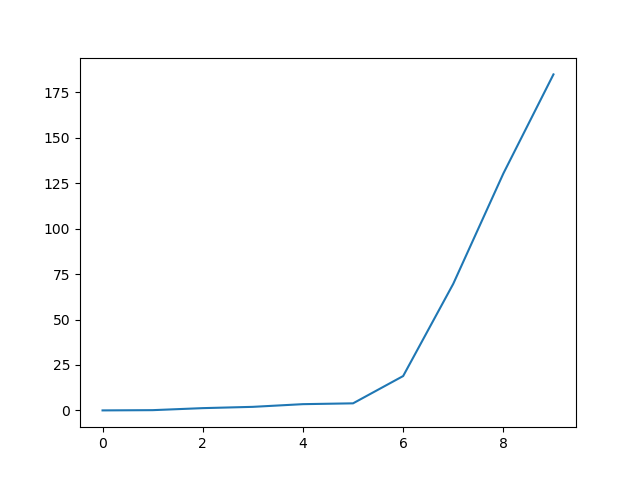}
        \caption{MH$_3$}
        \label{fig:Worst_MH3_Eigenvalues}
    \end{subfigure}
    \begin{subfigure}[b]{0.24\textwidth}
        \includegraphics[width=\textwidth]{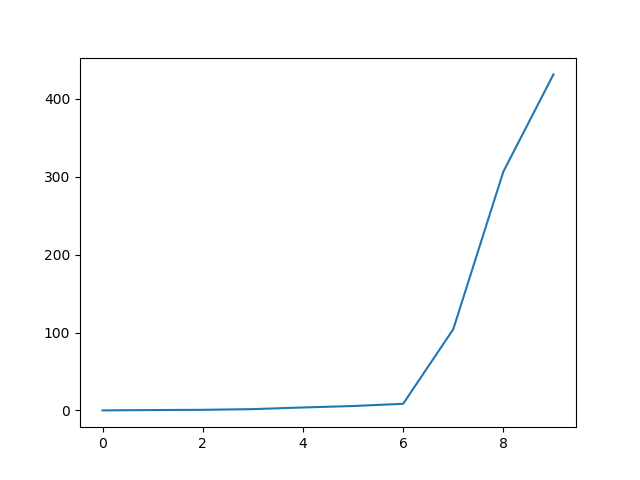}
        \caption{Wasserstein}
        \label{fig:Worst_Wasserstein_Eigenvalues}
    \end{subfigure}
    \begin{subfigure}[b]{0.24\textwidth}
        \includegraphics[width=\textwidth]{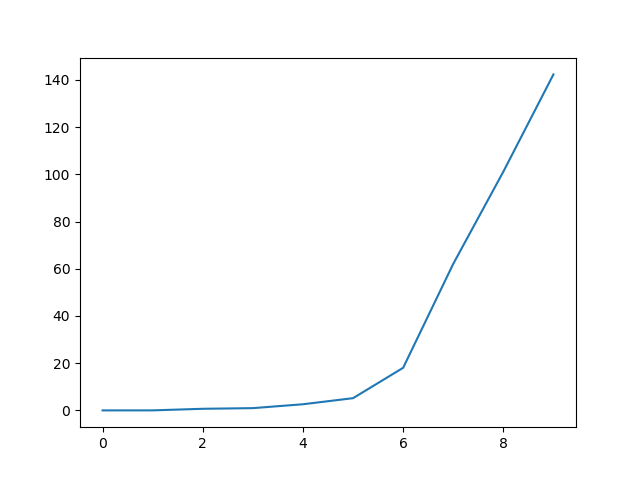}
        \caption{MJ$_{0.5}$}
        \label{fig:Worst_MJ05_Eigenvalues}
    \end{subfigure}
        \begin{subfigure}[b]{0.24\textwidth}
        \includegraphics[width=\textwidth]{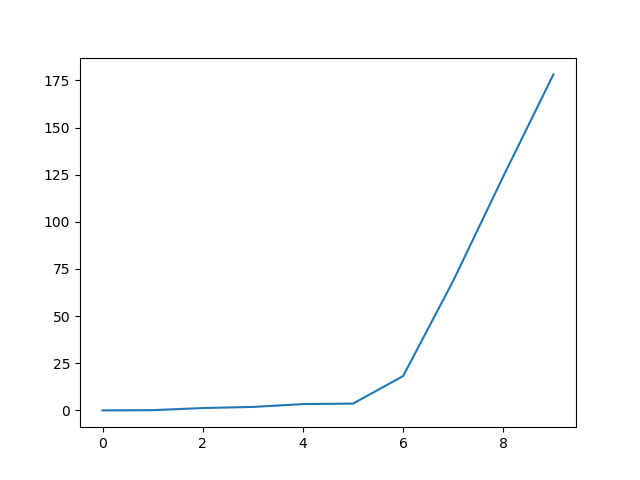}
        \caption{MJ$_1$}
        \label{fig:Worst_MJ1_Eigenvalues}
    \end{subfigure}
    \begin{subfigure}[b]{0.24\textwidth}
        \includegraphics[width=\textwidth]{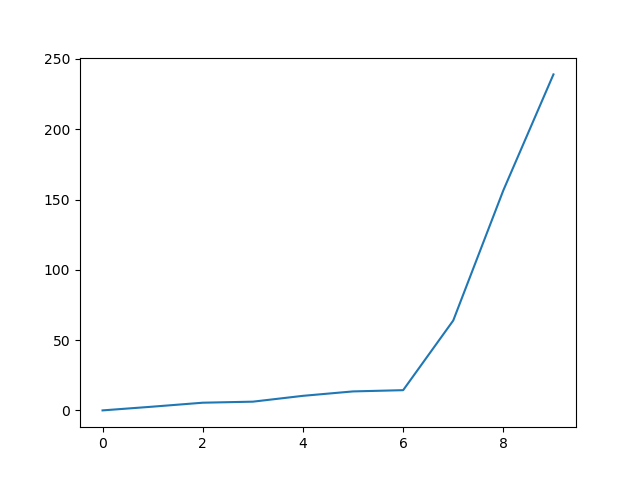}
        \caption{MJ$_2$}
        \label{fig:Worst_MJ2_Eigenvalues}
    \end{subfigure}
    \caption{Eigenvalue analysis with extreme outliers}\label{fig:Worst_Eigenvalue_Analysis}
\end{figure}

The dendrograms displayed in Figure \ref{fig:Worst_Dendrogram_Analysis} indicate that the Hausdorff distance performs worst. The heat map fails to identify appropriate similarity among the time series. The MH$_1$, MH$_2$, MH$_3$, MJ$_{0.5}$ and MJ$_1$ distances produce heat maps that are most representative of the true similarity in the set. Interestingly, the MJ$_2$ (Figure \ref{fig:Worst_MJ2_Dendrogram}) distance does not perform as well as the MJ$_1$ or MJ$_{0.5}$ (Figure \ref{fig:Worst_MJ1_Dendrogram}) in this scenario, perhaps due to the higher order of $p$. That is, the lower orders of $p$ provide stronger geometric mean averaging. In particular, the MJ$_2$ distance has particular difficulty distinguishing between clusters 2, 3 and 4, which should contain time series 6-8, 9 and 10 respectively. 

Spectral clustering highlights that the MH$_1$, MH$_2$, MH$_3$, MJ$_{0.5}$, MJ$_1$ and MJ$_2$ correctly identify clusters of similar time series. Again, both the Hausdorff and Wasserstein metrics do not perform well, as seen in Table \ref{tab:result_table_worst}. The Hausdorff metric in particular has severe sensitivity to outliers.

\begin{figure}[t]
    \centering
    \begin{subfigure}[b]{0.24\textwidth}
        \includegraphics[width=\textwidth]{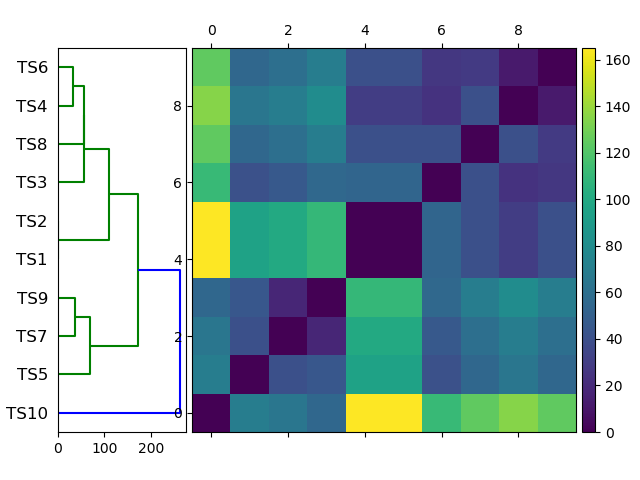}
        \caption{Hausdorff}
        \label{fig:WorstHausdorffDendrogram}
    \end{subfigure}
        \begin{subfigure}[b]{0.24\textwidth}
        \includegraphics[width=\textwidth]{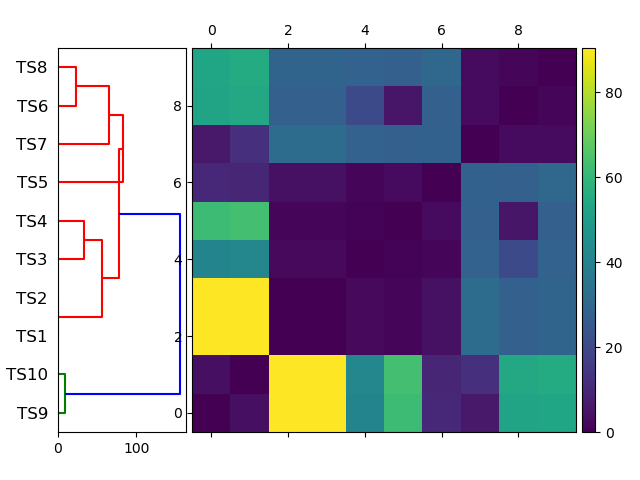}
        \caption{MH$_1$}
        \label{fig:Worst_MH1_Dendrogram}
    \end{subfigure}
        \begin{subfigure}[b]{0.24\textwidth}
        \includegraphics[width=\textwidth]{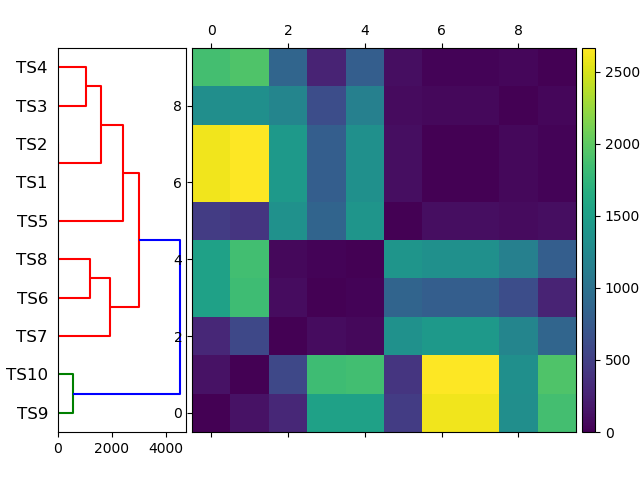}
        \caption{MH$_2$}
        \label{fig:Worst_MH2_Dendrogram}
    \end{subfigure}
        \begin{subfigure}[b]{0.24\textwidth}
        \includegraphics[width=\textwidth]{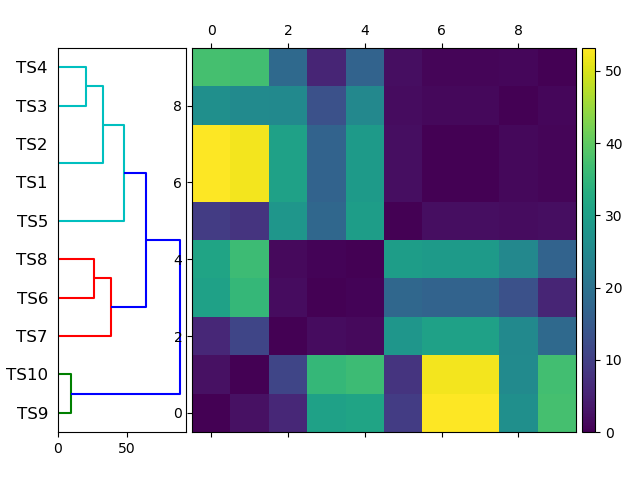}
        \caption{MH$_3$}
        \label{fig:Worst_MH3_Dendrogram}
    \end{subfigure}
    \begin{subfigure}[b]{0.24\textwidth}
        \includegraphics[width=\textwidth]{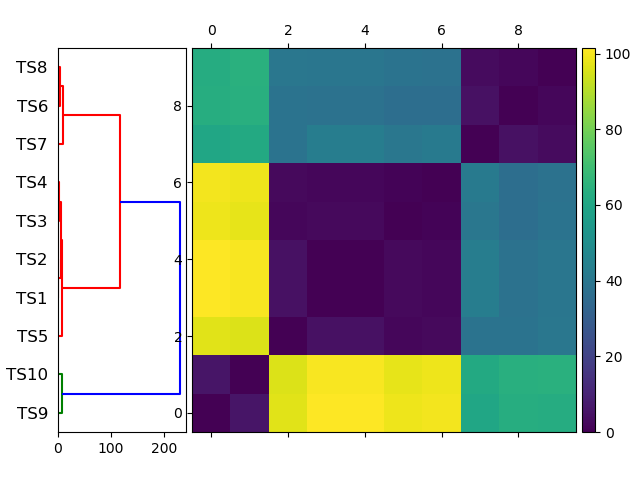}
        \caption{Wasserstein}
        \label{fig:Worst_Wasserstein_Dendrogram}
    \end{subfigure}
    \begin{subfigure}[b]{0.24\textwidth}
        \includegraphics[width=\textwidth]{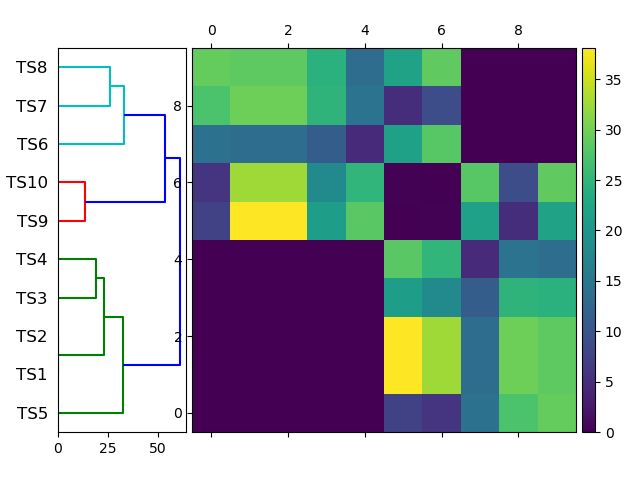}
        \caption{MJ$_{0.5}$}
        \label{fig:Worst_MJ05_Dendrogram}
    \end{subfigure}
        \begin{subfigure}[b]{0.24\textwidth}
        \includegraphics[width=\textwidth]{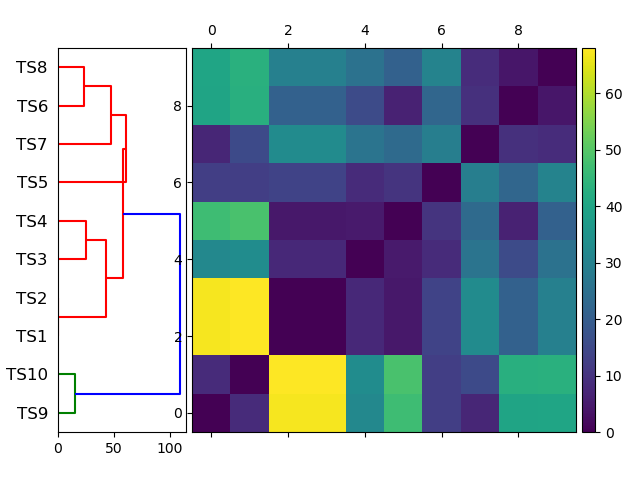}
        \caption{MJ$_1$}
        \label{fig:Worst_MJ1_Dendrogram}
    \end{subfigure}
    \begin{subfigure}[b]{0.24\textwidth}
        \includegraphics[width=\textwidth]{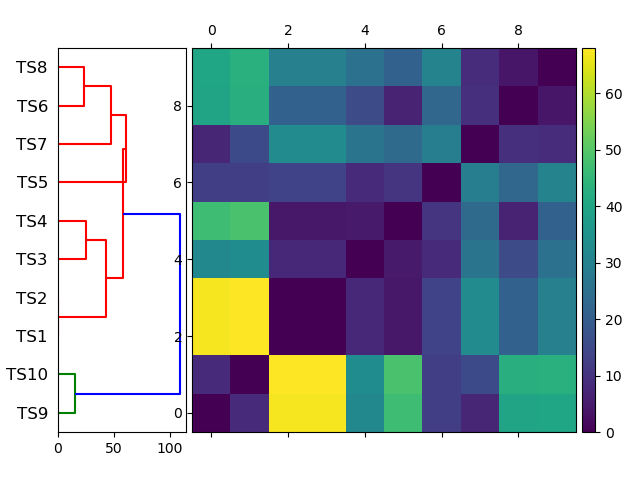}
        \caption{MJ$_2$}
        \label{fig:Worst_MJ2_Dendrogram}
    \end{subfigure}
    \caption{Dendrogram analysis extreme outliers}\label{fig:Worst_Dendrogram_Analysis}
\end{figure}

The transitivity analysis under this (contrived) scenario of extreme outliers demonstrates that the MJ$_{0.5}$ may be unusable, with 47.7\% of triples failing the triangle inequality and an average fail ratio of 8.63. The MJ$_1$ and MJ$_2$ distances have triples that fail the triangle inequality, but significantly less than MJ$_{0.5}$. In the case of the MJ$_1$ distance, 14.2\% of triples fail the triangle inequality, with an average fail ratio of 1.86. In the case of the MJ$_2$ distance, 11\% of triples fail the triangle inequality with an average fail ratio of 1.45. 14.4\% of MH$_1$ triples fail, with an average fail ratio of 2.18; 15\% of MH$_2$ triples fail, with an average fail ratio of 1.85; 15\% of MH$_3$ triples fail, with an average fail ratio of 1.86 (see Figure \ref{fig:Worst_Transitivity_Analysis}).

\begin{figure}[t]
    \centering
    \begin{subfigure}[b]{0.24\textwidth}
        \includegraphics[width=\textwidth]{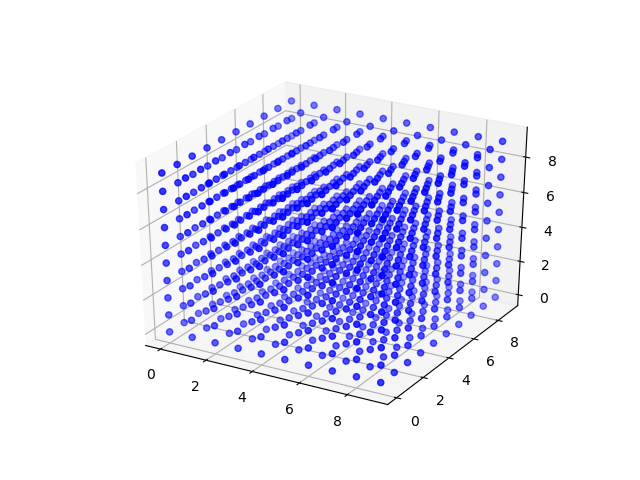}
        \caption{Hausdorff}
        \label{fig:WorstHausdorffTransitivity}
    \end{subfigure}
        \begin{subfigure}[b]{0.24\textwidth}
        \includegraphics[width=\textwidth]{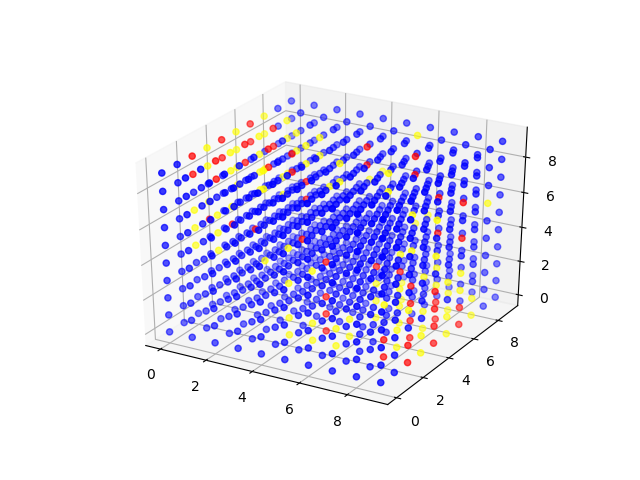}
        \caption{MH$_1$}
        \label{fig:Worst_MH1_Transitivity}
    \end{subfigure}
        \begin{subfigure}[b]{0.24\textwidth}
        \includegraphics[width=\textwidth]{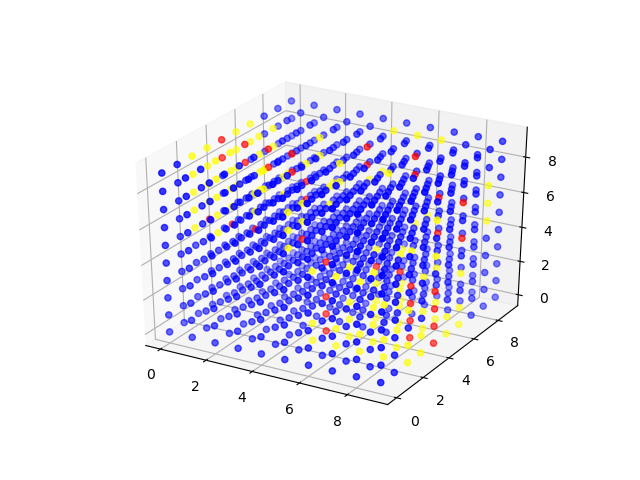}
        \caption{MH$_2$}
        \label{fig:Worst_MH2_Transitivity}
    \end{subfigure}
        \begin{subfigure}[b]{0.24\textwidth}
        \includegraphics[width=\textwidth]{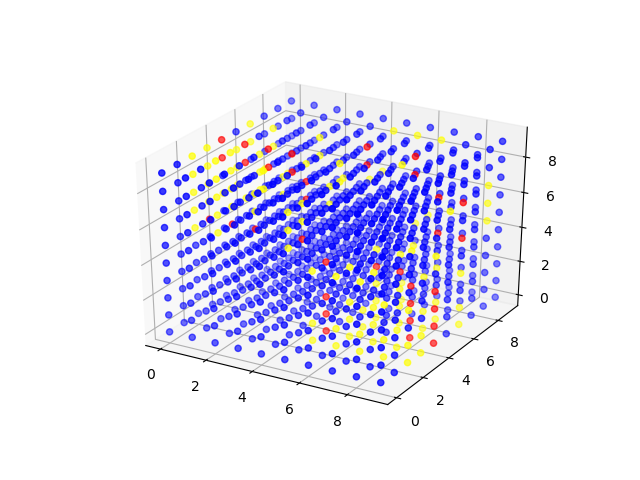}
        \caption{MH$_3$}
        \label{fig:Worst_MH3_Transitivity}
    \end{subfigure}
    \begin{subfigure}[b]{0.24\textwidth}
        \includegraphics[width=\textwidth]{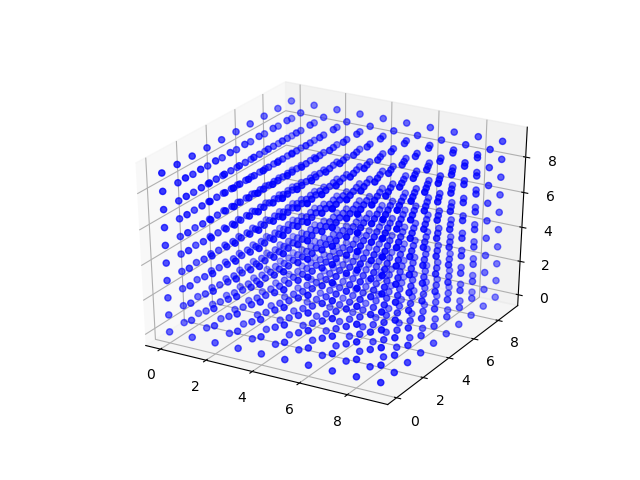}
        \caption{Wasserstein}
        \label{fig:Worst_Wasserstein_Transitivity}
    \end{subfigure}
    \begin{subfigure}[b]{0.24\textwidth}
        \includegraphics[width=\textwidth]{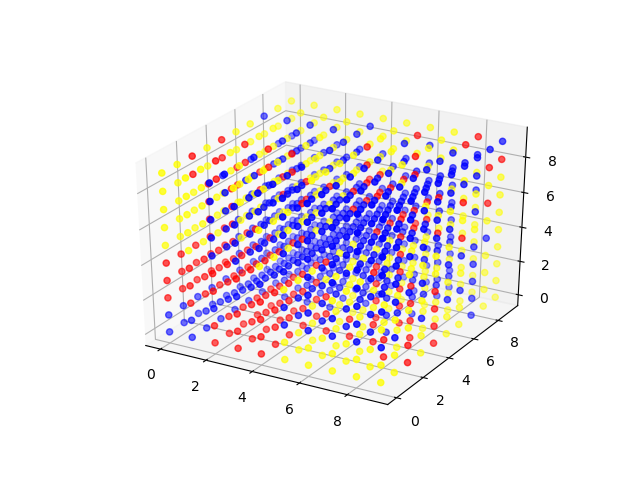}
        \caption{MJ$_{0.5}$}
        \label{fig:Worst_MJ05_Transitivity}
    \end{subfigure}
        \begin{subfigure}[b]{0.24\textwidth}
        \includegraphics[width=\textwidth]{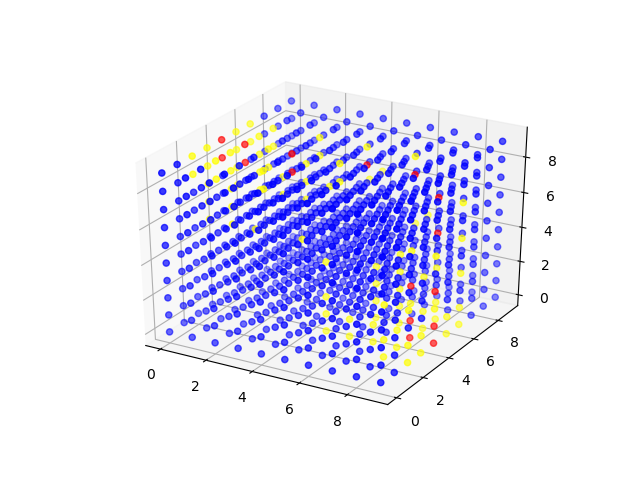}
        \caption{MJ$_1$}
        \label{fig:Worst_MJ1_Transitivity}
    \end{subfigure}
    \begin{subfigure}[b]{0.24\textwidth}
        \includegraphics[width=\textwidth]{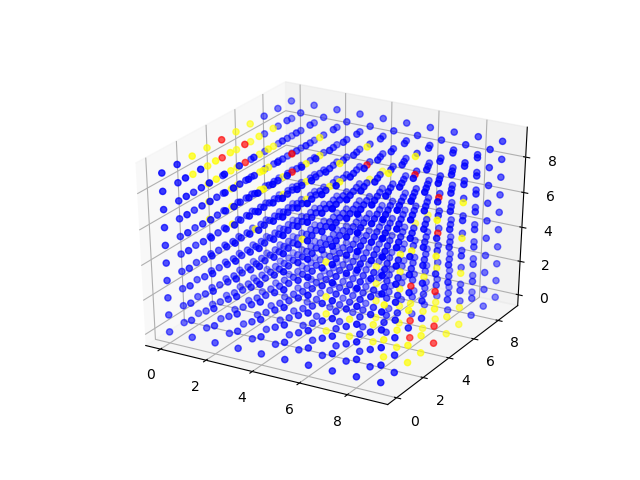}
        \caption{MJ$_2$}
        \label{fig:Worst_MJ2_Transitivity}
    \end{subfigure}
    \caption{Transitivity analysis extreme outliers}\label{fig:Worst_Transitivity_Analysis}
\end{figure}

\subsection{Summary of simulations}
The MJ$_1$, MJ$_2$ and modified Hausdorff semi-metrics all display an improvement over the traditional Hausdorff and Wasserstein metrics with regards to similarity and anomaly detection among collections of time series. The MJ$_p$ family does appear to provide an improvement over the modified Hausdorff semi-metrics in terms of inference. There are various cases where the MH$_1$, MH$_2$ or MH$_3$ exhibit errors in clustering experiments. Additionally the proportion of failed triples and severity of fails is not prohibitively worse in any of the scenarios explored. The order of $p$ in the MJ$_p$ semi-metrics has a significant impact on transitivity properties. The MJ$_{0.5}$ is clearly the worst violator of the triangle inequality. With this in mind, recalling Section \ref{new semi metrics section} and \cite{Dubuisson1994N} we again differ with Jain and Dubuisson's conclusions that MH$_1$ is the best for image matching. Both the MJ$_1$ and MJ$_2$ have advantages over MH$_1$. MJ$_{0.5}$ may be appropriate in situations where we do not care about the triangle inequality, or where we have verified through this analysis that it does not fail transitivity too severely. In such scenarios, MJ$_{0.5}$ is considerably more robust with respect to outliers than all other options.


\subsection{Role of order $p$ in geometric averaging}
\label{choosing p section}
Given that there is a clear trade-off between various distance measures' transitivity and robust performance in the presence of outliers, one potential avenue to be explored would be optimising the order $p$ in the $L^p$ norm. That is, $p$ should be large enough to satisfy the triangle inequality, yet small enough to allow for geometric averaging. We find that when $p$ gets beyond 2 or 3, the geometric averaging property is lost and the measure becomes sensitive to outliers. Alternatively, one may allow an acceptable percentage of measurements within the time series collection to violate the triangle inequality. That is, we could insist that
\[
    \frac{|\{ (i,j,k):i,j,k \text{ distinct}, d(S_i,S_k)>d(S_i,S_j)+d(S_j,S_k)  \}|}{|\{ (i,j,k):i,j,k \text{ distinct}  \}|} < \epsilon
\]
where $\epsilon$ is some acceptable percentage of failed triples within the matrix. We set $\epsilon=0.05.$ Figure \ref{fig:PvsFails} demonstrates that once $p$ reaches $7$, less than 5\% of triples in $D$ fail the triangle inequality. However, the dendrogram in Figure \ref{fig:MJpDendrogram} demonstrates that the MJ$_7$ distance no longer provides the geometric averaging required to produce robust measurements. In fact, inference gained with the MJ$_7$ distance would be entirely erroneous. When considered in conjunction with the results from other simulated experiments, our findings suggest that $p$ needs to be low for powerful geometric averaging and robustness to outliers.

\begin{figure}[h]
    \centering
    \begin{subfigure}[b]{0.48\textwidth}
        \includegraphics[width=\textwidth]{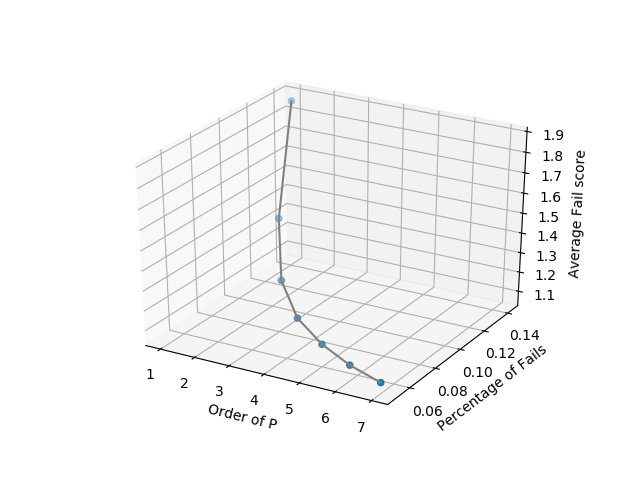}
        \caption{Order of $p$ vs percentage of failed triples and average failed triple ratio}
        \label{fig:PvsFails}
    \end{subfigure}
    \begin{subfigure}[b]{0.48\textwidth}
        \includegraphics[width=\textwidth]{"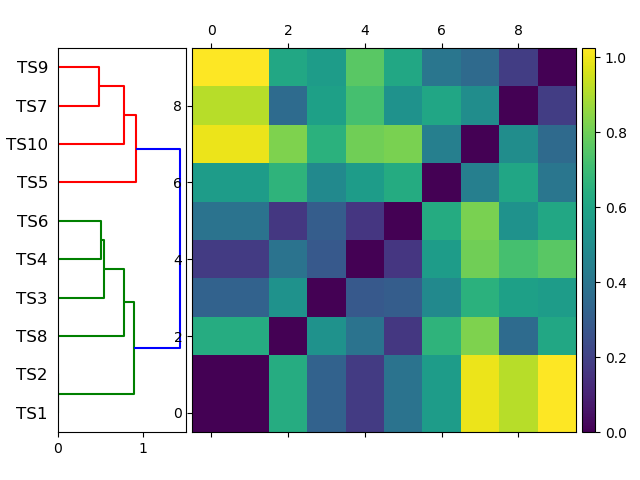}
        \caption{MJ$_7$}
        \label{fig:MJ$_7$ Dendrogram}
    \end{subfigure}
    \caption{MJ$_7$ dendrogram with extreme outliers}
    \label{fig:MJpDendrogram}
\end{figure}

\section{Real applications}
\label{real results}
\subsection{Cryptocurrency}
The cryptocurrency market is in its relative infancy in comparison to most exchange traded financial products. Cryptocurrencies are infamous for their volatile price behaviour, and high degree of correlation within the market, due to crowd behaviour often referred to as ``herding''. 

We analyse the similarity in the change points of the thirty largest cryptocurrencies by market capitalisation. We apply the Mann-Whitney test to the log returns of each cryptocurrency and use the MJ$_1$ distance as our measure of choice to allow powerful geometric averaging and robustness to outliers. Note that the log returns provide approximately distributed data with mean zero. Given the extreme volatility and associated kurtosis displayed by cryptocurrencies, this transformation is essential.

\begin{figure}[h]
    \centering
        \begin{subfigure}[b]{0.48\textwidth}
        \includegraphics[width=\textwidth]{"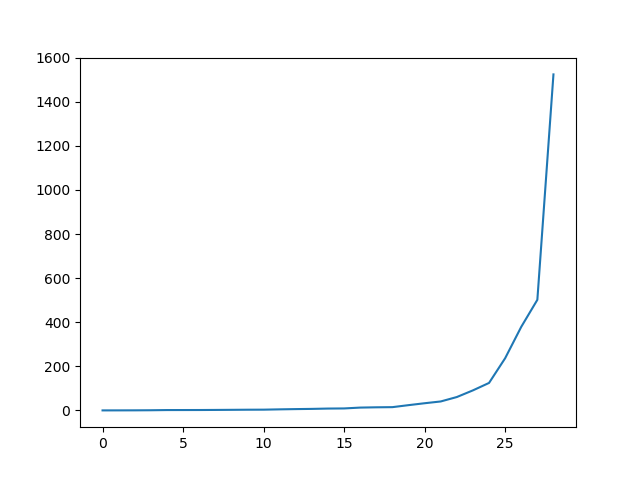}
        \caption{Cryptocurrency eigenvalues}
        \label{fig:Crypto Eigenvalues}
    \end{subfigure}    
    \begin{subfigure}[b]{0.48\textwidth}
        \includegraphics[width=\textwidth]{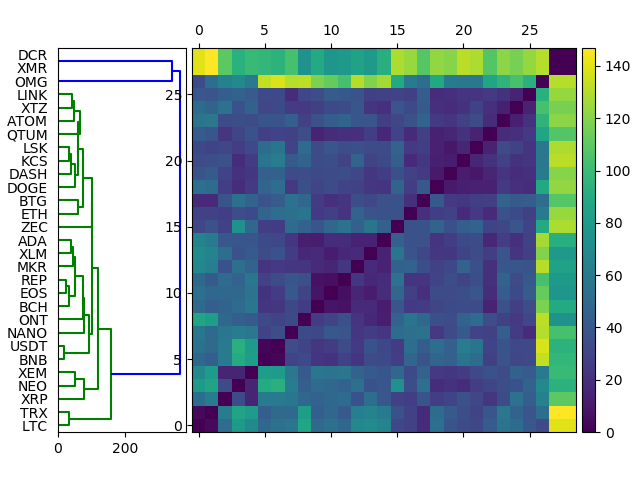}
        \caption{Cryptocurrency dendrogram}
        \label{fig:CryptoDendrogram}
    \end{subfigure}
    \begin{subfigure}[b]{0.48\textwidth}
        \includegraphics[width=\textwidth]{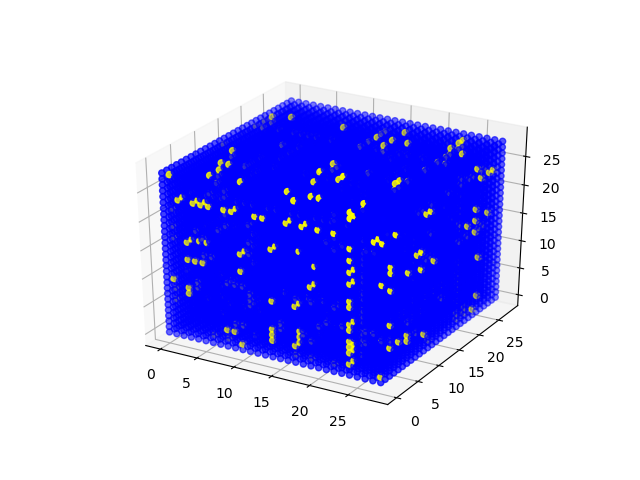}
        \caption{Cryptocurrency transitivity}
        \label{fig:CryptoTransitivity}
    \end{subfigure}
    \caption{Cryptocurrency change point analysis}\label{fig:CryptoAnalysis}
\end{figure}

Our method provides the following insights:
\begin{enumerate}
    \item The cryptocurrency market is characterised by a high degree of similarity between time series. This is seen in the plot of the eigenvalues (Figure \ref{fig:Crypto Eigenvalues}), which confirms that approximately 24 cryptocurrencies behave very similarly. The dendrogram in Figure \ref{fig:CryptoDendrogram} confirms this, with a large proportion of the market exhibiting a small distance between change points.
    \item Our method highlights the presence of anomalous cryptocurrencies, and indicates which cryptocurrencies are behaving dissimilarly to the rest of the market. In particular, Figure \ref{fig:CryptoDendrogram} shows that XMR and DCR behave very differently to the rest of the cryptocurrency market. They do however have a high degree of similarity between themselves.
    \item Spectral clustering on the distance matrix confirms the presence of anomalous cryptocurrencies warranting their own cluster.
    \item Finally, the dendrogram in Figure \ref{fig:CryptoDendrogram} highlights the presence subclusters of cryptocurrencies. These are subsets that behave similarly to one another, and less similarly to the rest of the market. This is often the case in financial markets, where companies in similar sectors or geographic regions may become correlated due to an exogenous variable or variables.
\end{enumerate}

To summarise, we have uncovered a high degree of correlation within the cryptocurrency market, and unearthed anomalies, including clusters thereof. Our analysis gives insight into anomalous and similar behaviours with respect to structural breaks, a feature of time series of key interest to analysts and market participants.

\subsection{19th century UK measles counts}
Given the recent public interest in COVID-19, we analyse historic counts of measles - a similarly infectious virus. In this context, understanding change points could have immense public health significance: signifying perhaps a growth in the infectivity, or a temporary scare where fears were allayed and the disease's spread halted. Applying our method to measles counts in 19th century UK cities, one can determine the similarity between structural changes in time series and perform anomaly detection simultaneously. First, we apply the Kolmogorov-Smirnov change point detection algorithm to all 7 time series, and yield 7 sets of change points (see Figure \ref{fig:MeaslesAnalysis}).

\begin{figure}[h]
    \centering
    \begin{subfigure}[b]{0.24\textwidth}
        \includegraphics[width=\textwidth]{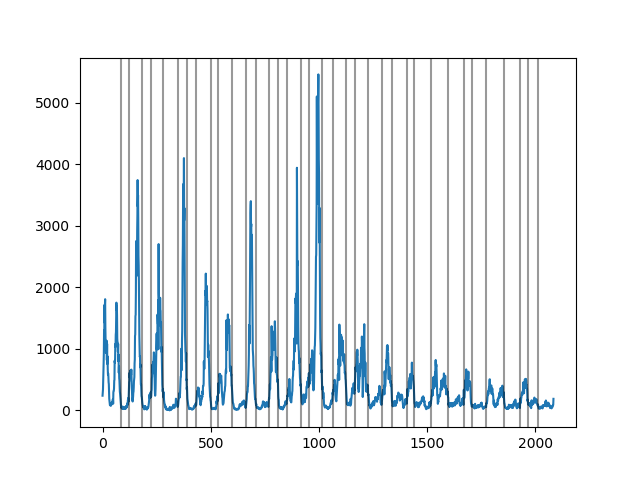}
        \caption{London}
        \label{fig:MeaslesLondon}
    \end{subfigure}
    \begin{subfigure}[b]{0.24\textwidth}
        \includegraphics[width=\textwidth]{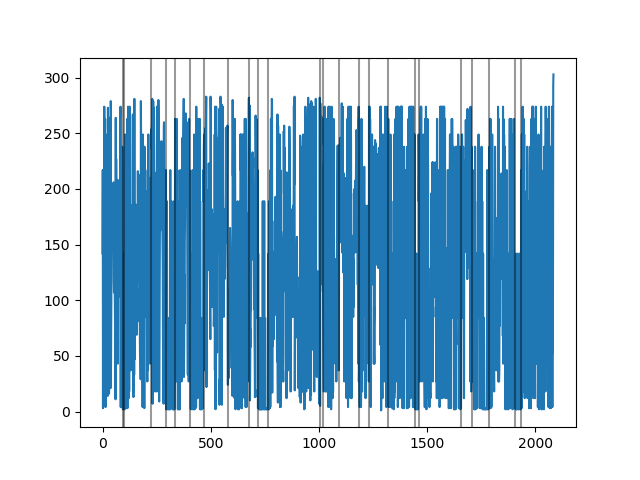}
        \caption{Bristol}
        \label{fig:MeaslesBristol}
    \end{subfigure}
    \begin{subfigure}[b]{0.24\textwidth}
        \includegraphics[width=\textwidth]{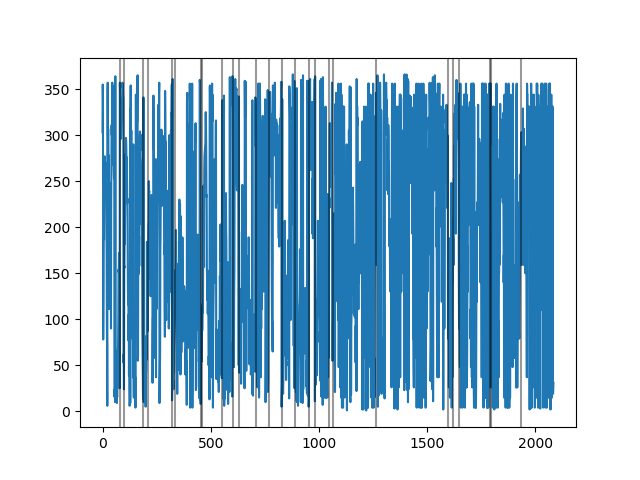}
        \caption{Liverpool}
        \label{fig:MeaslesLiverpool}
    \end{subfigure}
    \begin{subfigure}[b]{0.24\textwidth}
        \includegraphics[width=\textwidth]{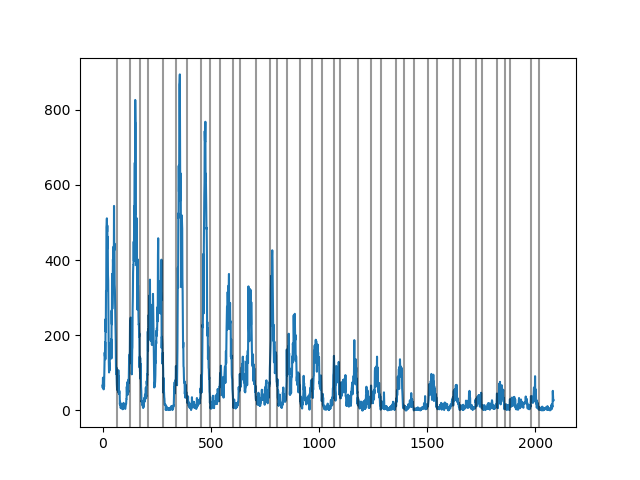}
        \caption{Manchester}
        \label{fig:MeaslesManchester}
    \end{subfigure}
    \begin{subfigure}[b]{0.24\textwidth}
        \includegraphics[width=\textwidth]{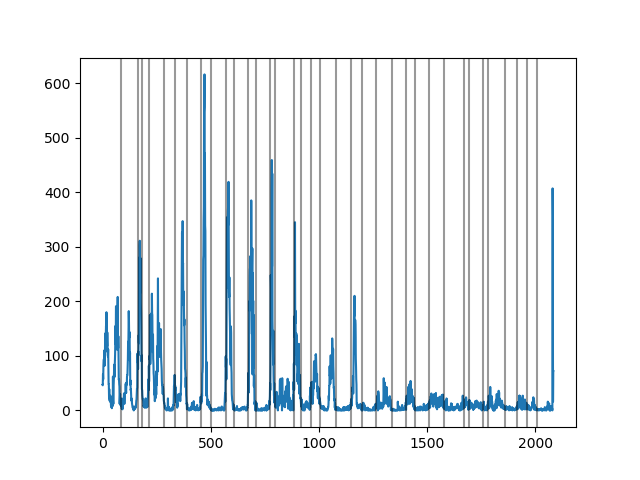}
        \caption{Newcastle}
        \label{fig:MeaslesNewcastle}
    \end{subfigure}
    \begin{subfigure}[b]{0.24\textwidth}
        \includegraphics[width=\textwidth]{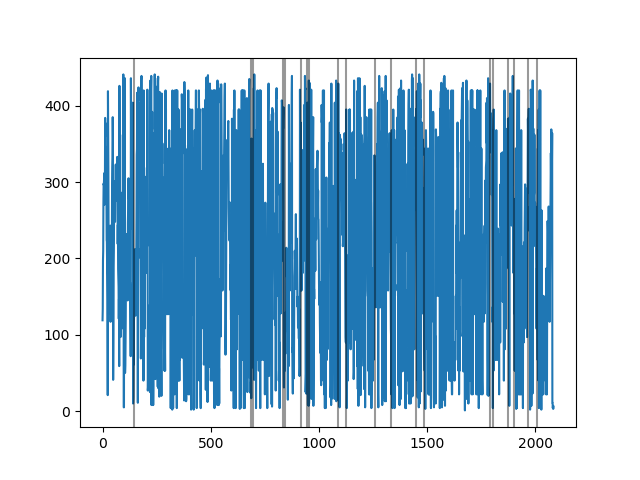}
        \caption{Birmingham}
        \label{fig:MeaslesBirmingham}
    \end{subfigure}
    \begin{subfigure}[b]{0.24\textwidth}
        \includegraphics[width=\textwidth]{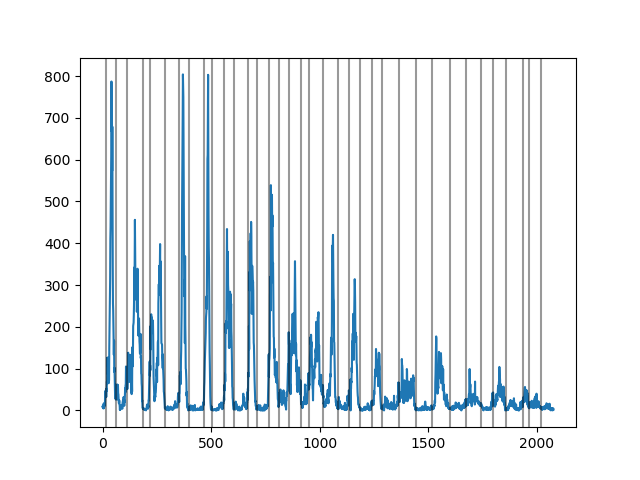}
        \caption{Sheffield}
        \label{fig:MeaslesSheffield}
    \end{subfigure}
    \caption{UK measles counts}
    \label{fig:MeaslesAnalysis}
\end{figure}

\begin{figure}[h]
    \centering
    \begin{subfigure}[b]{0.33\textwidth}
        \includegraphics[width=\textwidth]{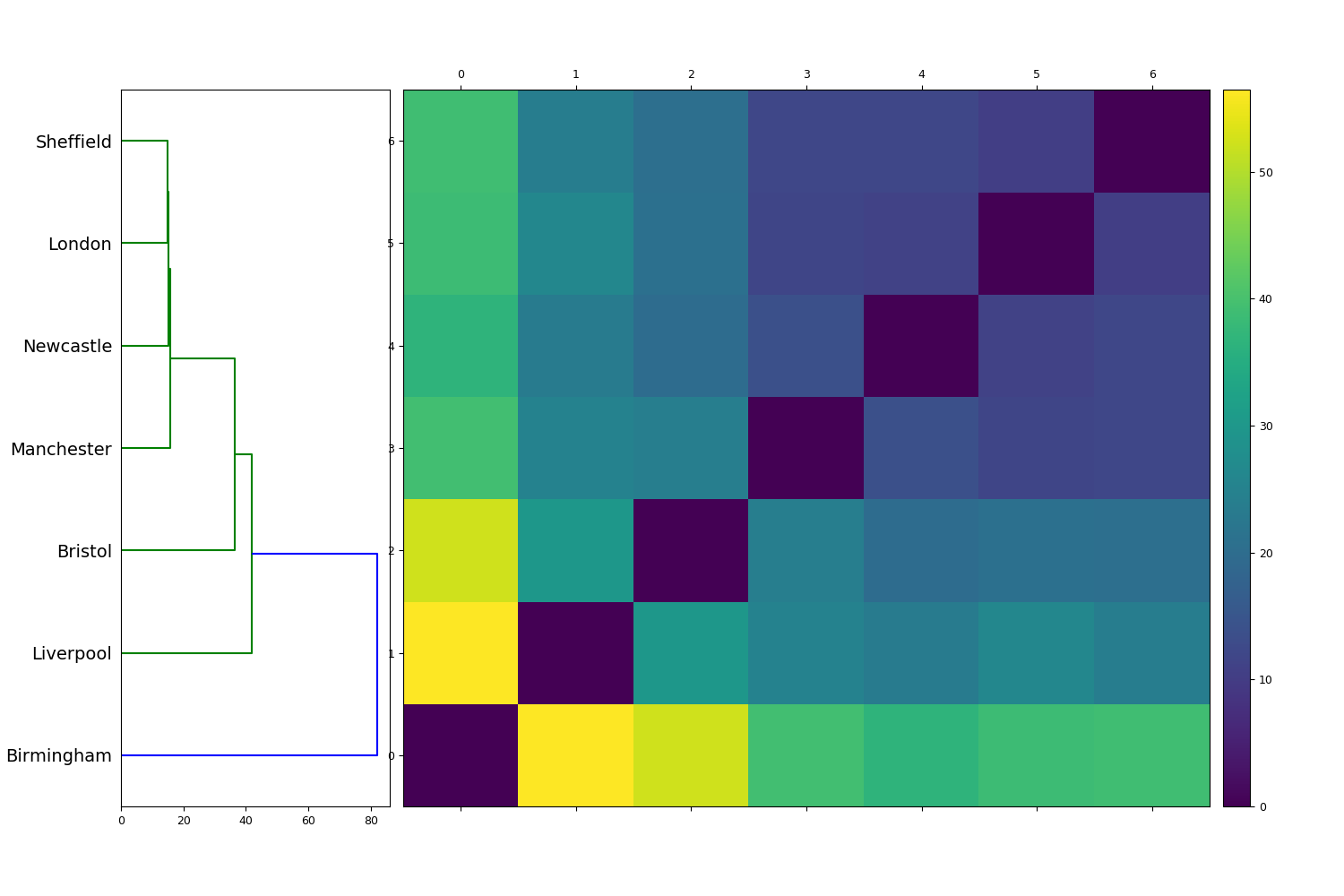}
        \caption{Measles dendrogram}
        \label{fig:MeaslesDendogram}
    \end{subfigure}
    \begin{subfigure}[b]{0.33\textwidth}
        \includegraphics[width=\textwidth]{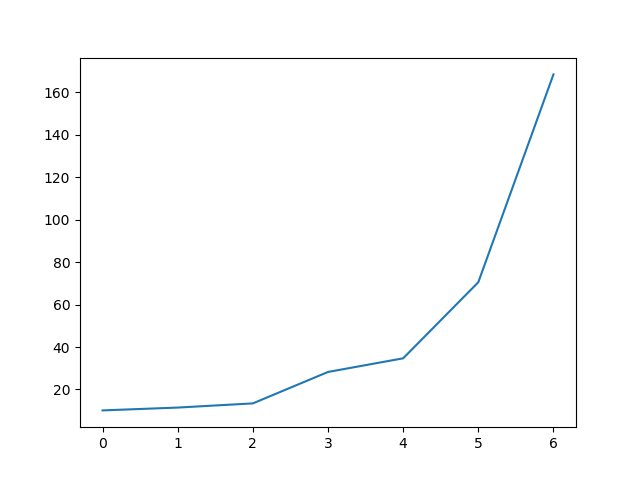}
        \caption{Measles eigenvalues}
        \label{fig:MeaslesEigenvalues}
    \end{subfigure}
    \begin{subfigure}[b]{0.33\textwidth}
        \includegraphics[width=\textwidth]{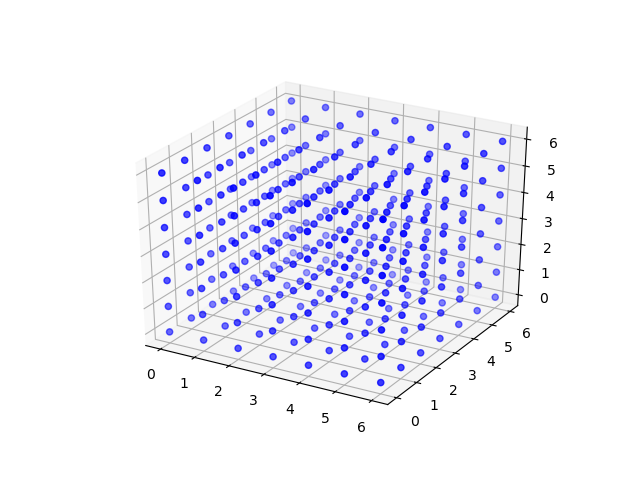}
        \caption{Measles transitivity}
        \label{fig:MeaslesTransitivity}
    \end{subfigure}
    \caption{UK measles change point analysis}\label{fig:MeaslesAnalysis2}
\end{figure}

Analysis on UK measles data yields the following insights:
\begin{enumerate}
    \item Figure \ref{fig:MeaslesEigenvalues} suggests that there are 5 time series that are similar and two relative outliers.
    \item Figure \ref{fig:MeaslesDendogram} highlights that there are 4 time series that are similar with respect to their structural breaks, 2 moderately similar and 1 anomalous city. The time series deemed to be most similar are Sheffield, London, Manchester and Newcastle. This insight is consistent with visual inspection of the time series in Figure \ref{fig:MeaslesAnalysis2}, where these time series exhibit multiple periodicities that are detected via the change point algorithm. Although our algorithm does not explicitly measure the periodic nature of each time series, our method manages to capture the most pertinent feature in this collection of time series.
    \item Birmingham is detected as the anomalous time series, and this is supported by the results from spectral clustering applied to the distance matrix.  
\end{enumerate}

\section{Conclusion}
\label{conclusion}
Prior work indicates that metrics adapting $L^p$ distance measures have mostly been used in computer vision applications. Although prior work has disproved the triangle inequality for these measurements, we are unaware of any work examining theoretically or empirically how badly the triangle inequality fails, and under what conditions. Our experiments on simulated and real data indicate that when there are more outliers within a collection of time series, and when there is a smaller average distance between successive change points - there is generally a higher percentage of time series that fail the triangle inequality. This is reflected in Proposition \ref{triangle inequality again} and its proof, showing that the bunching of points can cause the triangle inequality to fail up to any arbitrary constant. While the Hausdorff metric satisfies the triangle inequality, it is highly sensitive to outliers, as is confirmed in prior work among varying applications. This presents an interesting trade-off. Semi-metrics, such as the ones investigated in this paper, may lose universal transitivity, however different averaging methods, other than the $\max$ operation, produce more appropriate distance measurements between time series.

After applying a distance measure between sets of change points, we demonstrate the insights one can generate. 
First, we show a pithy means of eigenvalue analysis where we plot the absolute value of the ordered eigenvalues. This analysis illustrates the number of time series that are similar within any candidate scenario. We demonstrate the dendrogram's ability to simultaneously uncover similarity between time series and perform anomaly detection. Finally, we use spectral clustering on a transformation of the distance matrix (the affinity matrix) and illustrate how spectral clustering may determine groups of similar time series and detect clusters of anomalous time series.

Our results on simulated data show that semi-metrics perform as well or better than traditional metrics such as the Hausdorff and Wasserstein distances in multiple settings - no outliers, moderate outliers and extreme outliers. Moreover, our proposed family of semi-metrics perform better than the newer modified Hausdorff semi-metrics. 
 Our two applications, cryptocurrency and measles, demonstrate that our method may detect similar time series while also identifying anomalies.

In summary, 
we have introduced a new computationally useful continuum of semi-metrics which we term \emph{MJ$_p$ distances} and applied them to measure distances between sets of change points. As $p \to \infty$, MJ$_p$ distances approach the Hausdorff distance. Thus, we have understood the Hausdorff distance in a new context within a family of MJ$_p$ distances.
Eigenvalue analysis, hierarchical clustering and spectral clustering can be used to analyse matrices of distances between sets of change points. Our analysis indicates that there is a trade-off between transitivity and sufficient geometric mean averaging (robust results in the presence of outliers). Traditional metrics such as the Hausdorff and Wasserstein distances are severely impacted by outliers, while semi-metrics with strong averaging properties such as the MJ$_{0.5}$ fail the triangle inequality frequently, and in some cases, severely. Our experiments indicate that both the MJ$_1$ and MJ$_2$ distances are better compromises than the existing modified Hausdorff framework. In future work, we will aim to investigate and generalise other distance metrics outside the Hausdorff framework.

\appendix
\section{\bf Change point detection algorithm}
\label{Appendix_CPD}
Many statistical modelling problems require the identification of change points in sequential data. The general setup for this problem is the following: a sequence of observations $x_1,x_2,...,x_n$ are drawn from random variables $X_1, X_2,...,X_n$ and undergo an unknown number of changes in distribution at points $\tau_1,...,\tau_m$. One assumes observations are independent and identically distributed between change points, that is, between each change points a random sampling of the distribution is occurring. Following Ross \cite{RossCPM}, we notate this as follows: \\

\begin{equation*}
    X_{i} \sim 
    \begin{cases}
      F_{0} \text{ if } i \leq \tau_1 \\
      F_{1} \text{ if } \tau_1 < i \leq  \tau_2  \\
      F_{2} \text{ if } \tau_2 < i  \leq \tau_3,  \\
      \hdots
    \end{cases}
\end{equation*}

While this requirement of independence may appear restrictive, dependence can generally be accounted for by modelling the underlying dynamics or drift process, then applying a change point algorithm to the model residuals or one-step-ahead prediction errors, as described by Gustafsson \cite{gustafsson2001}. The change point models applied in this paper follow Ross \cite{RossCPM}. 

\subsection{Batch change detection (Phase I)}
This phase of change point detection is retrospective. We are given a fixed length sequence of observations $x_1,\ldots,x_n$ from random variables $X_1,\ldots,X_n$. For simplicity, assume at most one change point exists. If a change point exists at time $k$, observations have a distribution of $F_0$ prior to the change point, and a distribution of $F_1$ proceeding the change point, where $F_0 \neq F_1$. That is, one must test between the following two hypotheses for each $k$: 

\begin{equation*}
    H_0: X_{i} \sim F_0, i = 1,...,n
\end{equation*}

\begin{equation*}
    H_1: X_{i} \sim 
    \begin{cases}
      F_{0} & i = 1,2,...,k \\
      F_{1}, & i = k + 1, k+2, ..., n  \\
    \end{cases}
\end{equation*}
and end with the choice of the most suitable $k$.

One proceeds with a two-sample hypothesis test, where the choice of test is dependent on the assumptions about the underlying distributions. To avoid distributional assumptions, non-parametric tests can be used. Then one appropriately chooses a two-sample test statistic $D_{k,n}$ and a threshold $h_{k,n}$. If $D_{k,n}>h_{k,n}$ then the null hypothesis is rejected and we provisionally assume that a change point has occurred after $x_k$. These test statistics $D_{k,n}$ are normalised to have mean $0$ and variance $1$ and evaluated at all values $1 < k < n$, and the largest value is assumed to be coincident with the existence of our sole change point. That is, the test statistic is then

\begin{equation*}
    D_{n} = \max_{k=2,...,n-1} D_{k,n} = \max_{k=2,...,n-1} \Bigg| \frac{\Tilde{D}_{k,n} - \mu_{\Tilde{D}_{k,n}}}{\sigma_{\Tilde{D}_{k,n}}}  \Bigg|
\end{equation*}
where $\Tilde{D}_{k,n}$ were our unnormalised statistics.
\\

The null hypothesis of no change is then rejected if $D_{n} > h_n$ for some appropriately chosen
threshold $h_n$. In this circumstance, we conclude that a (unique) change point has occurred and its location is the value of $k$ which maximises $D_{k,n}$. That is,

\begin{equation*}
    \hat{\tau} = \argmax_k D_{k,n}.
\end{equation*}

This threshold $h_n$ is chosen to bound the Type 1 error rate as is standard in statistical hypothesis testing. First, one specifies an acceptable level $\alpha$ for the proportion of false positives, that is, the probability of falsely declaring that a change has occurred if in fact no
change has occurred. Then, $h_n$ should be chosen as the upper $\alpha$ quantile of the distribution
of $D_n$ under the null hypothesis. For the details of computation of this distribution, see \cite{RossCPM}. Computation can often be made easier by taking appropriate choice and storage of the $D_{k,n}$.

\subsection{Sequential change detection (Phase II)}
In this case, the sequence $(x_t)_{t \geq 1}$ does not have a fixed length. New observations are received over time, and multiple change points may be present. Assuming no change point exists so far, this approach treats $x_1, . . . , x_t$ as a fixed length sequence and computes $D_t$ as in phase I. A change is then flagged if $D_t > h_t$ for some appropriately chosen threshold. If no change is detected, the next observation $x_{t+1}$ is brought into the sequence. If a change is detected, the process restarts from the following observation in the sequence. The procedure therefore consists of a repeated sequence of hypothesis tests.

In this sequential setting, $h_t$ is chosen so that the probability of incurring a Type 1 error is constant over time, so that under the null hypothesis of no change, the following holds:

\begin{equation*}
    P(D_1 > h_1) = \alpha,
\end{equation*}

\begin{equation*}
    P(D_t > h_t | D_{t-1} \leq h_{t-1}, ... , D_{1} \leq h_{1}) = \alpha, \ t > 1.
\end{equation*}
In this case, assuming that no change occurs, the average number of observations received before a false positive detection occurs is equal to $\frac{1}{\alpha}$. This quantity is referred to as the average run length, or ARL0. Once again, there are computational difficulties with this conditional distribution and the appropriate values of $h_t$, as detailed in Ross \cite{RossCPM}.

\section{Proof of propositions}
\label{proof of props section}

\subsection{\bf Proposition \ref{deformation prop}} \label{A2}
\begin{proof}
Since $d^1_{MJ}$ is the average of two quantities while $d^{\text{MH}}_1$ is the maximum, we have $d^1_{MJ} \leq d^{\text{MH}}_1 \leq 2 d^1_{MJ}$ so these are equivalent as semi-metrics. 

Now let $S,T$ be two sets with the following properties: assume 
\begin{equation*}
    \frac{1}{|S|}\sum_{{s} \in {S}} d(s,T) < \frac{1}{|T|}\sum_{{t} \in {T}} d(t,S) 
\end{equation*}
Moreover, assume there exists an element $s_0 \in S$ that is the closest element of $S$ to every element of $T$. This property can hold easily if $S,T$ are each contained within convex sets $A,B$ respectively and $s_0$ is the closest element of $A$ to $B$.
Now slightly deform the elements of $S - \{s_0\}$, moving them a sufficiently small distance $\epsilon$ to produce another set $S'$ also containing $s_0$. Then

\begin{align*}
d^{\text{MH}}_1(S,T)=\max\Bigg(\frac{1}{|T|}\sum_{{t} \in {T}} d(t,S), \frac{1}{|S|}\sum_{{s} \in {S}} d(s,T)\Bigg)=\frac{1}{|T|}\sum_{{t} \in {T}} d(t,s_0)=d^{\text{MH}}_1(S',T) 
\end{align*}
However, if the elements of $S - \{s_0\}$ move, each distance $d(s,T)$ will vary continuously with their movement, so $d^1_{MJ}(S',T)$ will vary continuously with $S'$. This makes it a more precise measure of the distance between $S$ and $T$.
\end{proof}

\subsection{\bf Proposition \ref{duplication prop}} \label{A3}
\begin{proof}
Replace $S$ with a duplicated multiset $S_2$. \par \vspace{2mm} \noindent
Then $|S_2|=2|S|$, 
$\sum_{{t} \in {T}} d(t,S)=\sum_{{t} \in {T}} d(t,S_2)$, and $\sum_{{s} \in {S_2}} d(t,S)=2\sum_{s \in S} d(s,T)$.\par \vspace{2mm} \noindent
Thus, $d^1_{MJ}(S_2,T)=\bigg(\frac{1}{2|S|}(2\sum_{s \in S} d(s,T)) + \frac{1}{|T|}\sum_{t \in T} d(t,S) \bigg) = d^1_{MJ}(S,T)$, while
\begin{align*}
d^{\text{MH}}_2(S_2,T)=2\sum_{s \in S} d(s,T) + \sum_{t \in T} d(t,S) \neq d^{\text{MH}}_2(S_2,T)\\
d^{\text{MH}}_3(S_2,T)= \frac{1}{2|S|+|T|} \bigg(2\sum_{s \in S} d(s,T) + \sum_{t \in T} d(t,S) \bigg) \neq d^{\text{MH}}_3(S,T).
\end{align*}
\end{proof}

\subsection{\bf Proposition \ref{triangle inequality again}}  \label{A1}
\begin{proof}
Let $p>0$, and $i \in \{1,2,3\}$. All four distances MJ$_p$, MH$_1$, MH$_2$, MH$_3$ are symmetric in $S,T$ by definition, so axiom 2. holds.

All four distances are sums of powers of non-negative real numbers, so are clearly non-negative. Now assume either $d^p_{MJ}(S,T)$ or $d^{\text{MH}}_i(S,T)=0$ is zero for any $i=1,2,3$. Since all quantities in the summation of $d^{\text{MH}}_i$ or $d^p_{MJ}$ are non-negative, and all minimal distances $d(s,T),d(t,S)$ are included, this forces $d(s,T)=0$ for all $s \in S$, and $d(t,S)=0$ for all $t \in T$. That is, every element in $S$ lies in $T$ and vice versa. This proves $S=T$ and establishes axiom 1 for all four distances. So all four are semi-metrics.

Now turn to the triangle inequality. For MH$_2$, the triangle inequality fails easily when $S,R$ are much larger than $T$. For example, let $S=\{a_1,\ldots,a_n\}$ be a set of $n$ points very close to each other, let $R=\{b_1,\ldots,b_n\}$ be another such set, and $T=\{a,b \}$. Assume $a,a_1,\ldots,a_n$ and $b,b_1,\ldots,b_n$ respectively are all within $\epsilon$ of each other. Let $d(a,b)=d$ and let $\epsilon=o(d)$. Then 
\begin{eqnarray*}
d^{\text{MH}}_2(S,R) & = & 2nd(1+o(1)), \\
d^{\text{MH}}_2(S,T) & \leq & (n+1)\epsilon + (d+\epsilon) \hspace{2mm} \mbox{and} \hspace{2mm}  d^{\text{MH}}_2(T,R) \leq (n+1)\epsilon + (d+\epsilon).
\end{eqnarray*}
So $d^{\text{MH}}_2(S,T)+d^{\text{MH}}_2(T,R) \leq 2d(1+o(1))$ while $d^{\text{MH}}_2(S,R)=2nd(1+o(1))$. 
If there were a universal $k$ such that a modified triangle inequality in (\ref{TriIn}) held, then setting $n>k$ would produce a contradiction. \\

MJ$_p$, MH$_1$ and MH$_3$ all contain averaging terms, so the triangle inequality can be violated by ``bunching'' of elements. For example, let $S=\{a,b\},R=\{b \}$, while $T=\{a,b,b_1,\ldots,b_{n-2}\}$. Assume as before $d(a,b)=d$ while $b,b_1,\ldots,b_n$ are all within $\epsilon$ of each other. Note we choose $n$ this time so that $T$ has $n$ elements, with $n-1$ bunched together. Then 

\begin{eqnarray*}
d^{\text{MH}}_1(S,T) & \leq & \frac{n-2}{n} \epsilon \leq \epsilon; d^{\text{MH}}_3(S,T)\leq \frac{(n-2)\epsilon}{n+2} \leq \epsilon\\
d^p_{MJ}(S,T) & \leq & \Big( \frac{1}{2n} (n-2)\epsilon^p \Big)^{\frac{1}{p}} \leq \epsilon \\
d^{\text{MH}}_1(T,R) & \leq & \frac{1}{n}d + \frac{n-2}{n}\epsilon \leq \frac{1}{n} d + \epsilon; d^{\text{MH}}_3(T,R) \leq \frac{1}{n+1}(d+(n-2)\epsilon) \leq \frac{1}{n} d + \epsilon, \\
d^p_{MJ}(T,R)& \leq & \left[ \frac{1}{2n}\Big( d^p+(n-2)\epsilon^p \Big) \right]^{\frac{1}{p}} \leq \Big(\frac{d^p}{2n} + \epsilon^p \Big)^{\frac{1}{p}}\\
d^{\text{MH}}_1(S,R) & = & \frac{d}{2}; d^{\text{MH}}_3(S,R)=\frac{d}{3}; d^p_{MJ}(S,R)=\Big( \frac{1}{4}d^p \Big)^{\frac{1}{p}} = 4^{\frac{-1}{p}}d
\end{eqnarray*}
So $d^{\text{MH}}_i(S,T)+d^{\text{MH}}_i(T,R)\leq \frac{1}{n} d+ 2\epsilon$; $ d^{\text{MH}}_i(S,R)\geq \frac{d}{3}$ for $i=1,3$. Choose $\epsilon < \frac{d}{2n}.$ Then $d^{\text{MH}}_i(S,T)+d^{\text{MH}}_i(T,R)\leq \frac{2d}{n} \leq \frac{6}{n}d^{\text{MH}}_i(S,R)$.\\

\noindent If there were a universal $k$ such that a modified triangle inequality in (\ref{TriIn}) held for $MH_i$, then setting $n>6k, \epsilon<\frac{d}{2n}$ would produce a contradiction.\\


\noindent For MJ$_p$, choose $\epsilon<d(2n)^{\frac{-1}{p}}<dn^{\frac{-1}{p}}$. Thus $d^p_{MJ}(S,T)+d^p_{MJ}(T,R) \leq \frac{2d}{n^{\frac{1}{p}}}$.


\noindent Carefully noting what is above, $d^p_{MJ}(S,T)+d^p_{MJ}(T,R)= O(d n^{\frac{-1}{p}})$ while $d^p_{MJ}(S,R) =\Theta(d)$. Choosing $n$ sufficiently large, with $\epsilon<d(2n)^{\frac{-1}{p}}$, we deduce there is no universal modified triangle inequality for MJ$_p$.
\end{proof}

\subsection{\bf Proposition \ref{intersection prop}} \label{A4}
\begin{proof}
Let $d_H(S,T)=M.$ From definition \ref{New MJ distance}, 
\begin{equation*}
    d^p_{MJ}({S},{T}) = \Bigg(\frac{\sum_{t\in T} d(t,S)^p}{2|T|} + \frac{\sum_{{s} \in {S}} d(s,T)^p}{2|S|} \Bigg)^{\frac{1}{p}}.
\end{equation*}
Any $d(s,T)$ or $d(t,S)$ term with $s \in S\cap T $ or $t \in S\cap T$ respectively vanishes. Any other $d(s,T),d(t,S)$ term is at most $M.$ So
\begin{equation*}
    d^p_{MJ}({S},{T}) \leq \left[ \frac{1}{2|S|}(|S|-r) + \frac{1}{2|T|}(|T|-r)\right] M
\end{equation*}
which gives the inequality after simplifying.

Now consider the Wasserstein distance. Let $A=\{0,1,...,n-1\}$ and $B=\{1,2,...,n\}$. Then $|A|=|B|=n$ while $|A\cap B|=n-1.$ Clearly $d_H(A,B)=1;$ $d_W(A,B)$ may be computed \cite{DelBarrio} as 
\begin{align*}
    \int_{\mathbb{R}} |F - G| dx
\end{align*}
where $F,G$ are the cumulative density functions associated to $\mu_A$ and $\mu_B$ as in \ref{Wasserstein delta}. Both $F,G$ are piecewise constant increasing step functions. By integrating $\mu_A,\mu_B$,

$$F=\sum_{j=1}^{n-1} \frac{j}{n} \mathbbm{1}_{[j-1,j)} + \mathbbm{1}_{[n-1,\infty)}, G= \sum_{j=1}^{n-1} \frac{j}{n}\mathbbm{1}_{[j,j+1)} + \mathbbm{1}_{[n,\infty)} $$

Hence $F-G=\frac{1}{n} \mathbbm{1}_{[0,n)}$ and $ d_W(A,B)=\int_{\mathbb{R}} |F - G| dx =1.$


By contrast, $d^p_{MJ}(A,B)=(\frac{1}{n})^\frac{1}{p}$, which is the bound given by the inequality.
\end{proof}





\bibliographystyle{elsarticle-num-names}
\bibliography{References.bib}
\end{nolinenumbers}






\end{document}